\documentclass[letterpaper]{article}
\usepackage[preprint]{aaai2027}

\usepackage[hyphens]{url}
\usepackage{graphicx}
\usepackage{natbib}
\usepackage{caption}
\usepackage{amsmath, amssymb, amsfonts, amsthm}
\usepackage{mathtools}
\usepackage{booktabs}
\usepackage{multirow}
\usepackage{microtype}

\newif\ifincludesupplement
\includesupplementtrue
\newif\ifsupplementbuild
\supplementbuildfalse

\newtheorem{assumption}{Assumption}
\newtheorem{proposition}{Proposition}
\newtheorem{lemma}{Lemma}
\newtheorem{corollary}{Corollary}

\DeclareMathOperator{\diag}{diag}

\DeclareMathOperator{\dist}{dist}
\newcommand{\R}{\mathbb{R}}
\newcommand{\E}{\mathbb{E}}
\newcommand{\D}{\mathcal{D}}
\newcommand{\Sbeta}{\mathcal{S}^A_{\beta_A}}
\newcommand{\beff}{b^{\mathrm{eff}}}
\newcommand{\Mdiag}{M(x)}
\newcommand{\bdry}{\partial G}
\newcommand{\Lhat}{\widehat{\mathcal{L}}}
\newcommand{\HH}{\mathcal{H}}

\ifsupplementbuild
\title{Supplementary Material: Should the Boundary Term Be Learned in Reflected Diffusion? Conormal Trace and Reflection Masking}
\else
\title{Should the Boundary Term Be Learned in Reflected Diffusion? Conormal Trace and Reflection Masking}
\fi

\author{
    Ziyue Wang\textsuperscript{\rm 1},
    Takafumi Kanamori\textsuperscript{\rm 1}
}
\affiliations{
    \textsuperscript{\rm 1}Institute of Science Tokyo
}

\begin{document}
\maketitle

\ifsupplementbuild
\else

\begin{abstract}
We study score learning for reflected diffusion on bounded domains. Reflection keeps trajectories feasible but does not ensure that the learned score satisfies the boundary behavior implied by the forward process. With implicit score matching, integration by parts leaves a boundary term, and we show that it depends on one scalar at each boundary point: the diffusion-weighted normal component of the score, or conormal trace. The no-flux condition fixes this value while leaving the remaining boundary components unrestricted; under anisotropic diffusion it generally differs from the ordinary normal score component. On hyperrectangles, our parametrization enforces the required trace without additional trainable parameters or a stochastic boundary estimator and, under regularity assumptions, can represent the true score, whereas fixing an incorrect value creates an error that more data cannot remove. We extend the construction to simplices and polygonal domains and identify reflection masking: hard reflection can keep samples feasible even when the learned trace is wrong, so post-reflection metrics may hide the error. Experiments show the clearest separation with less frequent reflection, anisotropic diffusion, and mass near intersections of constraints; under full reflection, final sample placement improves inconsistently, illustrating how hard repair can mask boundary-score errors and decouple score accuracy from downstream generation quality.
\end{abstract}

\section{Introduction}

Many generative tasks impose hard constraints: pixel values stay in a range, compositions remain on a simplex, and physical configurations stay inside a feasible set. Reflected diffusion enforces feasibility by reflecting a stochastic trajectory at the boundary \citep{lou2023reflecteddiffusionmodels, fishman2024diffusionmodelsconstraineddomains}. As in ordinary score-based diffusion, a neural network learns the score, the vector field used by the reverse-time dynamics to transform noise into data \citep{song2021scorebasedgenerativemodelingstochastic, ho2020denoisingdiffusionprobabilisticmodels}. Reflection keeps the numerical trajectory inside the domain, but does not ensure that this learned field obeys the boundary behavior of the forward process.

\begin{figure*}[t]
\centering
\includegraphics[width=\textwidth]{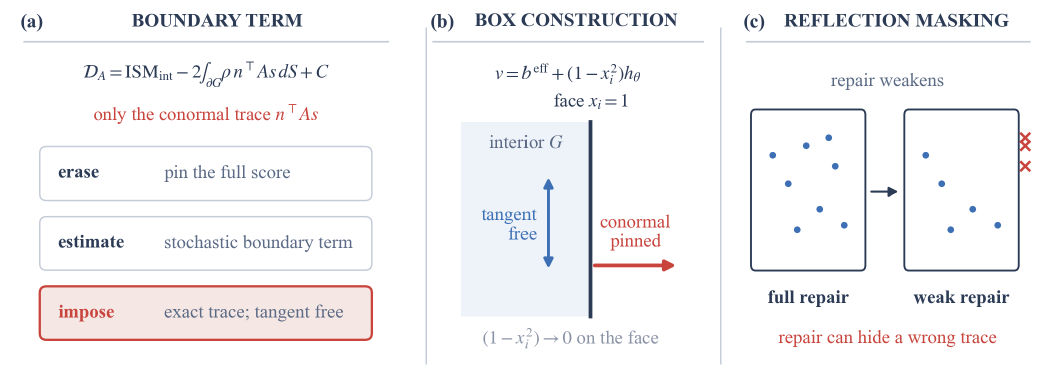}
\caption{The reflected-ISM boundary term sees only $n^\top As$, and hard reflection masks trace errors. (a) The boundary integral can be erased (zero-boundary), estimated (local-time), or imposed algebraically ($n^\top As=n^\top\beff$). (b) On a box, $v=\beff+(1-x_i^2)h$ pins the conormal component while leaving tangential components free. (c) Full repair folds infeasible proposals back into $G$; trace errors surface as repair weakens (M1, M4; Section~\ref{sec:experiments}).}
\label{fig:story}
\end{figure*}

This distinction matters for training. A standard objective, implicit score matching (ISM), removes the unknown data score by integration by parts \citep{hyvarinen2005estimation}. The boundary contribution is absent on an unbounded space, but remains as a surface term on a bounded domain. Ignoring or misspecifying it can change which score field the objective favors near the boundary.

The surface term depends on one scalar at each boundary point: the normal component of the diffusion-weighted score, which we call the conormal trace. The no-flux condition fixes this boundary value. In density estimation and generation, anisotropic diffusion can serve as a preconditioner for heterogeneous scales or strongly correlated targets, while spatially varying tensors can adapt to local geometry. The conormal trace reduces to the ordinary normal component under identity diffusion but generally differs in these broader settings.

Existing treatments make one of three choices: erase the term by pinning the score at the wall, estimate it with a stochastic local-time correction \citep{nordenhog2025scorebasedconstrainedgenerativemodeling}, or impose the value required by the forward process. We take the third route. On a hyperrectangle, a known boundary value plus a free neural field times a face-vanishing factor fixes only the required scalar and leaves the other boundary degrees of freedom trainable. Relative to the same backbone, this adds neither trainable parameters nor a stochastic boundary estimator, and makes the surface term independent of the network parameters during optimization. Under our assumptions the class still contains the true score. Because the boundary value is fixed by architecture, its residual is determined before training and cannot be corrected by optimization; this distinction drives both our theory and diagnostics. We extend the class face-wise to the simplex and evaluate a signed-distance version on polygonal domains.

Figure~\ref{fig:story} summarizes the boundary term, our parametrization, and reflection at evaluation. During sampling, an out-of-domain proposal is reflected back into the domain; this changes the point, not the learned score. Full reflection can therefore produce feasible samples despite a wrong boundary value. We call this \emph{reflection masking}. We separate score accuracy from sample behavior using the conormal-trace residual (CTR), no-reflection leakage, endpoint correction, maximum mean discrepancy (MMD), and sliced Wasserstein distance (SWD) under several repair schedules. Analytic boxes expose the architectural condition and score-error floor directly; simplex and polygon experiments provide scoped extensions. Full-repair placement remains condition-dependent, so the evidence supports score estimation rather than universal generation quality.

\paragraph{Contributions.} We identify the no-flux conormal trace seen by reflected ISM; give an exact hyperrectangle parametrization with representation and misspecification results, plus simplex and polygon extensions; and separate boundary-score accuracy from sample feasibility in theory and experiments.

\paragraph{Related work.} Truncated and generalized score-matching methods handle bounded domains by weighting the loss with functions that vanish at the boundary \citep{liu2022estimatingdensitymodelstruncation, yu2020generalizedscorematchinggeneral, yu2019generalizedscorematchingnonnegative, williams2024scorematchingtruncateddensity, scealy2020scorematchingcompositionaldistributions}. Our method instead uses the boundary value supplied by the no-flux condition of the reflected forward process. Work on reflected and constrained diffusion \citep{lou2023reflecteddiffusionmodels, fishman2024diffusionmodelsconstraineddomains, fishman2023metropolissamplingconstraineddiffusion}, reflected Schr\"odinger bridges and flow matching \citep{deng2024reflectedschrodingerbridgeconstrained, xie2024reflectedflowmatching}, mirror and projected diffusion \citep{liu2024mirrordiffusionmodelsconstrained, christopher2024constrainedsynthesisprojecteddiffusion}, and dual-constrained training \citep{khalafi2024constraineddiffusionmodelsdual} changes the stochastic path, geometry, sampler, or constraint-enforcement mechanism. We address a narrower question: how to train the score for a fixed reflected diffusion when ISM leaves a boundary term. Riemannian and simplex- or Dirichlet-based diffusion models use different geometries or forward processes \citep{debortoli2022riemannianscorebasedgenerativemodelling, huang2022riemanniandiffusionmodels, avdeyev2023dirichletdiffusionscoremodel, stark2024dirichletfm}. In particular, methods whose diffusion vanishes at the simplex boundary are outside the uniformly elliptic reflected setting considered here and are not direct baselines.

\section{Preliminaries and Problem Statement}
\label{sec:preliminaries}

This section introduces the forward diffusion, reflected probability flow, and estimation task.

\subsection{Score-based diffusion}
\label{sec:prelim-diffusion}

Let $X_0\sim p_{\mathrm{data}}$. The forward diffusion follows
\[
dX_t=b_t(X_t)\,dt+\sigma_t(X_t)\,dW_t,\qquad X_0\sim p_{\mathrm{data}},
\]
where $W_t$ is standard Brownian motion, $A_t(x)=\tfrac12\sigma_t(x)\sigma_t(x)^\top$ is the diffusion tensor, and $\rho_t$ is the density of $X_t$. We allow state-dependent $A_t$ and write $(\nabla\cdot A_t)_i=\sum_j\partial_jA_{t,ij}$. The score of the forward marginal is $s_t^\star(x)=\nabla_x\log\rho_t(x)$ and is approximated by a neural field $s_\theta(t,x)$ \citep{song2021scorebasedgenerativemodelingstochastic, ho2020denoisingdiffusionprobabilisticmodels}. The reverse-time drift depends on $s_t^\star$; in practice it uses $s_\theta$ and, for reflected diffusion, the boundary repair described below.

We sample $t\sim q$ on $[t_{\min},T]$, where $t_{\min}>0$ avoids possible nonsmoothness at $t=0$ and is the stopping time of the reverse solver. Denoising score matching regresses onto a conditional transition score. Gaussian transitions give a simple closed form \citep{song2020generativemodelingestimatinggradients}, while reflected diffusions can use constrained transition kernels or approximations \citep{lou2023reflecteddiffusionmodels}. ISM instead integrates by parts, so that only $s_\theta$ and its divergence appear \citep{hyvarinen2005estimation}:
\[
\E_{t,X_t}\bigl[\,\|s_\theta\|^2+2\,\nabla\!\cdot\!s_\theta\,\bigr]
\]
(up to a constant; the $A$-weighted version appears in Section~\ref{sec:theory}). We focus on ISM not because constrained denoising targets are impossible, but because it exposes the boundary integral directly and lets us study general diffusion tensors and domains without constructing a conditional transition score for each geometry.

\subsection{Reflected diffusion and the no-flux condition}
\label{sec:setup}

Let $G\subset\R^d$ be bounded, with closure $\bar G$, $C^2$ boundary $\bdry$, and outward unit normal $n$. Hyperrectangles are treated face-wise; edges and corners do not contribute to the $(d-1)$-dimensional surface integrals. A reflected diffusion adds a boundary local-time term that keeps $X_t$ in $\bar G$ \citep{lou2023reflecteddiffusionmodels, fishman2024diffusionmodelsconstraineddomains}. The local time increases only on $\bdry$. In the full-dimensional theory, $A_t$ is symmetric and uniformly elliptic, so its eigenvalues are bounded above and away from zero. The density satisfies $\partial_t\rho_t=-\nabla\cdot J_t$, and reflection imposes the no-flux condition:
\[
J_t=b_t\rho_t-A_t\nabla\rho_t-\rho_t\nabla\!\cdot\!A_t,\qquad J_t\cdot n=0\ \ \text{on }\bdry,
\]
where $J_t$ is the probability flux and $J_t\cdot n=0$ states that no net probability mass crosses the boundary; $\nabla\cdot A_t=0$ when $A_t$ is spatially constant. On the sampling side, the reverse SDE is run with \emph{repair}: a specified reflection map that returns an infeasible proposal to $G$ and changes the proposal, not the learned score. Under anisotropic $A_t$, continuous conormal reflection acts along $A_tn$. The coordinate-wise fold used in our simulations agrees with this direction when $A_tn\parallel n$ and is approximate otherwise; the discrepancy is included in the simulation (Sim) term of the excess-risk decomposition (Appendix~\ref{app:excess-risk}, Proposition~\ref{prop:excess}). This repair map is distinct from the Euclidean projection used only as an evaluation metric in Section~\ref{sec:diag}. We use classical pointwise boundary values; weak traces are deferred to Section~\ref{sec:disc-principle}.

\subsection{Problem statement}
\label{sec:problem}

We estimate the time-dependent score of a specified reflected forward process, rather than sampling from a given unnormalized energy. The inputs are samples $X_0\sim p_{\mathrm{data}}$ supported on $G$, the coefficients $b_t$ and $A_t$, and the geometry $(G,\bdry,n)$. The output is a field $s_\theta(t,x)$ that approximates $\nabla\log\rho_t$ in the interior and satisfies the boundary value implied by the same forward process. Training pairs $(t,X_t)$ are obtained by reflected noising, so $\rho_t$ need not be evaluated. Analytic densities used in controlled experiments are evaluation oracles, not training inputs. Sections~\ref{sec:diag}--\ref{sec:experiments} assess score estimation separately from reverse-time sample behavior. The central issue is that ISM on a bounded domain leaves a surface term; the next section derives the boundary value it requires.

\section{Flux-Compatible Score Classes}
\label{sec:theory}

We identify the boundary quantity in weighted ISM, derive its target from no flux, and construct score classes that satisfy it by design. We call such fields \emph{flux-compatible}.

\subsection{The weighted ISM boundary term}
\label{sec:ism-boundary}

For a candidate score field $s$, the $A$-weighted Fisher divergence is
\[
\D_A(s,s^\star)=\E_{t\sim q,\,X_t\sim\rho_t}\!\bigl[(s-s^\star)^\top A_t(s-s^\star)\bigr],
\]
with all fields evaluated at $(t,X_t)$. It reduces to the usual Fisher divergence when $A=I$. Expanding the square and integrating the cross term by parts gives
\[
\begin{aligned}
\D_A(s,s^\star)=\;&\underbrace{\E_{t,X_t}\!\bigl[s^\top A_t s+2\nabla\!\cdot\!(A_t s)\bigr]}_{\text{interior ISM objective }\mathcal L(s)}\\[2pt]
&-\;2\,\E_{t}\!\int_{\bdry}\rho_t\,n^\top\! A_t s\,dS\;+\;C,
\end{aligned}
\]
with $C$ independent of $s$. On an unbounded domain the surface integral is absent. On a bounded domain it need not vanish. For a vector field $w$, its normal trace is the scalar boundary function $n^\top w$. We therefore call $n^\top A_ts$ the \emph{conormal trace of the score}. The surface term depends on $s$ only through this value; when $A_t=I$, it is the ordinary normal trace.

\subsection{Standing assumptions}

\begin{assumption}
\label{ass:regularity}
For $t\in[t_{\min},T]$, $\rho_t\in C^1(\bar G)$, $\rho_t>0$ on $\bdry$, $A_t\in C^1(\bar G;\R^{d\times d})$ is symmetric uniformly elliptic, and all vector fields used below are regular enough for the divergence theorem. The no-flux condition $J_t\cdot n=0$ holds pointwise on $\bdry$.
\end{assumption}
These conditions justify pointwise integration by parts and division by $\rho_t$ on the boundary. On a hyperrectangle they apply on each face, while lower-dimensional intersections are ignored in surface measure. Singular data, vanishing boundary density, and weak reflected solutions require weak-trace variants.

\subsection{No-flux determines the conormal score trace}

Define the effective drift $\beff_t(x)=b_t(x)-\nabla\cdot A_t(x)$; when $A_t$ is spatially constant, $\beff_t=b_t$.

\begin{proposition}[No-flux conormal score trace]
\label{prop:conormal}
Under Assumption~\ref{ass:regularity},
\[
\boxed{\,n^\top A_t\nabla\log\rho_t=n^\top \beff_t\quad\text{on }\bdry.\,}
\]
For spatially constant $A_t$, this becomes $n^\top A_t\nabla\log\rho_t=n^\top b_t$; for $A_t=I$ it reduces to the normal-trace condition $\partial_n\log\rho_t=b_t\cdot n$.
\end{proposition}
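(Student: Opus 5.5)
The plan is a direct pointwise computation on $\bdry$ starting from the no-flux condition in Assumption~\ref{ass:regularity}. We write the flux as
\[
J_t=b_t\rho_t-A_t\nabla\rho_t-\rho_t\nabla\!\cdot\!A_t=\rho_t\bigl(\beff_t-A_t\nabla\log\rho_t\bigr).
\]
This factorization is legitimate at every boundary point. Assumption~\ref{ass:regularity} gives $\rho_t\in C^1(\bar G)$ and $\rho_t>0$ on $\bdry$, so $\nabla\log\rho_t=\nabla\rho_t/\rho_t$ is well defined and continuous up to the boundary. Positivity of $\rho_t$ is also what lets us pass from the flux condition to a condition on the score in the next step.

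Next, take the inner product with the outward normal $n$ and impose $J_t\cdot n=0$ pointwise on $\bdry$. This gives
\[
\rho_t\bigl(n^\top\beff_t-n^\top A_t\nabla\log\rho_t\bigr)=0 .
\]
Dividing by $\rho_t>0$ yields the boxed identity $n^\top A_t\nabla\log\rho_t=n^\top\beff_t$.

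The special cases then follow by substitution. If $A_t$ is spatially constant, then $\nabla\cdot A_t=0$, so $\beff_t=b_t$. If moreover $A_t=I$, then $n^\top\nabla\log\rho_t=\partial_n\log\rho_t$, and the identity becomes $\partial_n\log\rho_t=b_t\cdot n$. For hyperrectangles, the argument is applied on the relative interior of each face, where $n$ is constant. Edges and corners have zero surface measure and are excluded, consistent with the convention stated in Section~\ref{sec:setup}.

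There is no serious analytic obstacle. The only care needed is regularity: the no-flux condition must hold classically, so that both $\nabla\rho_t$ and $\nabla\cdot A_t$ have pointwise boundary values (which $A_t\in C^1(\bar G)$ and $\rho_t\in C^1(\bar G)$ ensure), and the boundary density must be strictly positive. Any weakening of these hypotheses, such as vanishing boundary density or weak traces, would require the weak-trace variant deferred to Section~\ref{sec:disc-principle}. Symmetry and ellipticity of $A_t$ are not used in this step; they matter only later.
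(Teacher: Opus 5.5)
Your proof is correct and is essentially the paper's argument: take the normal component of the no-flux condition $J_t\cdot n=0$, substitute $\nabla\rho_t=\rho_t\nabla\log\rho_t$, divide by $\rho_t>0$ on $\partial G$, and read off the special cases from $\nabla\cdot A_t=0$. Factoring $\rho_t$ out of $J_t$ before dotting with $n$ only reorders the same computation.
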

\noindent(Proof in Appendix~\ref{app:proof-prop1}.)

Thus the forward process fixes one scalar linear condition, not the whole score vector. Under anisotropic or spatially varying $A_t$, the conormal trace can differ from the ordinary normal component $n^\top s$. Section~\ref{sec:exp-trace-law} makes this concrete: for $A=\diag(1,4)$, imposing the value appropriate to $A=I$ produces a fourfold error on the second-coordinate face.

\subsection{Boundary-free weighted ISM}

Let $\beta_A(t,x)=n(x)^\top \beff_t(x)$ and define the conormal-compatible class
\[
\Sbeta=\bigl\{s:n^\top A_t s(t,x)=\beta_A(t,x)\text{ on }\bdry\bigr\}.
\]
By Proposition~\ref{prop:conormal}, the true score $s^\star\in\Sbeta$, so restricting optimization to this class does not exclude the population minimizer.

\begin{proposition}[Conormal-compatible weighted ISM]
\label{prop:ism}
Let $s_\theta\in\Sbeta$. Under Assumption~\ref{ass:regularity},
\[
\boxed{\,\D_A(s_\theta,s^\star)=\E_{t,X_t}\!\bigl[s_\theta^\top A_t s_\theta+2\nabla\cdot(A_t s_\theta)\bigr]+C,\,}
\]
where $C$ is independent of $\theta$: on the trace-compatible class the boundary surface integral of Section~\ref{sec:ism-boundary} equals $\int_{\bdry}\rho_t\,\beta_A\,dS$ and absorbs into the constant. The interior weighted ISM objective and the weighted Fisher divergence therefore share minimizers over $\Sbeta$.
\end{proposition}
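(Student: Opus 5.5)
The plan is to derive the expansion of $\D_A$ from Section~\ref{sec:ism-boundary} explicitly, and then substitute the trace constraint defining $\Sbeta$.

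\textbf{Step 1: expand the square.} Fix $t\in[t_{\min},T]$ and write $s=s_\theta(t,\cdot)$. Expanding gives
\[
(s-s^\star)^\top A_t(s-s^\star)=s^\top A_t s-2\,s^\top A_t s^\star+s^{\star\top}A_t s^\star .
\]
The last term does not depend on $\theta$ and goes into $C$. It is finite because $\rho_t\in C^1(\bar G)$, $\rho_t>0$ on $\bar G$, and $\bar G$ is compact.

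\textbf{Step 2: integrate the cross term by parts.} Since $A_t$ is symmetric and $\rho_t s^\star=\nabla\rho_t$,
\[
\E_{X_t}\bigl[s^\top A_t s^\star\bigr]=\int_G (A_t s)^\top\nabla\rho_t\,dx .
\]
Apply the divergence theorem to the field $\rho_t A_t s$; Assumption~\ref{ass:regularity} grants the needed regularity. This gives
\[
\int_G (A_t s)^\top\nabla\rho_t\,dx=\int_{\bdry}\rho_t\,n^\top A_t s\,dS-\int_G\rho_t\,\nabla\cdot(A_t s)\,dx .
\]
Multiplying by $-2$ and taking $\E_t$ recovers the displayed identity of Section~\ref{sec:ism-boundary}: the interior objective $\mathcal L(s)$, minus the surface term, plus $C$. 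On a hyperrectangle I apply this face by face. Edges and corners have zero $(d-1)$-dimensional measure, so they do not contribute.

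\textbf{Step 3: use the constraint.} Because $s_\theta\in\Sbeta$, we have $n^\top A_t s_\theta=\beta_A=n^\top\beff_t$ pointwise on $\bdry$. The surface integral therefore equals $\int_{\bdry}\rho_t\beta_A\,dS$. This quantity involves only $\rho_t$, $b_t$, $A_t$ and the geometry, so it is independent of $\theta$ and can be absorbed into $C$. It is finite since $\beff_t$ is continuous on compact $\bar G$ and $\bdry$ has finite area.

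\textbf{Step 4: conclude.} Since $\D_A$ and $\mathcal L$ differ by a $\theta$-independent constant on $\Sbeta$, they have the same minimizers there. The minimum of $\D_A$, namely $0$, is attained at $s^\star$, and $s^\star\in\Sbeta$ by Proposition~\ref{prop:conormal}. Hence $s^\star$ also minimizes the interior objective over $\Sbeta$.

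\textbf{Main obstacle.} The only delicate point is justifying the integration by parts and the exchange with $\E_t$. This requires uniform integrability in $t$ of $\rho_t\,|A_t s|$ on $\bdry$ and of $\rho_t\,|\nabla\cdot(A_t s)|$ in $G$. I would simply invoke the regularity clause of Assumption~\ref{ass:regularity}, together with continuity in $t$ on the compact interval $[t_{\min},T]$, which makes all these quantities bounded.
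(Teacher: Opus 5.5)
Your proposal is correct and matches the paper's proof step for step: expand the weighted square, rewrite the cross term as $\int_G (A_t s)\cdot\nabla\rho_t\,dx$, apply the divergence theorem to $\rho_t A_t s$, and use $n^\top A_t s=\beta_A$ on $\bdry$ so that the surface integral becomes the $\theta$-independent constant $\int_{\bdry}\rho_t\beta_A\,dS$. One small slip: Assumption~\ref{ass:regularity} gives $\rho_t>0$ only on $\bdry$, not on all of $\bar G$, so your argument that $\E[(s^\star)^\top A_t s^\star]$ is finite uses a hypothesis the paper does not state; the paper simply assumes this finiteness, which is anyway needed for $\D_A$ to be finite.
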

\noindent(Proof in Appendix~\ref{app:proof-prop2}.) The boundary integral does not have to be zero: on $\Sbeta$ it is independent of $\theta$ and is absorbed into $C$. No stochastic boundary estimate is required, although simulation, statistical, optimization, and approximation errors remain (Appendix~\ref{app:excess-risk}). In practice we minimize the empirical interior objective over reflected-forward samples and use a Hutchinson probe only for the Jacobian trace in $\nabla\cdot(A_ts_\theta)$.

\subsection{Exact hyperrectangle construction}
\label{sec:hyperrect-construction}

Let $G=[-1,1]^d$ and set $v_\theta(t,x)=A_t(x)s_\theta(t,x)$. On the relative interior of either face $x_i=\pm1$, flux compatibility is equivalent to $v_{\theta,i}=\beff_{t,i}$.

\begin{proposition}[Exact hyperrectangle construction]
\label{prop:hyperrect}
Let $h_\theta:[0,T]\times G\to\R^d$ be any smooth neural field and define
\[
\boxed{\begin{aligned}
v_{\theta,i}(t,x)&=\beff_i(t,x)+(1-x_i^2)h_{\theta,i}(t,x),\\
s_\theta&=A_t^{-1}v_\theta.
\end{aligned}}
\]
Then $n^\top A_t s_\theta=\beta_A$ on each face of $G$.
\end{proposition}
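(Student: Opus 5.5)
The plan is a direct face-by-face computation. The first step is to undo the change of variables. Uniform ellipticity in Assumption~\ref{ass:regularity} makes $A_t(x)$ invertible on $\bar G$, so $s_\theta=A_t^{-1}v_\theta$ is well defined. It then satisfies $A_ts_\theta=v_\theta$ identically, with no symmetry needed for this identity. Hence on $\bdry$ we have $n^\top A_t s_\theta = n^\top v_\theta$, and the claim reduces to showing $n^\top v_\theta=n^\top \beff_t=\beta_A$ on each face.

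Next, fix a face and work on its relative interior, say $F_i^\pm=\{x\in\bar G: x_i=\pm1\}$. There the outward unit normal is $n=\pm e_i$, so
\[
n^\top v_\theta=\pm v_{\theta,i}=\pm\bigl(\beff_i+(1-x_i^2)h_{\theta,i}\bigr).
\]
Since $x_i^2=1$ on $F_i^\pm$, the second term vanishes, which leaves $n^\top v_\theta=\pm\beff_i=n^\top\beff_t=\beta_A$. This step requires $h_{\theta,i}$ to be finite on the face, which follows from smoothness on $[0,T]\times G$, extended continuously to $\bar G$.

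The only delicate point, and the closest thing to an obstacle, is the set of edges and corners, where several $x_i=\pm1$ hold simultaneously and the normal is not uniquely defined. I would handle this as the paper already does. The construction gives $v_{\theta,i}=\beff_i$ for every active index at such a point, so any outward normal in the normal cone still satisfies the identity. In any case these lower-dimensional sets carry zero $(d-1)$-dimensional surface measure and are ignored in the surface integrals. Finally, the construction never uses the entries of $A_t$ beyond invertibility, so the result holds for anisotropic and spatially varying $A_t$ alike.
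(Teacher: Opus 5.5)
Your proposal is correct and follows essentially the same route as the paper: reduce $n^\top A_t s_\theta$ to $n^\top v_\theta$, then use $n=\pm e_i$ and $1-x_i^2=0$ on the face $x_i=\pm1$ to obtain $\pm\beff_i=n^\top\beff_t=\beta_A$. Your remark on edges and corners goes slightly beyond the paper, which only discards these measure-zero sets. It holds: every active component satisfies $v_{\theta,i}=\beff_i$ there, so by linearity the identity extends to any normal in the normal cone.
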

\noindent(Proof in Appendix~\ref{app:proof-prop3}.) Because $1-x_i^2=0$ on $x_i=\pm1$, the $i$th component of $v_\theta$ is fixed there, while its remaining components remain trainable. These are tangent to the face; when $A_t=I$, they are also the usual tangential score components. At a face intersection, one condition is imposed for each incident face. For reflected OU drift $b(t,x)=-\lambda(t)x$ and $A_t=I$, the construction becomes $s_{\theta,i}=-\lambda(t)x_i+(1-x_i^2)h_{\theta,i}$. It adds no trainable parameters relative to the common backbone, although $\beff_t$ and $A_t^{-1}$ must be evaluated. Lemma~\ref{lem:facewise} and Proposition~\ref{prop:nomisspec} show that every sufficiently regular compatible score has this form with a continuous $h^\star$; approximation by a finite neural network remains separate (Corollary~\ref{cor:approx}). The same backbone is shared across baselines, so differences are not caused by network capacity.

\subsection{Simplex and polygon extensions}
\label{sec:ctr-law}

\paragraph{Simplex extension.} The construction transfers face-wise to the probability simplex $\Delta^{K-1}$ under constant isotropic tangent diffusion $A=aP$, where $P=I-K^{-1}\mathbf1\mathbf1^\top$ projects onto the simplex tangent space. The mobility envelope $M_\Delta(x)=\diag(x)-xx^\top$ replaces $(1-x_i^2)$ as the face-vanishing factor: on $x_i=0$, its $i$th row is zero, so the network cannot change the required boundary component; in the relative interior, its range is the tangent space. Appendix~\ref{app:simplex-impl} gives the construction and representation result. Here $A=aP$ is uniformly elliptic on the tangent subspace, not in ambient $\R^K$; boundary-degenerate Wright--Fisher and Dirichlet-type diffusions use a different forward process and remain out of scope.

\paragraph{Polygon / signed-distance extension.} For a general polygon, coordinate-wise factors are unavailable. We instead use the signed distance, a nearest boundary point, a tangent projector, and a depth gate that vanishes on $\bdry$ (Appendix~\ref{app:smooth}). The construction fixes the conormal component locally while leaving the tangential component trainable. On nonconvex or disconnected domains, the nearest boundary point can be non-unique and the signed distance is nonsmooth on the medial axis. We therefore evaluate this extension empirically rather than claiming a global representation result (Section~\ref{sec:exp-sdf}, Experiment~4).

\paragraph{Boundary error determined by the parametrization.} For every structured class in our comparisons, the parametrization fixes $n^\top A_ts_\theta$ on the boundary before training. The flux-compatible class fixes it to $\beta_A$ and has zero conormal-trace residual (CTR); a class that pins another value retains a nonzero residual before and after optimization. Appendix~\ref{app:ctr-family} gives the closed forms, including the dependence on the drift and, for spatially varying $A$, on $\nabla\cdot A$. Proposition~\ref{prop:nomisspec} shows that the box construction contains every sufficiently regular compatible score. Under the additional assumptions of Proposition~\ref{prop:floor}, an incorrect fixed trace also yields a positive score-error floor for any amount of training data. A local-time estimator avoids fixed misspecification but introduces boundary-gradient variance, which reaches $900\times$ exact quadrature in our controlled estimator experiment. Appendix~\ref{app:statistical} states these results, and Sections~\ref{sec:exp-trace-law}--\ref{sec:exp-stat} test them.

\section{Diagnostics}
\label{sec:diag}

We evaluate two questions separately: whether the learned score satisfies the boundary condition of the forward process, and how samples behave under a specified repair schedule. Repair can enforce feasibility even when the score fails the first criterion. Let $m$ denote the reflection interval in the reverse solver: $m=1$ repairs after every numerical step, finite $m>1$ repairs every $m$ steps, and $m=\infty$ disables repair. Table~\ref{tab:diagnostics} summarizes the core diagnostics and their interpretation.

\begin{table*}[t]
\centering
\small
\begin{tabular}{lp{3.3cm}p{2.5cm}cl}
\toprule
diagnostic & definition & detects & masking-resist. & role \\
\midrule
CTR & $\E_t\!\int_{\bdry}|n^\top A_t s-\beta_A|^2 d\nu_t$ & wrong conormal trace on $\bdry$ & yes & primary (boundary) \\
leakNone & no-fold reverse-SDE exit fraction & boundary-drift error & yes & primary (operational) \\
ProjDisp & $\E\|X_{\mathrm{raw}}-\Pi_G(X_{\mathrm{raw}})\|$, raw endpoint & external repair burden & for $m>1$ & primary (operational) \\
face-mass error & $|$gen.\,$-$\,target$|$ near-face mass (active faces, in-domain) & near-face placement & no & supporting \\
boundary-band error & analogous near-$\bdry$ discrepancy (in-domain) & near-boundary placement & no & supporting \\
\midrule
$\mathrm{MMD}^2$, SWD & hard-reflected distributional distance & global distributional fit & no & secondary (masking contrast) \\
\bottomrule
\end{tabular}
\caption{Core diagnostics and their role in the evaluation hierarchy. CTR and leakNone remain informative under full repair; operational and distributional metrics gain trace sensitivity only as repair weakens (Section~\ref{sec:exp-lowK}).}
\label{tab:diagnostics}
\end{table*}

\paragraph{Boundary-value error.} The \textbf{conormal-trace residual} (CTR) $=\E_t\!\int_{\bdry}|n^\top A_ts-\beta_A|^2d\nu_t$ measures the squared error in the boundary quantity of Proposition~\ref{prop:conormal}. For polytopes, $\nu_t$ is uniform on each face and averaged across faces; Appendix~\ref{app:ctr-family} gives the closed forms. The density-weighted variant $\mathrm{CTR}_\rho$, with $d\nu_t=\rho_t\,dS$, controls the boundary-bias functional through Corollary~\ref{cor:ctrrho}. The proposed parametrization has CTR zero pointwise, independently of training. Unlike endpoint metrics, CTR evaluates the learned field itself and is unaffected by the repair schedule; we therefore treat it as the primary boundary diagnostic.

\paragraph{Behavior under reduced or no repair.} \textbf{No-reflection leakage} (leakNone) is the fraction of final reverse-SDE endpoints outside $\bar G$ in a separate run with $m=\infty$. Let $X_{\mathrm{raw}}$ be the endpoint before terminal projection and let $\Pi_G$ be Euclidean projection onto $\bar G$. \textbf{Projection displacement} (ProjDisp) $=\E\|X_{\mathrm{raw}}-\Pi_G(X_{\mathrm{raw}})\|$ measures the final endpoint correction, not cumulative reflection cost. Under full repair it is zero up to numerical tolerance by construction.

\paragraph{Placement and score error.} The \textbf{face-mass error} averages the absolute generated--target probability difference in prespecified bands near \emph{active faces}, i.e., constraints assigned nonnegligible target mass in those bands; target probabilities are analytic or estimated with an independent high-accuracy reference sampler. The \textbf{boundary-band error} is the analogous quantity near the full boundary. Both use generated samples in $\bar G$ and measure repaired-sample placement. When an analytic score is available, $\mathrm{ShellErr}_{A,\delta}=\E[(s-s^\star)^\top A_t(s-s^\star)\mid\dist(X_t,\bdry)\le\delta]$ measures weighted score error in a boundary layer of width $\delta$. Tangential boundary error and the simplex unit-normal convention are defined in Appendix~\ref{app:ctr-family}. We also report $\mathrm{MMD}^2$ and sliced 1-Wasserstein distance (SWD) for the repaired distribution, but do not treat them as direct tests of boundary-trace correctness under full repair.

\begin{figure}[t]
\centering
\includegraphics[width=\linewidth]{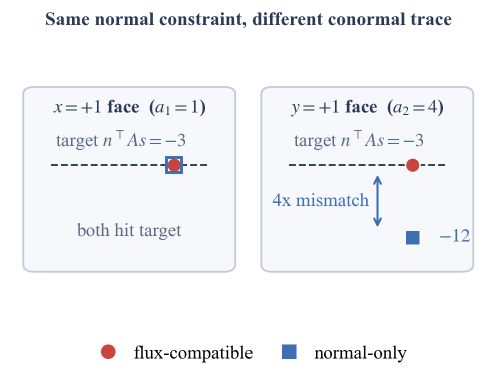}
\caption{Under anisotropic diffusion, the no-flux condition fixes the conormal rather than the normal trace. For $A=\operatorname{diag}(1,4)$ and $b=-3x$, the target $n^\top As=n^\top b=-3$ is identical on the positive $x$- and $y$-faces. Enforcing $n^\top s=-3$ is correct on $x=+1$, where $a_1=1$, but yields $n^\top As=-12$ on $y=+1$, where $a_2=4$. The flux-compatible class matches the target on both faces. Values are analytic and have no seed variance.}
\label{fig:a2-trace}
\end{figure}

\begin{figure*}[t]
\centering
\includegraphics[width=\textwidth]{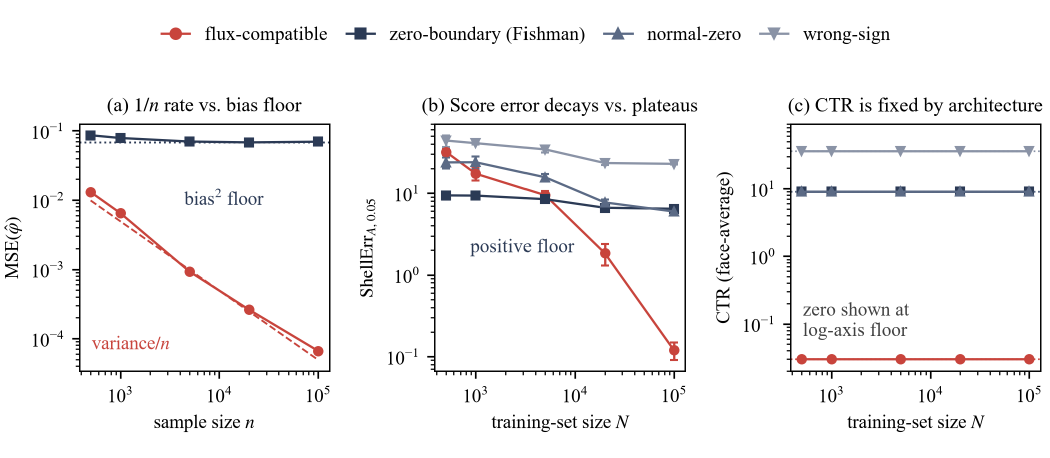}
\caption{Statistical consequences of boundary parametrization. (a) In the identifiable family $s_\phi=b+\phi\nabla c$, $c(x,y)=\cos(\pi x)\cos(\pi y)$, $\phi_0=0.5$; the flux-compatible estimator follows the $1/n$ MSE guide, while the Fishman-style family approaches the squared bias of its pseudo-true value $\phi_F^\star=0.760$. (b) Boundary-shell $A$-weighted score error decreases with training-set size $N$ for flux-compatible scores but plateaus for the three incorrect-trace classes; all use the same Jacobian stabilizer. Error bars show mean $\pm$ one standard deviation over five paired seeds. (c) Face-average CTR stays fixed at the architecture-imposed values $0,9,9,36$ for flux-compatible, zero-boundary (Fishman), normal-zero, and wrong-sign. The zero value is drawn at the log-axis floor; dotted lines are closed-form values.}
\label{fig:stat-triptych}
\end{figure*}

\begin{figure*}[t]
\centering
\includegraphics[width=\textwidth]{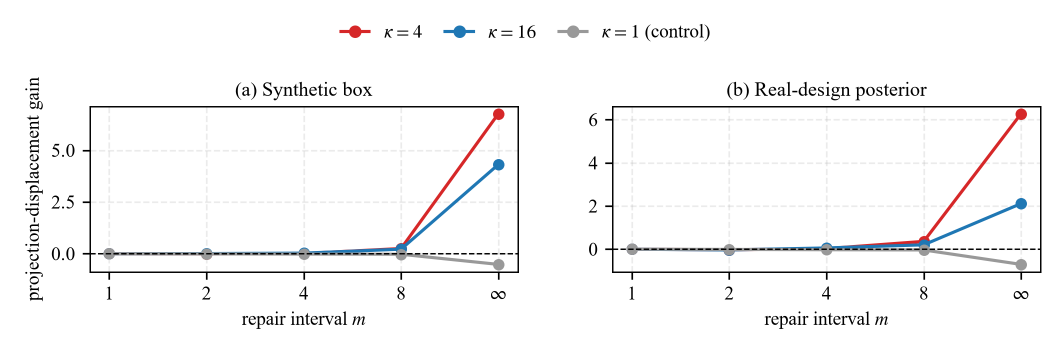}
\caption{Projection-displacement gain, $\min_{b\in\mathcal B_{\mathrm{nonflux}}}\mathrm{ProjDisp}_b-\mathrm{ProjDisp}_{\mathrm{flux}}$, versus repair interval $m$ for a corner-active GMM with two active faces (left) and a three-dimensional box-truncated Gaussian with diabetes-design covariance (right). Positive values favor flux-compatible scores. Here $A=Q\diag(1,\kappa)Q^\top$; $\kappa=1$ is isotropic and $Q$ is non-axis-aligned in rotated cells. The gain is masked at $m=1$ and emerges toward $m=\infty$; face-mass placement is in Appendices~\ref{app:phase-diagram-full} and~\ref{app:real-design-full}.}
\label{fig:repair-spectrum}
\end{figure*}

Experiment~3 also reports an \textbf{active-sign error}, the signed generated--target near-face mass difference on the intended side of each active face. It is distinct from ProjDisp; correlation details are in Appendix~\ref{app:real-design-full}.

\section{Experiments and Results}
\label{sec:experiments}

We use the experiments to answer three questions: whether the conormal trace is the boundary quantity that matters, whether a wrong fixed trace creates a statistical floor, and when that fixed error becomes visible in sampler-level metrics. Sections~\ref{sec:exp-trace-law}--\ref{sec:exp-stat} form Experiment~1; Experiments~2--3 (Section~\ref{sec:exp-lowK}) study reflection masking on synthetic and real-design boxes; Experiment~4 (Section~\ref{sec:exp-sdf}) is a scoped signed-distance test on a disconnected polygon. Simplex/HITChip diagnostics remain in Appendix~\ref{app:e5b-full}.

\paragraph{Setup, baselines, and gains.} Methods share data, backbone, optimizer, sampler, repair schedule, and seeds within each condition. The structured baselines differ only in their boundary composition: flux-compatible pins $n^\top As=\beta_A$; Fishman-style zero-boundary pins the whole score to zero; normal-zero pins only the Euclidean normal component to zero; wrong-sign flips the required conormal trace; wrong-normal-only imposes the trace computed as if $A=I$. For any smaller-is-better metric, define $\Delta_{\rm metric}=\min_{b\in\mathcal B_{\rm nonflux}}{\rm metric}_b-{\rm metric}_{\rm flux}$, so positive ProjDisp or face-mass gain favors flux-compatible scores. Experiments~2--3 use $A=Q\diag(1,\kappa)Q^\top$, with eigenvalue ratio $\kappa$ and non-axis-aligned $Q$ giving rotated anisotropy. Experiment~2 is a two-dimensional corner-active GMM with mass near two intersecting faces. Experiment~3 is a three-dimensional real-design posterior: a box-truncated Gaussian whose covariance and rotation come from the diabetes design matrix and whose mass lies near three faces.

\paragraph{Reflection-masking hypotheses.} M1: under full repair ($m=1$), post-repair metrics can hide nonzero CTR; ProjDisp is zero up to tolerance by construction and face-mass gains can have either sign. M2: in the $A=I$ control, flux-compatible and wrong-normal-only impose the same trace. M3: in the single-active-face, full-repair control, no operational gain is expected. M4: as repair weakens, fixed trace misspecification should appear in operational metrics, especially under rotated anisotropy and multiple active faces. Experiments~2--3 test M1, M2, and M4; Appendix~\ref{app:neg-controls} tests M2--M3. Supplementary Table~\ref{tab:scope} gives the full prediction--outcome scorecard. MMD and SWD remain secondary because hard repair can mask score errors; baseline formulas and estimator controls are in Appendices~\ref{app:baseline-formulas} and~\ref{app:e3-full}.

\subsection{Experiment 1a --- Conormal, not normal, trace determines the boundary error}
\label{sec:exp-trace-law}

On $[-1,1]^2$ with drift $b=-\lambda x$, we use the analytic density $\log\rho_{\lambda,\epsilon,A}(x,y)=-\frac{\lambda}{2}(x,y)^\top A^{-1}(x,y)+\epsilon\cos(\pi x)\cos(\pi y)-\log Z_A$, whose oscillatory derivative vanishes on each face, so the no-flux conormal target is $\beta_A=n^\top b$ (Appendix~\ref{app:a2-full}). At $A=\diag(1,4)$ and $(\lambda,\epsilon)=(3,0.5)$, enforcing the normal trace as if $A=I$ is fourfold off target on the eigenvalue-four face, whereas the flux-compatible construction is exact (Figure~\ref{fig:a2-trace}). The measured CTR equals its closed form with zero seed variance. Flux-compatible scores attain $A$-weighted score and shell errors $2.79\times10^{-3}$ and $3.09\times10^{-3}$; the best values across the incorrect-trace classes are $2.13\times10^{-1}$ and $3.959$, respectively (Supplementary Table~\ref{tab:a2-full}).

The $A=I$ controls, score-error fields, and tangential-trace fields are in Appendix~\ref{app:e1-full}. Across the state-dependent-diffusion grid, measured CTR equals its closed-form prediction to floating-point precision and is unchanged by training (Appendix~\ref{app:e6-training}); Appendix~\ref{app:e3-full} gives the boundary-estimator comparison.

\paragraph{Paired Fishman-style comparison.} Appendix~\ref{app:baseline-formulas} compares only the Fishman-style zero-boundary parametrization under the same reflected-ISM pipeline, not the full system of \citet{fishman2024diffusionmodelsconstraineddomains}. At $b=0$, both flux-compatible and zero-boundary classes impose the correct zero conormal trace, but zero-boundary also erases the tangential boundary score. When $b\ne0$, zero-boundary additionally imposes the wrong conormal value; anisotropic $A$ can further separate normal-only from conormal constraints. The operational gap appears once repair is removed.

\subsection{Experiment 1b --- Sample-size scaling and the misspecification floor}
\label{sec:exp-stat}

Figure~\ref{fig:stat-triptych} checks the predictions of Appendix~\ref{app:statistical}. Panel~(a) uses $s_\phi=b+\phi\nabla c$, $c(x,y)=\cos(\pi x)\cos(\pi y)$, with truth $\phi_0=0.5$. The flux-compatible family contains the truth; the Fishman-style family does not and converges to the pseudo-true value $\phi_F^\star=0.760$. Panels~(b)--(c) sweep the training-bank size $N$ in the same box setting. All four classes use the same non-capping Jacobian stabilizer $\lambda_{\rm Jac}\E\|\nabla s\|_F^2$ with $\lambda_{\rm Jac}=10^{-2}$ to prevent finite-bank divergence wells (Appendix~\ref{app:n-sweep}). The boundary-shell risk drops from $32.0$ to $0.12$ for flux-compatible scores but plateaus at $6.4$, $6.0$, and $22.8$ for zero-boundary, normal-zero, and wrong-sign. CTR remains exactly $0/9/9/36$ with zero seed variance.

\subsection{Experiments 2--3 --- When fixed trace errors become operationally visible}
\label{sec:exp-lowK}
\label{sec:exp-realdesign}

CTR directly measures trace misspecification and is unaffected by repair. Experiments~2--3 ask when this fixed structural error becomes visible in sampler-level diagnostics. Both experiments use $\kappa\in\{1,4,16\}$, repair intervals $m\in\{1,2,4,8,\infty\}$, and six architecture-matched baselines. Experiment~2 uses the two-face GMM; Experiment~3 uses the three-dimensional real-design posterior with a Gibbs/TMVG reference sampler. Full target definitions, grids, controls, and seed counts are in Appendices~\ref{app:phase-diagram-full}, \ref{app:real-design-full}, and~\ref{app:neg-controls}.

The fold repairs sample locations rather than the learned score. Thus classes with nonzero CTR can look similar under full repair, while the same misspecification appears through leakage and projection displacement as repair weakens. Figure~\ref{fig:repair-spectrum} shows this transition. At rotated $\kappa=16$, ProjDisp gain increases from approximately zero at $m=1$ to $4.32$ on the synthetic target and $2.11$ on the real-design target at $m=\infty$. The $A=I$ control shows no flux-vs-wrong-normal distinction (M2), and the separate single-active-face controls are null (M3). We do not claim a one-to-one link between CTR and generation quality: full-repair face-mass placement is condition-dependent, and secondary datasets and extreme-drift behavior are reported in Appendices~\ref{app:e4-full}, \ref{app:flower}, \ref{app:extreme}, and~\ref{app:scope}.

\subsection{Experiment 4 --- Signed-distance extension on a disconnected polygon}
\label{sec:exp-sdf}

Appendix~\ref{app:sdf-full} evaluates the signed-distance-function construction on a disconnected polygon made of a nonconvex L-polygon, a square island, and a narrow rectangle. This extension lacks the box no-misspecification theorem, so it is an empirical scope test; all methods share the same Jacobian stabilizer with $\lambda_{\rm Jac}=10^{-2}$. Against an equal-CTR scalar mask at $m=\infty$, the tangent-preserving construction reduces ProjDisp from $0.407$ to $0.224$ and boundary-band error from $0.226$ to $0.183$, while component KL worsens from $0.013$ to $0.209$ (Supplementary Table~\ref{tab:sdf-qt-final}). Thus the extension improves boundary correction in this test but not component allocation or general generation quality.

\section{Discussion and Scope}
\label{sec:disc-principle}

\paragraph{Theory.} Flux-compatible ISM separates boundary-trace correctness from interior expressivity. Proposition~\ref{prop:conormal} supplies the conormal trace from the forward process, and Proposition~\ref{prop:hyperrect} gives a box parametrization that meets it while leaving the backbone, hence the tangential score, free. Section~\ref{sec:ctr-law} extends this logic face-wise to simplices and by a signed-distance construction. CTR is architectural, not learned, as Section~\ref{sec:exp-trace-law} verifies.

\paragraph{Experimental interpretation.} CTR exposes trace misspecification that full repair can hide in MMD/SWD and placement metrics. No-reflection leakage is informative when trajectories reach the boundary but can lose power otherwise. The largest boundary-band, face-mass, and ProjDisp gains occur with rotated anisotropy, multiple active faces, and weak repair. Full-repair placement is condition-dependent; we therefore claim boundary correctness, not a universal generation-quality gain. It matters most when invalid outputs are unacceptable or projections between repairs are costly.

\paragraph{Limitations.} The no-misspecification theorem is exact only for hyperrectangles under Assumption~\ref{ass:regularity}; weak-trace variants remain open. The simplex result is face-wise, and the polygon test is empirical with a shared Jacobian penalty. Moderate-dimensional simplex/HITChip experiments are structural diagnostics; empirical ISM can overfit the divergence. Near-delta and single-face controls are inconclusive or null (Supplementary Table~\ref{tab:scope}). In the box setting, conormal compatibility adds no trainable parameters or stochastic boundary estimator; its operational benefit is scoped to anisotropy, multiple active constraints, and weak repair.

\paragraph{Generative AI disclosure.} Generative AI tools were used for language refinement, manuscript review, and code/package auditing. All scientific claims, proofs, experiments, code, and submitted artifacts were independently verified and approved by the authors.

\fi

\ifincludesupplement
\ifsupplementbuild
\else
  \clearpage
  \section*{Supplementary Material}
\fi
\appendix
\setcounter{figure}{0}
\setcounter{table}{0}
\setcounter{assumption}{0}
\setcounter{proposition}{0}
\setcounter{lemma}{0}
\setcounter{corollary}{0}
\renewcommand{\thefigure}{S\arabic{figure}}
\renewcommand{\thetable}{S\arabic{table}}
\renewcommand{\theassumption}{S\arabic{assumption}}
\renewcommand{\theproposition}{S\arabic{proposition}}
\renewcommand{\thelemma}{S\arabic{lemma}}
\renewcommand{\thecorollary}{S\arabic{corollary}}

\section{Proofs}

\subsection{Proof of Proposition~\ref{prop:conormal}}
\label{app:proof-prop1}
The no-flux condition gives $0=J_t\cdot n=\rho_t b_t\cdot n-n^\top A_t\nabla\rho_t-\rho_t n^\top(\nabla\cdot A_t)$. Since $\nabla\rho_t=\rho_t\nabla\log\rho_t$ and $\rho_t>0$ on $\bdry$, division by $\rho_t$ gives $n^\top A_t\nabla\log\rho_t=n^\top(b_t-\nabla\cdot A_t)=n^\top \beff_t$. If $A_t$ is spatially constant, $\nabla\cdot A_t=0$; if $A_t=I$, the identity becomes $\partial_n\log\rho_t=b_t\cdot n$. \qed

\subsection{Proof of Proposition~\ref{prop:ism}}
\label{app:proof-prop2}
Expand $\D_A(s,s^\star)=\E[s^\top A s]-2\E[s^\top A s^\star]+\E[(s^\star)^\top A s^\star]$. For fixed $t$, the cross term is $\E_{\rho_t}[s^\top A s^\star]=\int_G(As)\cdot\nabla\rho_t\,dx$. By the divergence theorem,
\[
\int_G(As)\cdot\nabla\rho_t\,dx=\int_{\bdry}\rho_t\,n^\top As\,dS-\int_G\rho_t\,\nabla\cdot(As)\,dx.
\]
If $s\in\Sbeta$, then $n^\top As=\beta_A$ on $\bdry$, so the boundary integral equals $\int_{\bdry}\rho_t\,\beta_A\,dS$, independent of trainable parameters. Absorbing this term and the true-score norm $\E[(s^\star)^\top A s^\star]$ into a constant $C$ and averaging over $t\sim q$ gives $\D_A(s,s^\star)=\E_{t,X_t}[s^\top As+2\nabla\cdot(As)]+C$. \qed

\subsection{Proof of Proposition~\ref{prop:hyperrect}}
\label{app:proof-prop3}
On the face $x_i=1$, $n=e_i$ and $n^\top As=v_i$; on $x_i=-1$, $n=-e_i$ and $n^\top As=-v_i$. The target trace $n^\top \beff$ equals $\beff_i$ on $x_i=1$ and $-\beff_i$ on $x_i=-1$. The construction $v_{\theta,i}=\beff_i+(1-x_i^2)h_{\theta,i}$ satisfies $1-x_i^2=0$ on both faces, giving $v_i=\beff_i$ on both, which matches both target identities. \qed

\subsection{Face-wise factorization}
\label{app:proof-lem1}
\begin{lemma}[Face-wise factorization]
\label{lem:facewise}
Let $u_i\in C^1([-1,1]^d)$ satisfy $u_i(x)=0$ whenever $x_i=\pm1$. Then there exists a continuous function $h_i$ such that $u_i(x)=(1-x_i^2)h_i(x)$; for $|x_i|<1$, $h_i=u_i/(1-x_i^2)$, with continuous face extensions $h_i(\pm 1,x_{-i})=\mp\tfrac12\partial_i u_i(\pm 1,x_{-i})$.
\end{lemma}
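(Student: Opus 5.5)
The plan is a Hadamard-type factorization carried out along the $x_i$ direction only. On the open slab $|x_i|<1$ the factor $1-x_i^2$ is positive, so we set $h_i=u_i/(1-x_i^2)$ there; this is continuous, and indeed $C^1$. The whole task is to show that this quotient extends continuously to the two faces $x_i=\pm1$, with the stated boundary values. Off the faces there is nothing to check, and the identity $u_i=(1-x_i^2)h_i$ holds trivially on the faces because both sides vanish there.

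For the extension we first rewrite the quotient as an averaged derivative. Near the face $x_i=1$, write $x=(x_i,x_{-i})$. Since $u_i(1,x_{-i})=0$, the fundamental theorem of calculus along the segment from $(1,x_{-i})$ to $x$ gives
\[
u_i(x)=-(1-x_i)\int_0^1\partial_iu_i\bigl(1-\tau(1-x_i),x_{-i}\bigr)\,d\tau .
\]
Dividing by $1-x_i^2=(1-x_i)(1+x_i)$ then yields
\[
h_i(x)=-\frac{1}{1+x_i}\int_0^1\partial_iu_i\bigl(1-\tau(1-x_i),x_{-i}\bigr)\,d\tau .
\]
This expression makes sense for all $x_i\in(0,1]$, including $x_i=1$.

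Next comes continuity of this representation, jointly in $(x_i,x_{-i})$, up to $x_i=1$. The domain $[-1,1]^d$ is compact, so $\partial_iu_i$ is uniformly continuous on it. The integrand therefore depends continuously on the parameters, uniformly in $\tau$, and the integral is continuous. At $x_i=1$ the integral reduces to $\partial_iu_i(1,x_{-i})$ and the prefactor to $-\tfrac12$, which gives $h_i(1,x_{-i})=-\tfrac12\partial_iu_i(1,x_{-i})$. The face $x_i=-1$ is handled symmetrically by integrating from $(-1,x_{-i})$. There $u_i(x)=(1+x_i)\int_0^1\partial_iu_i(-1+\tau(1+x_i),x_{-i})\,d\tau$, so the prefactor becomes $1/(1-x_i)\to\tfrac12$, giving the value $+\tfrac12\partial_iu_i(-1,x_{-i})$. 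Both signs match the statement's $\mp\tfrac12$.

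Finally, the two integral representations agree with $u_i/(1-x_i^2)$ on the overlap with the open slab, so the pieces glue into a single continuous $h_i$ on the closed cube.

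The only real obstacle is joint continuity at the face, since continuity in $x_i$ alone is not enough. The concern is uniformity in the tangential variables $x_{-i}$, including near edges and corners where other faces meet. Uniform continuity of $\partial_iu_i$ on the compact cube handles this directly. The one-sided derivatives at the boundary are understood as limits from inside, which $C^1([-1,1]^d)$ provides.
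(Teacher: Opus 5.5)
Your proof is correct and follows the paper's route: set $h_i=u_i/(1-x_i^2)$ on the open slab and obtain the face values $\mp\tfrac12\partial_i u_i(\pm1,x_{-i})$ from the first-order behavior of $u_i$ in $x_i$. Where the paper uses a pointwise Taylor expansion with an $o(x_i-1)$ remainder and simply asserts joint continuity from $u_i\in C^1$, your integral (Hadamard) representation proves that joint continuity in $(x_i,x_{-i})$, using the uniform continuity of $\partial_i u_i$ on the compact cube.
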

\noindent\emph{Proof.} For $|x_i|<1$, set $h_i(x)=u_i(x)/(1-x_i^2)$. Since $u_i(1,x_{-i})=0$ and $u_i\in C^1$, Taylor expansion in the $i$-th coordinate gives $u_i(x_i,x_{-i})=(x_i-1)\partial_i u_i(1,x_{-i})+o(x_i-1)$ as $x_i\to 1$. Combined with $1-x_i^2=-(x_i-1)(1+x_i)$, this gives $\lim_{x_i\to 1}u_i(x)/(1-x_i^2)=-\tfrac12\partial_i u_i(1,x_{-i})$. The $x_i\to -1$ limit is analogous and equals $\tfrac12\partial_i u_i(-1,x_{-i})$. Continuity follows from $u_i\in C^1$. \qed

\subsection{No boundary misspecification on boxes}
\label{app:proof-prop4}
\begin{proposition}[No boundary misspecification on boxes]
\label{prop:nomisspec}
Assume $G=[-1,1]^d$, and for each fixed $t$ let $s^\star\in C^1(\bar G;\R^d)$, $A\in C^1(\bar G)$ uniformly elliptic, and $\beff\in C^1(\bar G;\R^d)$. If $s^\star$ satisfies the no-flux conormal trace, then there exists a continuous $h^\star(t,\cdot):\bar G\to\R^d$ such that
\[
\boxed{\begin{aligned}
s^\star(t,x)&=A^{-1}\!\bigl[\beff+\Mdiag h^\star(t,x)\bigr],\\
\Mdiag&=\diag(1-x_1^2,\ldots,1-x_d^2).
\end{aligned}}
\]
\end{proposition}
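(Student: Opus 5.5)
The plan is to reduce the statement to Lemma~\ref{lem:facewise}, applied coordinate by coordinate. Fix $t$ and set $v^\star=As^\star$ and $u=v^\star-\beff$. Since $A\in C^1$ and $s^\star\in C^1$, the product $v^\star$ is $C^1$ on $\bar G$. Because $\beff\in C^1$ by hypothesis, $u\in C^1(\bar G;\R^d)$ as well.

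The first step is to identify the boundary behaviour of each component $u_i$. On the relative interior of the face $x_i=1$ the outward normal is $n=e_i$. The no-flux conormal trace (Proposition~\ref{prop:conormal}, assumed here as the hypothesis on $s^\star$) then reads $v^\star_i=\beff_i$, so $u_i=0$ there. On the face $x_i=-1$ we have $n=-e_i$, and the trace condition gives $-v^\star_i=-\beff_i$, hence again $u_i=0$. This is the same computation as in the proof of Proposition~\ref{prop:hyperrect}, run in reverse.

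Lemma~\ref{lem:facewise} needs $u_i=0$ on the whole closed set $\{x_i=\pm1\}$, including its intersections with other faces. I extend from relative interiors using continuity of $u_i$: each lower-dimensional edge or corner lying in $\{x_i=\pm1\}$ is in the closure of the relative interior of that face. This density argument is the only point requiring care, since the trace condition is only stated face-wise and in surface measure.

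Applying Lemma~\ref{lem:facewise} to each $i$ then gives continuous $h^\star_i$ with $u_i=(1-x_i^2)h^\star_i$, and $h^\star_i$ has explicit face values $\mp\tfrac12\partial_iu_i$. Writing these componentwise identities in vector form gives $v^\star=\beff+\Mdiag h^\star$. Uniform ellipticity makes $A(x)$ invertible at every $x\in\bar G$, so $s^\star=A^{-1}[\beff+\Mdiag h^\star]$ as claimed.

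Time enters only as a parameter, so the construction is pointwise in $t$. Continuity of $h^\star$ in $t$ jointly with $x$ is not claimed and would need $C^1$ dependence in $t$, which I would not pursue.
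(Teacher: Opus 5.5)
Your proposal is correct and follows the paper's proof essentially step for step: set $v^\star=As^\star$ and $u_i=v^\star_i-\beff_i$, read $u_i=0$ off the conormal trace on $x_i=\pm1$, apply Lemma~\ref{lem:facewise} coordinatewise, stack the components, and invert $A$ using uniform ellipticity. The one addition, a continuity argument extending $u_i=0$ from the relative interiors of the faces to edges and corners, is a point the paper leaves implicit, and you handle it correctly.
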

\noindent\emph{Proof.} For each fixed $t$, define $v^\star=As^\star$ and $u_i=v_i^\star-\beff_i$. On each face $x_i=\pm 1$, the conormal trace condition gives $v_i^\star=\beff_i$, hence $u_i=0$. Lemma~\ref{lem:facewise} yields $u_i=(1-x_i^2)h_i^\star(t,\cdot)$ for continuous $h_i^\star$. Stacking coordinates, $v^\star=\beff+\Mdiag h^\star$, and since $A$ is uniformly elliptic, $s^\star=A^{-1}[\beff+\Mdiag h^\star]$. The function $h^\star$ inherits the $t$-dependence of $s^\star$. \qed

\subsection{Approximation transfer (Corollary~\ref{cor:approx})}
\label{app:proof-cor1}

\begin{corollary}[Approximation transfer]
\label{cor:approx}
If $A$ is uniformly elliptic with $\lambda_{\min}(A)\ge a_0>0$ and a neural class approximates $h^\star$ with $\E\|h_\theta-h^\star\|^2\le\varepsilon^2$, then the induced score satisfies $\D_A(s_\theta,s^\star)\le a_0^{-1}\varepsilon^2$ up to the vector-norm convention.
\end{corollary}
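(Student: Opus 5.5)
The plan is to invoke Proposition~\ref{prop:nomisspec}, which represents the true score as $s^\star=A^{-1}[\beff+\Mdiag h^\star]$ with the same fixed $\beff$ and envelope $\Mdiag$ used in Proposition~\ref{prop:hyperrect}. Then the score error collapses to a linear image of the network error $h_\theta-h^\star$, and the bound reduces to two operator-norm estimates. The expectation $\E\|h_\theta-h^\star\|^2$ in the hypothesis will be read under the same law as in $\D_A$, namely $t\sim q$ and $X_t\sim\rho_t$. That is the natural reading of the statement, and the clause ``up to the vector-norm convention'' covers the choice of Euclidean norm.

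\emph{Step 1 (error identity).} For each fixed $t$, subtract the representation of $s^\star$ from the construction of $s_\theta$. The term $A^{-1}\beff$ cancels exactly, because both fields carry the same known boundary value. This gives
\[
s_\theta-s^\star=A^{-1}\Mdiag\,(h_\theta-h^\star).
\]
Write $w=h_\theta-h^\star$. Substituting into the definition of $\D_A$ yields
\[
\D_A(s_\theta,s^\star)=\E_{t,X_t}\bigl[w^\top \Mdiag A^{-1}\Mdiag\, w\bigr],
\]
where I use the symmetry of both $A$ and $\Mdiag$.

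\emph{Step 2 (pointwise bound).} Uniform ellipticity with $\lambda_{\min}(A)\ge a_0$ gives $A^{-1}\preceq a_0^{-1}I$. Hence $w^\top\Mdiag A^{-1}\Mdiag w\le a_0^{-1}\|\Mdiag w\|^2$. On $\bar G=[-1,1]^d$ each diagonal entry $1-x_i^2$ lies in $[0,1]$, so $\|\Mdiag w\|\le\|w\|$ pointwise. Combining the two estimates gives the integrand bound $a_0^{-1}\|h_\theta-h^\star\|^2$.

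\emph{Step 3 (integrate).} Take the expectation over $t\sim q$ and $X_t\sim\rho_t$ and apply the hypothesis. This gives $\D_A(s_\theta,s^\star)\le a_0^{-1}\varepsilon^2$.

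The algebra itself is routine. The only real obstacle is bookkeeping. First, $h^\star$ must be a legitimate, measurable approximation target. Proposition~\ref{prop:nomisspec} supplies it as a continuous function of $x$ for each $t$, and joint measurability in $t$ follows from the $t$-dependence inherited from $s^\star$, $A$ and $\beff$. Second, the approximation error must be measured under $\rho_t$. If it is measured under another reference measure, for example uniform on $G$, I would instead bound the density ratio. Since $\rho_t$ is continuous on the compact set $\bar G$ by Assumption~\ref{ass:regularity}, $\sup\rho_t$ is finite, and this costs only an extra constant factor. That constant is exactly what the ``up to convention'' qualifier absorbs.
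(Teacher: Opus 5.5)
Your proposal is correct and matches the paper's proof: both subtract the two representations so that $A^{-1}\beff$ cancels, write the $A$-weighted error as $w^\top M A^{-1} M w$ with $w=h_\theta-h^\star$, bound it using $\|A^{-1}\|\le a_0^{-1}$ and $\|M\|\le 1$, and then take the expectation and apply the approximation hypothesis. Your closing aside about a different reference measure is not needed, since you already read the hypothesis under the same law as $\D_A$; also, a density-ratio constant is not what the paper's ``vector-norm convention'' qualifier refers to.
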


\emph{Proof.} Let $s_\theta=A^{-1}[\beff+Mh_\theta]$ and $s^\star=A^{-1}[\beff+Mh^\star]$. Then $s_\theta-s^\star=A^{-1}M(h_\theta-h^\star)$, so $(s_\theta-s^\star)^\top A(s_\theta-s^\star)=(h_\theta-h^\star)^\top MA^{-1}M(h_\theta-h^\star)$. Since $\lambda_{\min}(A)\ge a_0$, $\|A^{-1}\|\le a_0^{-1}$ and $\|M\|\le 1$, so $\D_A(s_\theta,s^\star)\le a_0^{-1}\,\E\|h_\theta-h^\star\|^2\le a_0^{-1}\varepsilon^2$. Constants depending on the vector-norm convention are suppressed. \qed

\subsection{Boundary bias and excess-risk decomposition (Proposition~\ref{prop:excess})}
\label{app:excess-risk}
\label{app:proof-prop5}

For a general score field $s$, define the interior weighted ISM functional $\mathcal L(s)=\E_P[s^\top As+2\nabla\cdot(As)]$ and the boundary-bias functional
\[
\mathcal B(s)=\E_t\!\int_{\bdry}\rho_t(x)\!\bigl[n^\top A_t s(t,x)-\beta_A(t,x)\bigr]dS.
\]
The integration-by-parts identity gives $\D_A(s,s^\star)=\mathcal L(s)+C-2\mathcal B(s)$, and $\mathcal B(s)=0$ when $s\in\Sbeta$. Let $P_\Delta$ be the distribution generated by a discretized reflected forward simulator and define $\mathcal L_\Delta$ accordingly; let $\Lhat_{N,\Delta}$ be its empirical version. For $\HH\subset\Sbeta$, let $s_\HH$ minimize $\D_A$ over $\HH$, and let $\hat s$ satisfy $\Lhat_{N,\Delta}(\hat s)\le\inf_{s\in\HH}\Lhat_{N,\Delta}(s)+\varepsilon_{\mathrm{opt}}$.

\begin{proposition}[Excess-risk decomposition]
\label{prop:excess}
For $\HH\subset\Sbeta$,
\[
\boxed{\;
\begin{aligned}
\D_A(\hat s,s^\star)\le\;&\underbrace{\D_A(s_\HH,s^\star)}_{\mathrm{Approx}}
+2\underbrace{\sup_{s\in\HH}|\Lhat_{N,\Delta}(s)-\mathcal L_\Delta(s)|}_{\mathrm{Stat}}\\
&+2\underbrace{\sup_{s\in\HH}|\mathcal L_\Delta(s)-\mathcal L(s)|}_{\mathrm{Sim}}+\varepsilon_{\mathrm{opt}}.
\end{aligned}
\;}
\]
For a non-trace-compatible class, the same argument incurs an additional boundary-bias term $2\sup_{s,s'\in\HH}|\mathcal B(s)-\mathcal B(s')|\le 4\sup_{s\in\HH}|\mathcal B(s)|$.
\end{proposition}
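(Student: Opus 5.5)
The argument is the standard ERM chain, run through the identity $\D_A(s,s^\star)=\mathcal L(s)+C-2\mathcal B(s)$ stated just before the proposition. On $\HH\subset\Sbeta$ we have $\mathcal B\equiv0$ (Proposition~\ref{prop:ism}). So for every $s\in\HH$, $\D_A(s,s^\star)=\mathcal L(s)+C$ with the same constant $C$. Excess divergence is therefore excess interior loss:
\[
\D_A(\hat s,s^\star)-\D_A(s_\HH,s^\star)=\mathcal L(\hat s)-\mathcal L(s_\HH).
\]
This reduction is the one place where trace compatibility is used. Without it, the unknown $\theta$-dependent surface term would sit between $\D_A$ and the loss being minimized.

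Next, telescope through the discretized and empirical losses:
\[
\mathcal L(\hat s)-\mathcal L(s_\HH)=[\mathcal L(\hat s)-\mathcal L_\Delta(\hat s)]+[\mathcal L_\Delta(\hat s)-\Lhat_{N,\Delta}(\hat s)]+[\Lhat_{N,\Delta}(\hat s)-\Lhat_{N,\Delta}(s_\HH)]+[\Lhat_{N,\Delta}(s_\HH)-\mathcal L_\Delta(s_\HH)]+[\mathcal L_\Delta(s_\HH)-\mathcal L(s_\HH)].
\]
The middle bracket is at most $\varepsilon_{\mathrm{opt}}$, because $\hat s$ is an $\varepsilon_{\mathrm{opt}}$-minimizer of $\Lhat_{N,\Delta}$ over $\HH$ and $s_\HH\in\HH$. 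Both $\hat s$ and $s_\HH$ lie in $\HH$, so the two simulation brackets are each bounded by Sim and the two statistical brackets are each bounded by Stat. Adding these gives exactly $\mathrm{Approx}+2\,\mathrm{Stat}+2\,\mathrm{Sim}+\varepsilon_{\mathrm{opt}}$.

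For a class that is not trace-compatible, the same chain applies, but the identity now leaves $-2\mathcal B(\hat s)+2\mathcal B(s_\HH)$. Bounding this by $2\sup_{s,s'\in\HH}|\mathcal B(s)-\mathcal B(s')|$ and then using the triangle inequality gives the stated $4\sup_{s\in\HH}|\mathcal B(s)|$.

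The main obstacle is not the algebra but well-posedness. The proof needs $s_\HH$ to exist, or it must be replaced by a near-minimizer with an arbitrarily small slack. It also needs $\mathcal L$, $\mathcal L_\Delta$ and $\Lhat_{N,\Delta}$ to be finite on $\HH$, which requires integrability of $\nabla\cdot(As)$ under $P$ and $P_\Delta$. I would first try to handle existence with an infimum/$\eta$-argument and send $\eta\to0$. Integrability would be assumed as part of the regularity in Assumption~\ref{ass:regularity}.
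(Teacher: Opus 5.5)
Your proposal is correct and matches the paper's proof: you use $\mathcal B\equiv0$ on $\Sbeta$ to reduce the excess divergence to $\mathcal L(\hat s)-\mathcal L(s_\HH)$, telescope through $\mathcal L_\Delta$ and $\Lhat_{N,\Delta}$ at both points, and in the non-compatible case bound the extra $-2[\mathcal B(\hat s)-\mathcal B(s_\HH)]$ by the triangle inequality. Your well-posedness remarks (existence of $s_\HH$ or a near-minimizer, integrability of $\nabla\cdot(As)$ under $P$ and $P_\Delta$) are sensible additions that the paper leaves implicit.
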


\emph{Proof.} For $s\in\HH\subset\Sbeta$, Proposition~\ref{prop:ism} gives $\D_A(s,s^\star)=\mathcal L(s)+C$, so $\D_A(\hat s,s^\star)-\D_A(s_\HH,s^\star)=\mathcal L(\hat s)-\mathcal L(s_\HH)$. Decomposing this difference by adding and subtracting $\mathcal L_\Delta$ and $\Lhat_{N,\Delta}$ at both $\hat s$ and $s_\HH$, the empirical optimization condition bounds the middle empirical difference by $\varepsilon_{\mathrm{opt}}$, the two empirical-process terms by $2\sup_{s\in\HH}|\Lhat_{N,\Delta}(s)-\mathcal L_\Delta(s)|$, and the two simulation terms by $2\sup_{s\in\HH}|\mathcal L_\Delta(s)-\mathcal L(s)|$. Adding $\D_A(s_\HH,s^\star)$ gives the stated bound. For non-trace-compatible $\HH$, the identity becomes $\D_A(s,s^\star)=\mathcal L(s)+C-2\mathcal B(s)$; comparing two elements introduces $-2[\mathcal B(s)-\mathcal B(s')]$, bounded by $2\sup_{s,s'\in\HH}|\mathcal B(s)-\mathcal B(s')|\le 4\sup_{s\in\HH}|\mathcal B(s)|$. \qed

\bigskip
This decomposition is deterministic: it separates approximation, statistical, simulation, and optimization errors without assuming finite-sample rates. Flux-compatible ISM removes boundary misspecification and the trainable boundary-estimator term; it does not eliminate the other three.

\subsection{Statistical role of the boundary parametrization}
\label{app:statistical}
\label{app:floor}

This appendix expands the statistical reading of Section~\ref{sec:theory}. The boundary parametrization is not a sampling heuristic; it changes the estimation problem itself. For a general score field $s$, define the boundary-bias functional $\mathcal B(s)=\E_t\!\int_{\bdry}\rho_t\,[n^\top A_t s-\beta_A]\,dS$, so that the integration by parts of Section~\ref{sec:ism-boundary} reads $\D_A(s,s^\star)=\mathcal L(s)+C-2\mathcal B(s)$ with $C$ independent of $s$: interior ISM estimates the score risk up to $\mathcal B$, and the choice of class decides what happens to $\mathcal B$.

The three treatments of Figure~\ref{fig:story} then separate statistically. \emph{Pinned} classes (zero-boundary, normal-zero, wrong-sign, wrong-normal-only; Appendix~\ref{app:baseline-formulas}) fix $n^\top A_ts$ on $\bdry$ to a $\theta$-independent function, so $\mathcal B$ is constant on the class and interior ISM shares its minimizer with $\D_A$ --- a \emph{faithful} within-class criterion. Their cost is \emph{representational bias}: unless the pinned trace is the no-flux trace, the class excludes $s^\star$ and a risk floor remains (Proposition~\ref{prop:floor}). An \emph{estimated} boundary term (local-time) has no representational bias but re-introduces the boundary integral as a trainable stochastic term, whose gradient variance exceeds exact quadrature by $\approx900\times$ at $\lambda=1$ (Appendix~\ref{app:e3-full}). The \emph{imposed} flux-compatible class has neither cost: $\mathcal B\equiv0$, the class contains $s^\star$ (Proposition~\ref{prop:nomisspec}), and the excess risk reduces to the four standard terms of Proposition~\ref{prop:excess}. Pinning any trace makes ISM faithful within its class; only the no-flux trace makes the class contain the truth.

For a pinned class the boundary-bias term of Proposition~\ref{prop:excess} is vacuous --- $n^\top A_ts$ is $\theta$-independent on $\bdry$, so $\sup_{s,s'\in\HH}|\mathcal B(s)-\mathcal B(s')|=0$ --- yet Experiment~1 shows such classes carry large score error. The floor locates that cost where it actually lives: in the approximation term, bounded \emph{below} on any class that pins the wrong trace.

\begin{proposition}[Trace-misspecification floor]
\label{prop:floor}
Let $G$ be a hyperrectangle and $F$ one of its faces, with $(d-1)$-dimensional measure $|F|$ and inward slab $S_\delta=\{x\in G:\dist(x,F)\le\delta\}$ ($\mathrm{vol}(S_\delta)=|F|\,\delta$). Assume, with constants uniform in $t$:
(i) $a_0I\preceq A_t(x)\preceq a_1I$ on $\bar G$;
(ii) every $s\in\HH$ satisfies $n^\top A_ts-\beta_A\equiv\Delta_F$ on $F$, with $|\Delta_F|>0$ (for the diagonal-$A$ OU families of Appendix~\ref{app:ctr-family}, $\Delta_F^2$ is the listed per-face CTR);
(iii) $\rho_t\ge\rho_{\min}>0$ on $S_{\delta_0}$;
(iv) $e_t:=s(t,\cdot)-s^\star(t,\cdot)$ is $L$-Lipschitz on $S_{\delta_0}$ for every $s\in\HH$ (for $L=0$ read $|\Delta_F|/(2a_1L)=+\infty$).
Then, with $\delta_\star:=\min(\delta_0,|\Delta_F|/(2a_1L))$, every $s\in\HH$ obeys
\[
\D_A(s,s^\star)\;\ge\;\rho_{\min}\,a_0\,|F|\,\delta_\star\,\frac{\Delta_F^2}{4a_1^2}\;>\;0 .
\]
\end{proposition}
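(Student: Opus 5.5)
The plan is to turn the fixed boundary offset $\Delta_F$ into a pointwise lower bound on $|e_t|$ throughout a thin slab next to $F$, and then integrate that bound against $\rho_t$ with the $A$-weight. Fix $t$ and $s\in\HH$, and write $e=e_t=s-s^\star$. By Proposition~\ref{prop:conormal}, $n^\top A_t s^\star=\beta_A$ on $F$. Subtracting this from hypothesis (ii) gives
\[
n^\top A_t e=\Delta_F\quad\text{on }F .
\]
Since $\|n\|=1$ and $\|A_t\|\le a_1$ by (i), Cauchy--Schwarz yields $|\Delta_F|\le a_1\|e\|$. Hence $\|e(y)\|\ge|\Delta_F|/a_1$ at every $y\in F$.

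\textbf{Propagation into the slab.} Take $x\in S_{\delta_\star}$ and let $y\in F$ be its orthogonal projection onto the face, so $\|x-y\|=\dist(x,F)\le\delta_\star$. Because $G$ is a box, $y$ lies in the closed face, and the segment from $y$ to $x$ stays in $S_{\delta_0}$. The Lipschitz bound (iv) then gives
\[
\|e(x)\|\ge\|e(y)\|-L\delta_\star\ge\frac{|\Delta_F|}{a_1}-\frac{|\Delta_F|}{2a_1}=\frac{|\Delta_F|}{2a_1},
\]
where the last step uses $\delta_\star\le|\Delta_F|/(2a_1L)$. When $L=0$ the bound holds trivially with $\delta_\star=\delta_0$.

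\textbf{Integration.} Using (i), $e^\top A_t e\ge a_0\|e\|^2\ge a_0\Delta_F^2/(4a_1^2)$ on $S_{\delta_\star}$. Restricting the expectation in $\D_A$ to the slab and using (iii),
\[
\E_{\rho_t}\bigl[e^\top A_te\bigr]\ge\int_{S_{\delta_\star}}\rho_t\,e^\top A_te\,dx\ge\rho_{\min}\,|F|\,\delta_\star\,\frac{a_0\Delta_F^2}{4a_1^2}.
\]
Here $\mathrm{vol}(S_{\delta_\star})=|F|\,\delta_\star$ provided $\delta_\star$ does not exceed the box side length, which I take to be implicit in the stated slab volume formula. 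All constants are uniform in $t$, so averaging over $t\sim q$ preserves the bound and gives the claimed inequality for $\D_A(s,s^\star)$.

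\textbf{Main obstacle.} The delicate step is converting the trace identity into a lower bound on the full vector $\|e\|$ at points of $F$. We only control the single linear functional $n^\top A_te$, and this has to be turned into a norm bound using the operator bound $a_1$. The whole estimate rests on two pointwise facts: the boundary identity must hold pointwise on $F$, which needs the classical pointwise traces of Assumption~\ref{ass:regularity}, and the Lipschitz condition (iv) must hold up to the boundary. Once those are in place, the remaining steps are routine.
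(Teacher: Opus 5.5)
Your proposal is correct and follows the paper's proof essentially step for step. The steps are the same: the no-flux identity from Proposition~\ref{prop:conormal} giving $n^\top A_t e_t=\Delta_F$ on $F$, Cauchy--Schwarz with $\|A_t n\|\le a_1$ to get $\|e_t\|\ge|\Delta_F|/a_1$ on $F$, Lipschitz propagation to the slab of depth $\delta_\star$ via the orthogonal foot point, and integration using $a_0$ and $\rho_{\min}$. Your remark that $\mathrm{vol}(S_{\delta_\star})=|F|\,\delta_\star$ implicitly requires $\delta_\star$ not to exceed the box width normal to $F$ is a fair caveat that the paper leaves unstated.
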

\noindent In words: a wrongly pinned trace forces a positive risk floor whose driver $\Delta_F^2$ is the closed-form CTR, computable before any experiment --- a wrong boundary class cannot be trained, or fed enough data, out of its bias.

\emph{Proof.} It suffices to bound $\E_{X_t\sim\rho_t}[e_t^\top A_te_t]$ for each $t$ and average over $t\sim q$. On the face, Proposition~\ref{prop:conormal} gives $n^\top A_ts^\star=\beta_A$, so $n^\top A_te_t=\Delta_F$ there. For $y\in F$,
\[
\begin{aligned}
|\Delta_F|&=|n^\top A_t(y)\,e_t(y)|\\
&\le\|A_t(y)\,n\|\,\|e_t(y)\|\le a_1\|e_t(y)\|,
\end{aligned}
\]
since $A_t(y)$ is symmetric with $\lambda_{\max}\le a_1$ and $\|n\|=1$; hence $\|e_t\|\ge|\Delta_F|/a_1$ on $F$. Any $x\in S_{\delta_\star}$ has an orthogonal foot point $y\in F$ with $\|x-y\|\le\delta_\star$ (hyperrectangle faces are flat), so the Lipschitz bound gives
\[
\|e_t(x)\|\;\ge\;\|e_t(y)\|-L\,\delta_\star\;\ge\;\frac{|\Delta_F|}{a_1}-\frac{|\Delta_F|}{2a_1}\;=\;\frac{|\Delta_F|}{2a_1}.
\]
Integrating,
\[
\begin{aligned}
\E[e_t^\top A_te_t]
&\ge a_0\!\int_G\rho_t\|e_t\|^2dx
\ge a_0\rho_{\min}\!\int_{S_{\delta_\star}}\|e_t\|^2dx\\
&\ge a_0\rho_{\min}|F|\,\delta_\star\,\frac{\Delta_F^2}{4a_1^2}.
\end{aligned}
\]
\qed

The bound is per-face: taking the maximum over the pinned faces sharpens it, and the slabs of distinct faces may be summed once $\delta_\star$ is below half their separation. The Lipschitz constant $L$ is finite for any neural class with bounded weights and smooth activations, and enters only through the shell depth: a steeper class can localize its error more sharply near $F$, but cannot remove it.

Two corollaries tie the identity to measurement. First, CTR upper-bounds the hidden boundary bias, so it is a statistical quantity, not merely a geometric diagnostic; second, on a compatible identifiable submodel the score-risk bound becomes a parameter-error bound.

\begin{corollary}[CTR controls the boundary bias]
\label{cor:ctrrho}
Let $d\mu_\partial=q(t)\,\rho_t(x)\,dS(x)\,dt$ on $[0,T]\times\bdry$, with total mass $\mu_\partial=\E_t\!\int_{\bdry}\rho_t\,dS$ (finite under Assumption~\ref{ass:regularity}), and let $\mathrm{CTR}_\rho(s)=\int|n^\top A_ts-\beta_A|^2\,d\mu_\partial$. Then
$|\mathcal B(s)|\le\sqrt{\mu_\partial\,\mathrm{CTR}_\rho(s)}$.
\end{corollary}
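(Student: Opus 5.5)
This is a direct application of Cauchy--Schwarz in $L^2(\mu_\partial)$. Define the residual $r(t,x)=n(x)^\top A_t(x)s(t,x)-\beta_A(t,x)$ on $[0,T]\times\bdry$.

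\textbf{Step 1: rewrite $\mathcal B(s)$ as an integral against $\mu_\partial$.} From the definition of $\mathcal B$ in Appendix~\ref{app:excess-risk}, with $\E_t$ taken over $t\sim q$,
\[
\mathcal B(s)=\int_0^T q(t)\int_{\bdry}\rho_t(x)\,r(t,x)\,dS(x)\,dt=\int r\,d\mu_\partial .
\]
For this to hold we need $\rho_t\ge0$ and $q\ge0$, so that $\mu_\partial$ is a nonnegative measure.

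\textbf{Step 2: apply Cauchy--Schwarz.} Write $r=1\cdot r$ and apply Cauchy--Schwarz in $L^2(\mu_\partial)$:
\[
|\mathcal B(s)|\le\int|r|\,d\mu_\partial\le\Bigl(\int1\,d\mu_\partial\Bigr)^{1/2}\Bigl(\int r^2\,d\mu_\partial\Bigr)^{1/2}.
\]
The first factor is $\mu_\partial^{1/2}$ and the second is $\mathrm{CTR}_\rho(s)^{1/2}$ by definition, which gives the claimed bound.

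\textbf{Step 3: check integrability.} This is the only real obstacle.
\begin{itemize}
\item The total mass $\mu_\partial$ is finite under Assumption~\ref{ass:regularity}: $\rho_t\in C^1(\bar G)$ is bounded on the compact closure, $\bdry$ has finite surface measure because $G$ is bounded with $C^2$ or piecewise-flat boundary, and $q$ is a probability density. If one wants uniformity in $t$, this also needs $\sup_t\|\rho_t\|_\infty<\infty$ over $[t_{\min},T]$.
\item If $\mathrm{CTR}_\rho(s)=\infty$, the inequality holds trivially.
\item Otherwise $r\in L^2(\mu_\partial)\subset L^1(\mu_\partial)$ because the measure is finite. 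So $\mathcal B(s)$ is well defined and the manipulations in Steps 1--2 are justified.
\end{itemize}

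For hyperrectangles, the integrals are taken face-wise. Edges and corners are null sets for $dS$, so they are ignored.
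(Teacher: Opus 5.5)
Your proposal is correct and follows the paper's proof: the paper also writes $\mathcal B(s)=\int(n^\top A_ts-\beta_A)\,d\mu_\partial$ and applies Cauchy--Schwarz in $L^2(\mu_\partial)$. Your integrability check (finite $\mu_\partial$, hence $L^2(\mu_\partial)\subset L^1(\mu_\partial)$) is extra care that the paper leaves implicit.
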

\emph{Proof.} $\mathcal B(s)=\int(n^\top A_ts-\beta_A)\,d\mu_\partial$; apply Cauchy--Schwarz in $L^2(\mu_\partial)$. \qed

\begin{corollary}[Parameter estimation in identifiable submodels]
\label{cor:submodel}
Let $\HH_\Theta=\{s_\vartheta:\vartheta\in\Theta\}\subset\Sbeta$ with $s^\star=s_{\vartheta_0}\in\HH_\Theta$, and suppose the local identifiability bound $\D_A(s_\vartheta,s_{\vartheta_0})\ge c\,\|\vartheta-\vartheta_0\|^2$ holds on a neighborhood of $\vartheta_0$ containing the estimator $\hat\vartheta$. Then, in the notation of Proposition~\ref{prop:excess},
\[
\begin{aligned}
\|\hat\vartheta-\vartheta_0\|^2
&\le c^{-1}\Bigl[2\sup_{s\in\HH_\Theta}
  |\Lhat_{N,\Delta}(s)-\mathcal L_\Delta(s)|\\
&\qquad+2\sup_{s\in\HH_\Theta}
  |\mathcal L_\Delta(s)-\mathcal L(s)|
  +\varepsilon_{\mathrm{opt}}\Bigr].
\end{aligned}
\]
\end{corollary}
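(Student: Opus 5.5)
The plan is to combine the excess-risk decomposition of Proposition~\ref{prop:excess}, applied to the class $\HH_\Theta$, with the identifiability bound. Since $\HH_\Theta\subset\Sbeta$ and $s^\star=s_{\vartheta_0}\in\HH_\Theta$, the class contains the truth. The minimizer $s_\HH$ of $\D_A$ over $\HH_\Theta$ can therefore be taken to be $s_{\vartheta_0}$, so the approximation term vanishes: $\D_A(s_\HH,s^\star)=\D_A(s^\star,s^\star)=0$. We also use $\D_A\ge0$ by positive definiteness of $A_t$.

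With the approximation term removed, Proposition~\ref{prop:excess} with $\hat s=s_{\hat\vartheta}$ gives
\[
\D_A(s_{\hat\vartheta},s_{\vartheta_0})\le 2\,\mathrm{Stat}+2\,\mathrm{Sim}+\varepsilon_{\mathrm{opt}},
\]
where the suprema are taken over $\HH_\Theta$. Here we need that $\hat s$ is the near-minimizer of $\Lhat_{N,\Delta}$ over $\HH_\Theta$ in the sense fixed before Proposition~\ref{prop:excess}; this holds by the notation assumed in the corollary.

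Next, by hypothesis $\hat\vartheta$ lies in the neighborhood where local identifiability holds. This gives $c\|\hat\vartheta-\vartheta_0\|^2\le\D_A(s_{\hat\vartheta},s_{\vartheta_0})$. Dividing by $c>0$ yields the stated bound.

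The only delicate step is justifying $s_\HH=s^\star$. It requires $\D_A(\cdot,s^\star)$ to be minimized within $\HH_\Theta$ at $\vartheta_0$, which is immediate since $\D_A\ge0=\D_A(s^\star,s^\star)$. The proof of Proposition~\ref{prop:excess} uses only $\D_A(s,s^\star)=\mathcal L(s)+C$ on $\Sbeta$ (Proposition~\ref{prop:ism}), so it goes through for any comparison element of $\HH_\Theta$. One may therefore plug $s^\star$ in directly as the comparator even if minimizers are non-unique. Everything else is bookkeeping.
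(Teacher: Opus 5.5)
Your proposal is correct and is the paper's own argument. Because $s^\star=s_{\vartheta_0}\in\HH_\Theta$, the approximation term of Proposition~\ref{prop:excess} vanishes, and the identifiability bound is then chained in and divided by $c$; your remarks that $\D_A\ge 0$ makes $s^\star$ a minimizer, and that any element of $\HH_\Theta$ can serve as the comparator, just spell out steps the paper leaves implicit.
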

\emph{Proof.} Since $s^\star\in\HH_\Theta$, the approximation term of Proposition~\ref{prop:excess} vanishes; chain $c\|\hat\vartheta-\vartheta_0\|^2\le\D_A(s_{\hat\vartheta},s^\star)$ with the remaining terms. \qed

We state Corollary~\ref{cor:submodel} for trace-compatible classes only. For a pinned class with the wrong trace, $s_{\vartheta_0}\notin\HH_\Theta$: the estimator concentrates near a pseudo-true parameter instead, and Proposition~\ref{prop:floor} lower-bounds the score risk at any such parameter, so no consistency statement around $\vartheta_0$ is available in that case.

\subsection{Closed-form CTR family across architectural classes}
\label{app:ctr-family}

A direct corollary of Propositions~\ref{prop:conormal} and~\ref{prop:hyperrect} is that the conormal trace each architectural class enforces on $\bdry$ is a fixed function of the class, the drift, and (where relevant) $\nabla\cdot A_t$. For the box $[-1,1]^d$ with diagonal state-dependent diffusion $A_\kappa(x)=\diag(1+\kappa x_i)$ ($|\kappa|<1$) and drift $b(x)=-\lambda x$, routine computation gives the face-average squared trace residual
{\small
\[
\begin{aligned}
&\mathrm{CTR}_{\mathrm{correct}}=0,\quad
\mathrm{CTR}_{\mathrm{omit}}=\kappa^2,\\
&\mathrm{CTR}_{\substack{\mathrm{normal}\\\mathrm{zero}}}
=\lambda^2+\kappa^2,\\
&\mathrm{CTR}_{\substack{\mathrm{wrong}\\\mathrm{sign}}}
=4(\lambda^2+\kappa^2).
\end{aligned}
\]
}
On the simplex $\Delta^{K-1}$ with $A=aP$ and OU drift toward the barycenter, the analogous computation (Appendix~\ref{app:simplex-impl}) gives
{\small
\[
\begin{aligned}
&\mathrm{CTR}_{\mathrm{flux}}=0,\\
&\mathrm{CTR}_{\mathrm{zero\text{-}flux}}
=\mathrm{CTR}_{\mathrm{full\text{-}boundary\text{-}zero}}=\lambda^2/K^2,\\
&\mathrm{CTR}_{\mathrm{wrong\text{-}sign}}=4\lambda^2/K^2,
\end{aligned}
\]
in the coordinate-CTR convention defined below. Every alternative class is therefore fixed at its own non-zero deterministic value. Because the face-vanishing factors $(1-x_i^2)$ on the box and $M_\Delta(x)$ on the simplex (Appendix~\ref{app:simplex-impl}) algebraically eliminate the trainable backbone $h_\theta$ from $v\big|_{\bdry}$, the on-boundary value of $v$ does not depend on $\theta$ at all and CTR is architectural by construction, not trainable. Section~\ref{sec:exp-trace-law} verifies this directly: across the 280 state-dependent-diffusion training runs (140 at $d=2$, 140 at $d=4$), the post-training measured CTR equals its closed-form predicted value to floating-point precision in every cell, with zero seed variance for every baseline and every $\kappa$, regardless of the magnitude the loss reaches during training (the four baselines' ISM loss trajectories span roughly $-8$ to $+17$ across the 280 cells: $-4$ to $+8$ at $d=2$ and $-8$ to $+17$ at $d=4$; the absolute scale grows with $d$ because the box ISM objective accumulates contributions from more faces).

\paragraph{Tangential boundary error and the unit-normal convention.} The tangential boundary error $\mathrm{TBE}_A(s)=\E_t\!\int_{\bdry}\|P_T^A(s-s^\star)\|_A^2\,d\nu_t$, with $A$-tangent projection $P_T^A z=z-n\,(n^\top A z)/(n^\top A n)$, measures whether a method suppresses the tangential components the no-flux condition leaves free. The CTR formulas above use the \emph{coordinate-CTR convention}: on the box (axis-aligned faces, $\|n\|=1$) the coordinate-normal trace $v_i=(As)_i$ and the unit-normal trace $n^\top As$ coincide. On the simplex $\Delta^{K-1}$ the intrinsic outward unit normal of the face $\{x_i=0\}$ inside the plane $\mathbf 1^\top x=1$ is $\nu_i=-Pe_i/\|Pe_i\|$ with $\|Pe_i\|^2=(K-1)/K$, so the unit-normal CTR is a factor $K/(K-1)$ larger than the coordinate CTR; we report coordinate CTR throughout, and the unit-normal value follows by multiplying by $K/(K-1)$ face-wise.

\section{Experimental details and additional results}

\subsection{Experiment-code index}
The main text refers to experiments by number; internally the runs carry the short codes used in the appendix headers and the supplementary code, collected here. E1 = box trace characterization, isotropic and state-dependent (Section~\ref{sec:exp-trace-law}); A1/A2 = isotropic and anisotropic conormal ablations; A3 = boundary-hugging leakage; E3 = estimator-variance comparison; E4 = five-mode GMM; E5a/E5b = simplex $K=3$ / moderate-$K$; E6 = state-dependent-box CTR; T4 = dimension stress; P1 = $(\kappa,m)$ phase diagram (Section~\ref{sec:exp-lowK}); P2 = real-design posterior (Section~\ref{sec:exp-realdesign}); P3 = SDF polygon (Section~\ref{sec:exp-sdf}); P5 = single-active-face negative controls. There is no P4.

\paragraph{Computing environment.} The reported GPU experiments were run on a local Windows workstation with an NVIDIA GeForce RTX 5060 Ti (16 GB). Per-run environment snapshots in the code-and-data package record Python 3.12.13 and PyTorch 2.11.0+cu128; commands, configurations, and seeds are stored with the corresponding run artifacts.

\subsection{E1 full main table}
\label{app:e1-full}
The primary cell $(\lambda,\epsilon)=(3,0.5)$, with all 7 method classes including the exact-score oracle and unconstrained-ISM strawman, is reported below. Means $\pm$ standard deviations are over 5 seeds.

\begin{table*}[t]
\centering
\small
\begin{tabular}{lrrrr}
\toprule
method & bulk & shell$_{0.02}$ & normal trace & tangential \\
\midrule
oracle (true score) & 0 & 0 & 1.89e--14 $\pm$ 0 & 0 \\
unconstrained ISM & 54.2 $\pm$ 2 & 4650 $\pm$ 100 & 37\,900 $\pm$ 1000 & 278 $\pm$ 20 \\
unconstr. + bndry quad. & 2.49e--03 $\pm$ 4e--04 & 5.29e--02 $\pm$ 1e--02 & 5.19e--02 $\pm$ 8e--03 & 3.38e--02 $\pm$ 7e--03 \\
zero-boundary (Fishman) & 0.163 $\pm$ 0.001 & 7.17 $\pm$ 0.02 & 9.00 $\pm$ 0 & 1.23 $\pm$ 0 \\
normal-zero & 0.107 $\pm$ 0.001 & 6.31 $\pm$ 0.02 & 9.00 $\pm$ 0 & 0.516 $\pm$ 0.01 \\
wrong-sign trace & 0.376 $\pm$ 0.003 & 24.1 $\pm$ 0.1 & 36.00 $\pm$ 0 & 1.67 $\pm$ 0.07 \\
\textbf{flux-compatible (ours)} & \textbf{5.42e--04} $\pm$ 5e--05 & \textbf{2.06e--03} $\pm$ 7e--04 & \textbf{0} & \textbf{2.39e--03} $\pm$ 8e--04 \\
\bottomrule
\end{tabular}
\caption{E1 full table at $(\lambda,\epsilon)=(3,0.5)$, 5 seeds.}
\label{tab:e1-full}
\end{table*}

\begin{figure}[t]
\centering
\includegraphics[width=0.9\linewidth]{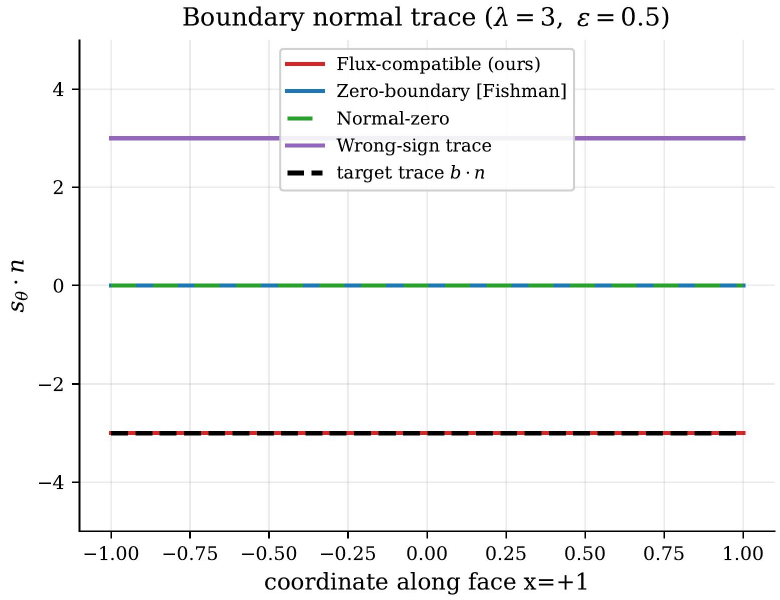}
\caption{Only the flux-compatible construction matches the target normal trace $b\cdot n=-3$. E1 normal trace $s_\theta\cdot n$ along the face $x=+1$ at $(\lambda,\epsilon)=(3,0.5)$ (target dashed). Boundary-zero and normal-zero pin to $0$; wrong-sign pins to $+3$.}
\label{fig:normal-trace-app}
\end{figure}

\begin{figure*}[t]
\centering
\includegraphics[width=0.92\textwidth]{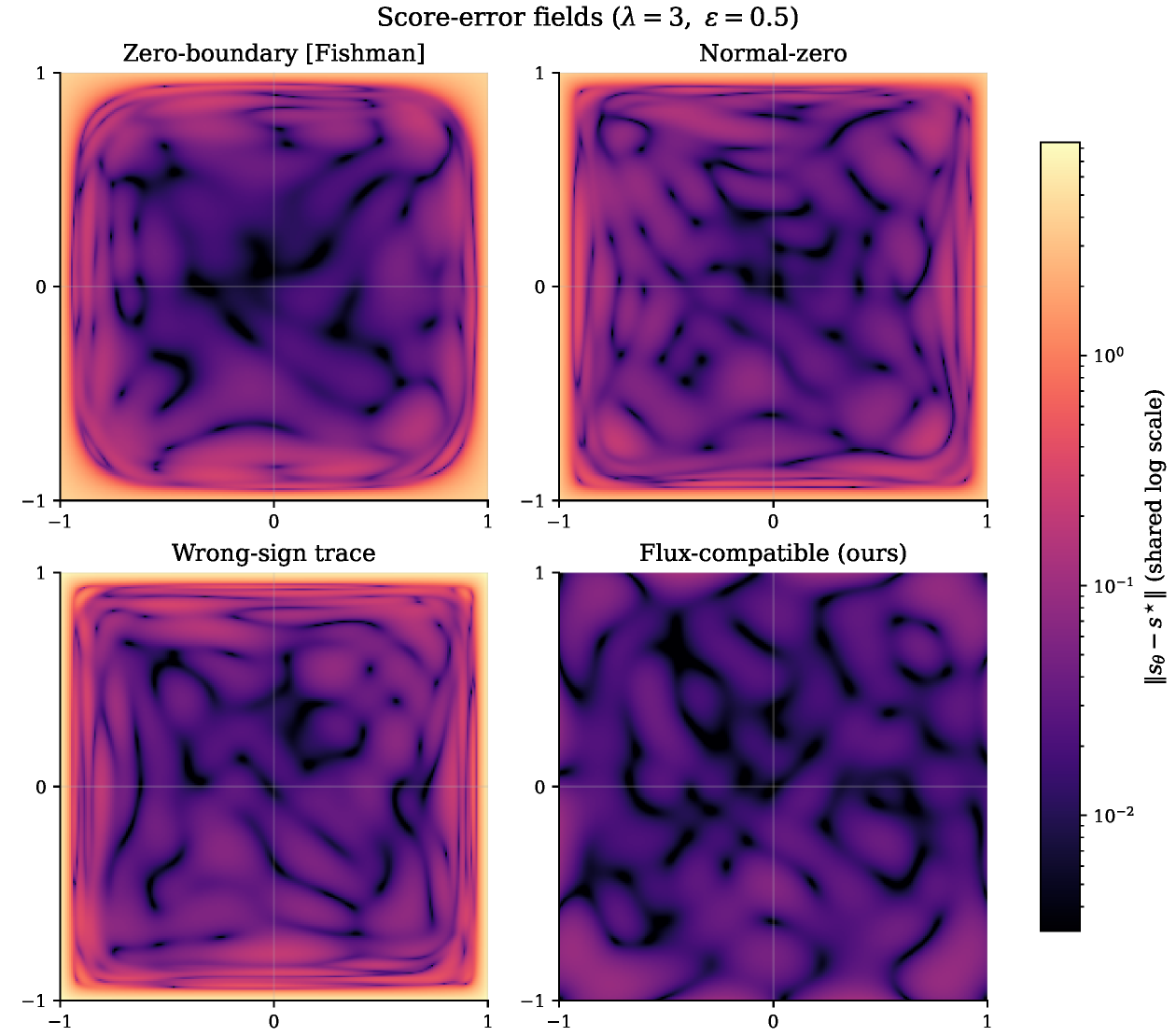}
\caption{Trace-violating classes show bright boundary error halos; the flux-compatible field is uniformly dark up to the boundary. Score-error magnitude fields $\|s_\theta-s^\star\|$ on $[-1,1]^2$ at $(\lambda,\epsilon)=(3,0.5)$, shared log scale (referenced from Section~\ref{sec:exp-trace-law}).}
\label{fig:error-fields}
\end{figure*}

\begin{figure}[t]
\centering
\includegraphics[width=0.9\linewidth]{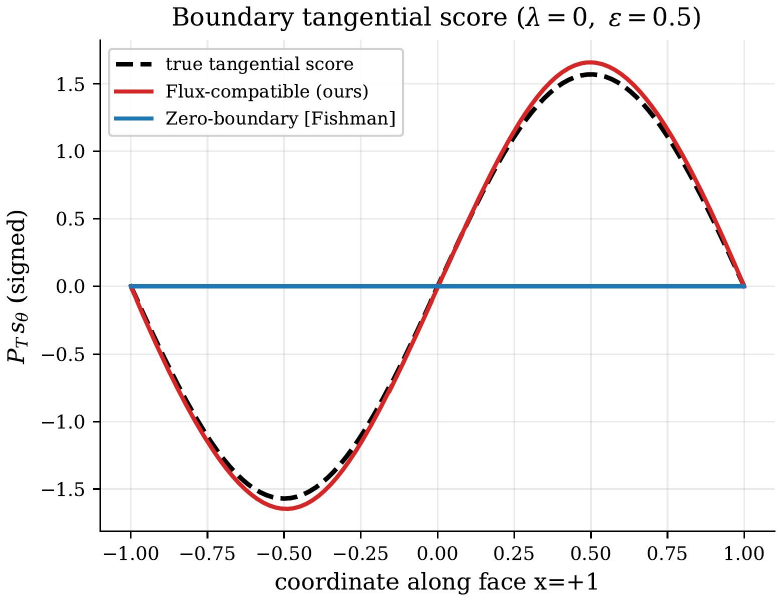}
\caption{Zero-boundary parametrizations erase the tangential component the no-flux condition leaves free; the flux-compatible construction retains it. Tangential boundary trace at $(\lambda,\epsilon)=(0,0.5)$, where the normal trace is zero but the true tangential boundary score is nonzero (referenced from Section~\ref{sec:exp-trace-law}).}
\label{fig:tangential}
\end{figure}

\subsection{Full $(\lambda,\epsilon)$ phase grid}
\label{app:phase-grid}
The full $(\lambda,\epsilon)$ analytic grid (4 reflection strengths $\lambda\in\{0,1,3,10\}$ $\times$ 3 anisotropies $\epsilon\in\{0,0.2,0.5\}$ $\times$ 7 model classes $\times$ 5 seeds = 420 runs) underlies Figures~\ref{fig:phase-grid}--\ref{fig:phase-advantage}. In drift-dominated cells $(\lambda\ge 1,\epsilon=0)$, the flux-compatible class has the largest advantage (2.0--2.8 orders of magnitude); when $\epsilon>0$, tangential and oscillatory approximation errors reduce the relative gap to 0.3--0.7 orders.

\begin{figure*}[t]
\centering
\includegraphics[width=\textwidth]{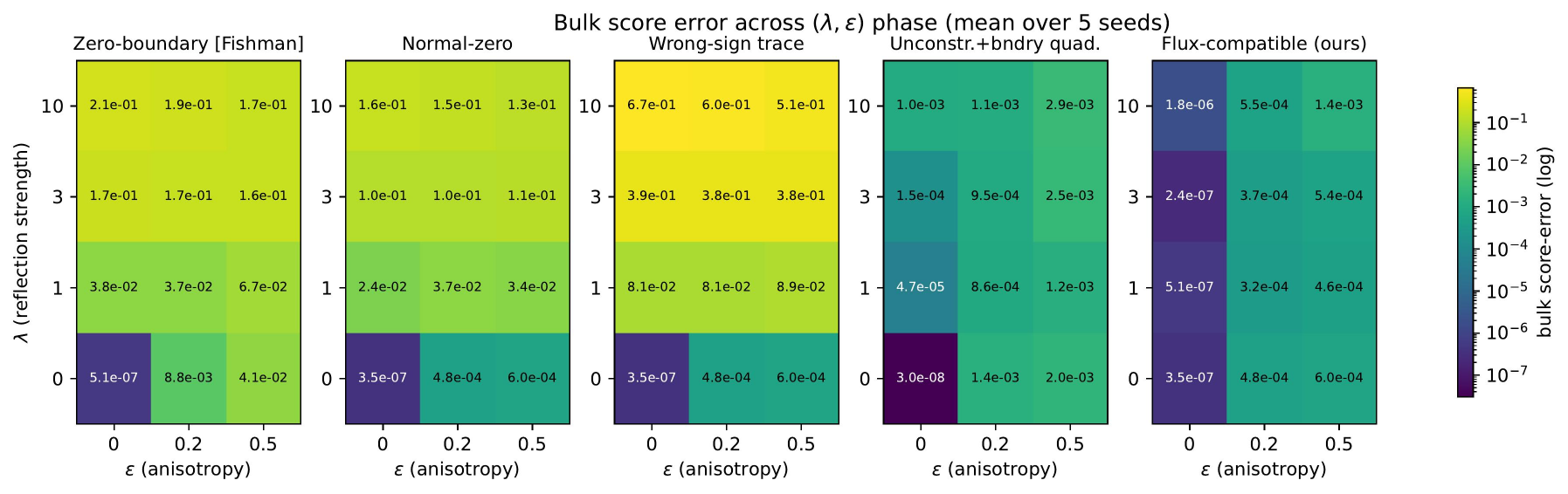}
\caption{Trace-violating classes saturate at $10^{-1}$--$10^{0}$ for $\lambda\ge 1$, while flux-compatible and the quadrature oracle stay at or below $10^{-3}$. Bulk score-matching error $\E\|s_\theta-s^\star\|^2$ on the full $(\lambda,\epsilon)$ grid, mean over 5 seeds, shared log color scale.}
\label{fig:phase-grid}
\end{figure*}

\begin{figure}[t]
\centering
\includegraphics[width=0.7\linewidth]{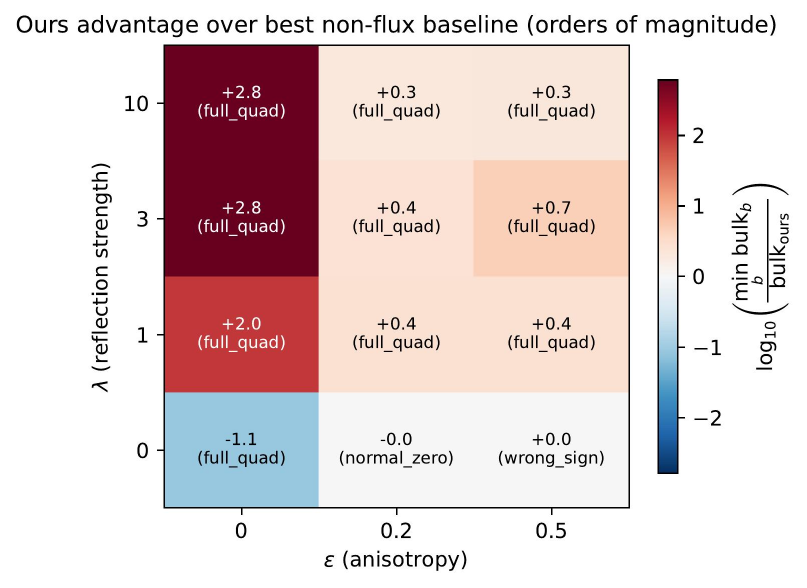}
\caption{Positive (red) cells mark where flux-compatible beats the lowest-error non-flux baseline in that cell. Per-cell ratio $\log_{10}\!\left(\min_{\text{baseline}}\mathrm{bulk}_b/\mathrm{bulk}_{\mathrm{ours}}\right)$ over the same $(\lambda,\epsilon)$ grid.}
\label{fig:phase-advantage}
\end{figure}

\subsection{A2 anisotropic conormal full table}
\label{app:a2-full}

On $[-1,1]^2$ with drift $b=-\lambda x$, the $A$-compatible oracle is
$\log\rho_{\lambda,\epsilon,A}(x,y)=-\tfrac\lambda2 x^\top A^{-1}x+
\epsilon\cos(\pi x)\cos(\pi y)-\log Z_A$. The oscillatory normal derivative
vanishes on every face, so $n^\top A\nabla\log\rho=n^\top b$ and the conormal
target is $\beta_A=-\lambda$; $A=I$ recovers the isotropic companion.

\begin{table*}[t]
\centering
\small
\begin{tabular}{lrrrr}
\toprule
method & predicted CTR & measured CTR & score err (A) & shell$_{0.05}$ (A) \\
\midrule
\textbf{correct conormal} & 0 & \textbf{0 $\pm$ 0} & \textbf{2.79e--3 $\pm$ 3.9e--4} & \textbf{3.09e--3 $\pm$ 6.1e--4} \\
wrong normal-only & 40.5 & 40.50 $\pm$ 0 & 2.13e--1 $\pm$ 9.2e--3 & 4.195 $\pm$ 2.0e--1 \\
zero-boundary & 9 & 9.00 $\pm$ 0 & 2.90e--1 $\pm$ 6.5e--3 & 3.959 $\pm$ 3.3e--2 \\
wrong-sign conormal & 36 & 36.00 $\pm$ 0 & 3.92e--1 $\pm$ 4.3e--3 & 7.667 $\pm$ 8.2e--2 \\
\bottomrule
\end{tabular}
\caption{A2 anisotropic conormal test, $A=\diag(1,4)$, $\lambda=3$, $\epsilon=0.5$, mean $\pm$ std over 5 seeds.}
\label{tab:a2-full}
\end{table*}

\subsection{E3 estimator-variance details}
\label{app:e3-full}
The reported gradient covariance trace is computed with a mini-batch of $J=256$ samples and reported as the mean over 5 seeds at $\lambda=3,\epsilon=0.5$. The local-time estimator uses one-step reflected-Euler overshoots calibrated against the analytic boundary integral; the exact-quadrature oracle uses an analytic surface integral on $[-1,1]^2$. For the unconstrained-architecture groups, the network is initialized from a converged \texttt{full\_quad} checkpoint to suppress interior variance and isolate the estimator contribution. A complementary $\lambda$-sweep shows variance ratios between the local-time estimator and exact quadrature of $\approx 900\times$ at $\lambda=1$, falling to $43\times$ at $\lambda=10$ ($915\times$, $537\times$, $42.7\times$ at $\lambda=1,3,10$). The boundary-gradient ratio $R_\partial$ for ours plus local time is $0$ for every $h$, confirming that a trace-compatible score nullifies the trainable boundary term as predicted by Proposition~\ref{prop:ism}.

\subsection{Parameter recovery in the identifiable submodel (Figure~\ref{fig:stat-triptych}a)}
\label{app:param-recovery}
We use the $A{=}I$ analytic density $\log\rho_\varphi(x,y)=-\tfrac\lambda2\|x\|^2+\varphi\,c(x,y)-\log Z_\varphi$ with $c=\cos(\pi x)\cos(\pi y)$, drift $b=-\lambda x$ held \emph{fixed and known} (making $\varphi$ the only unknown; letting $\lambda$ vary would move the trace target and reintroduce a non-constant boundary term), and truth $\varphi_0=0.5$, $\lambda=3$. The true score is $s^\star=b+\varphi_0 k$ with $k=\nabla c$; because $n^\top k=0$ on every face (as $\sin(\pm\pi)=0$), the compatible family $s_\varphi=b+\varphi k$ lies in $\Sbeta$ for all $\varphi$ and contains the truth, while the Fishman family $s^{\mathrm F}_\varphi=(1-x^2)(1-y^2)(b+\varphi k)$ cannot. Both are affine in $\varphi$, so the empirical interior-ISM objective is an exact quadratic and the estimator is closed-form, $\hat\varphi=-\widehat{\E}[w\cdot u+\nabla\!\cdot\!u]/\widehat{\E}[\|u\|^2]$ with $s=w+\varphi u$. By Gauss--Legendre quadrature the compatible population minimizer equals $\varphi_0$ exactly, whereas the Fishman pseudo-true amplitude is $\varphi^\star_{\mathrm F}=0.760$ (bias $+0.260$), with $A$-weighted risk floor $\min_\varphi\D_A(s^{\mathrm F}_\varphi,s^\star)=1.38$. Over $n\in\{5\!\times\!10^2,\ldots,10^5\}$ and 20 seeds (paired data), the compatible MSE tracks $\mathrm{avar}_c/n$ ($\mathrm{avar}_c=4.92$) and the Fishman MSE plateaus at $\mathrm{bias}^2$; the run is pure-numpy and deterministic.

\subsection{Sample-size sweep and the finite-$N$ stabilizer (Figure~\ref{fig:stat-triptych}b,c)}
\label{app:n-sweep}
At the primary cell $(\lambda,\epsilon)=(3,0.5)$ on $[-1,1]^2$ we sweep the training-set size $N\in\{5\!\times\!10^2,10^3,5\!\times\!10^3,2\!\times\!10^4,10^5\}$ with a fixed i.i.d.\ data bank drawn once per $(N,\text{seed})$; each minibatch resamples from the bank, so $N$ is the statistical sample size (the streaming experiments elsewhere correspond to $N=\infty$). All four boundary classes share the bank, initialization, and batch sequence per seed (paired), 5 seeds, $2\!\times\!10^4$ steps. On a fixed bank the empirical interior-ISM objective is unbounded below --- the optimizer digs divergence wells at the bank points, the finite-data form of the moderate-$K$ pathology of Appendix~\ref{app:simplex-pathology} --- so we apply the same \emph{non-capping} Jacobian stabilizer $\lambda_{\mathrm{Jac}}\E\|\nabla s\|_F^2$ as Experiment~4 ($\lambda_{\mathrm{Jac}}=10^{-2}$, one Hutchinson probe, identical for every baseline; a dose sweep $\{0,10^{-3},10^{-2}\}$ and the un-stabilized pathology run are in the supplementary material). The boundary-shell risk (Figure~\ref{fig:stat-triptych}b) descends from $32.0$ to $0.12$ for flux-compatible and plateaus at $6.4/6.0/22.8$ for zero-boundary/normal-zero/wrong-sign (CTR $9/9/36$), a $50$--$190\times$ separation at $N=10^5$; leakNone and ProjDisp separate identically ($0.16/0.04$ vs.\ $0.83$--$1.00/0.61$--$1.40$). Throughout, and even in the un-stabilized blow-up, the measured CTR stays exactly $0/9/9/36$ with zero seed variance (Figure~\ref{fig:stat-triptych}c), isolating the structural boundary guarantee from the interior objective.

\subsection{E4 GMM independent-seed table}
\label{app:e4-full}

\begin{table*}[t]
\centering
\small
\begin{tabular}{lrrrr}
\toprule
baseline & $\lambda=0$ & $\lambda=1$ & $\lambda=3$ & $\lambda=10$ \\
\midrule
ours & 1.27e--2 $\pm$ 2.5e--3 & \textbf{8.64e--3 $\pm$ 1.7e--3} & \textbf{1.15e--2 $\pm$ 4.0e--3} & 2.92e--2 $\pm$ 8.3e--3 \\
normal-zero & 9.98e--3 $\pm$ 1.4e--3 & 9.92e--3 $\pm$ 2.6e--3 & 1.18e--2 $\pm$ 3.6e--3 & 2.55e--2 $\pm$ 5.4e--3 \\
Fishman & 1.67e--2 $\pm$ 6.8e--3 & 1.70e--2 $\pm$ 3.8e--3 & 1.77e--2 $\pm$ 2.8e--3 & 2.55e--2 $\pm$ 5.2e--3 \\
wrong-sign & 1.23e--2 $\pm$ 1.8e--3 & 1.55e--2 $\pm$ 6.2e--3 & 3.23e--2 $\pm$ 1.3e--2 & 4.16e--2 $\pm$ 2.2e--2 \\
unconstrained & 0.156 $\pm$ 9.0e--2 & 0.115 $\pm$ 2.3e--2 & 0.170 $\pm$ 3.7e--2 & 3.55e--2 $\pm$ 1.3e--2 \\
unconstrained + lt & 0.202 $\pm$ 4.2e--2 & 1.88e--2 $\pm$ 3.9e--3 & 2.54e--2 $\pm$ 5.8e--3 & 2.78e--2 $\pm$ 8.6e--3 \\
\bottomrule
\end{tabular}
\caption{E4 2D 5-mode GMM independent-seed results: MMD$^2$ to data, mean $\pm$ std over 5 independent per-baseline seeds. Trace error is sampling-distribution-free. At $\lambda=0$ the prescribed conormal trace vanishes, so ours, normal-zero, and wrong-sign reduce to the same trace-compatible architecture (the baseline formulas of Appendix~\ref{app:baseline-formulas}).}
\label{tab:e4-full}
\end{table*}

Trace error for the GMM experiment: ours has $0$ for every $\lambda$; normal-zero and Fishman scale as $\lambda^2$; wrong-sign as $4\lambda^2$; unconstrained diverges ($\sim 10^3$); unconstrained+lt is $0.33$--$1.63$ for $\lambda\ge 1$ (and degenerates at $\lambda=0$ because the local-time calibration vanishes).

\subsection{A3 boundary-hugging full leakage tables}
\label{app:a3-full}

This appendix collects the full boundary-hugging leakage results behind the fivefold separation quoted in Section~\ref{sec:exp-lowK}: the fold-versus-no-fold dissociation (Figure~\ref{fig:fold-vs-none}), the five-seed no-reflection leakage table at two drift strengths (Table~\ref{tab:a3-full}), and the per-baseline scatter grids that show why hard-reflected metrics look alike while leakNone separates the classes.

\begin{figure*}[t]
\centering
\includegraphics[width=0.92\textwidth]{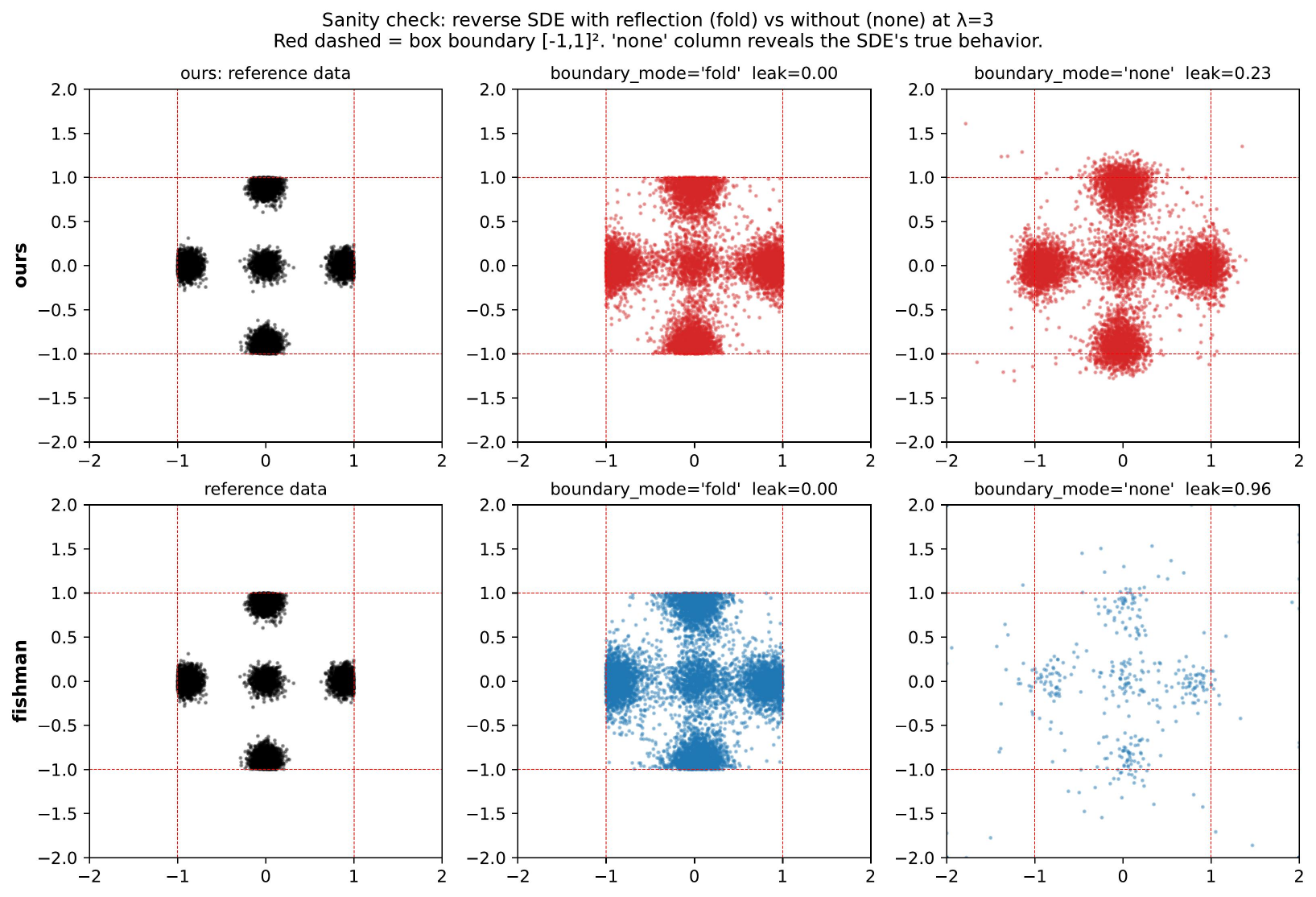}
\caption{Hard reflection masks a wrong boundary trace: with the fold, ours and Fishman look alike; without it, Fishman leaks while ours does not. Boundary-hugging GMM at $(d=2,\,\lambda=3,\,d_{\mathrm{bdry}}=0.05)$. Top row (ours): data; reverse SDE with fold (leak fraction $0.000$); without fold ($0.223$, matching Table~\ref{tab:a3-full}). Bottom row (Fishman): same panels, no-fold leak $0.964$ (seed 0; the Table~\ref{tab:a3-full} five-seed mean is $0.962$). Domain boundary $\pm 1$ dashed; everything outside is a leaked particle.}
\label{fig:fold-vs-none}
\end{figure*}

\begin{table*}[t]
\centering
\small
\begin{tabular}{llrrrr}
\toprule
$\lambda$ & baseline & $d_{\mathrm{bdry}}=0.30$ & $0.20$ & $0.10$ & $0.05$ \\
\midrule
3 & ours & \textbf{0.185 $\pm$ 0.003} & \textbf{0.191 $\pm$ 0.006} & \textbf{0.215 $\pm$ 0.004} & \textbf{0.223 $\pm$ 0.014} \\
3 & normal-zero & 0.986 $\pm$ 0.002 & 0.987 $\pm$ 0.003 & 0.989 $\pm$ 0.002 & 0.989 $\pm$ 0.002 \\
3 & Fishman & 0.927 $\pm$ 0.009 & 0.948 $\pm$ 0.007 & 0.956 $\pm$ 0.009 & 0.962 $\pm$ 0.003 \\
3 & wrong-sign & 0.991 $\pm$ 0.002 & 0.993 $\pm$ 0.003 & 0.993 $\pm$ 0.001 & 0.994 $\pm$ 0.001 \\
\midrule
10 & ours & \textbf{0.019 $\pm$ 0.004} & \textbf{0.047 $\pm$ 0.003} & \textbf{0.088 $\pm$ 0.007} & \textbf{0.104 $\pm$ 0.007} \\
10 & normal-zero & 0.632 $\pm$ 0.023 & 0.570 $\pm$ 0.014 & 0.630 $\pm$ 0.069 & 0.675 $\pm$ 0.031 \\
10 & Fishman & 0.517 $\pm$ 0.047 & 0.585 $\pm$ 0.028 & 0.572 $\pm$ 0.078 & 0.611 $\pm$ 0.048 \\
10 & wrong-sign & 0.630 $\pm$ 0.059 & 0.696 $\pm$ 0.088 & 0.751 $\pm$ 0.069 & 0.709 $\pm$ 0.057 \\
\bottomrule
\end{tabular}
\caption{No-reflection leakage on boundary-hugging GMM, mean $\pm$ std over 5 seeds (lower is better).}
\label{tab:a3-full}
\end{table*}

\begin{figure*}[t]
\centering
\includegraphics[width=\textwidth]{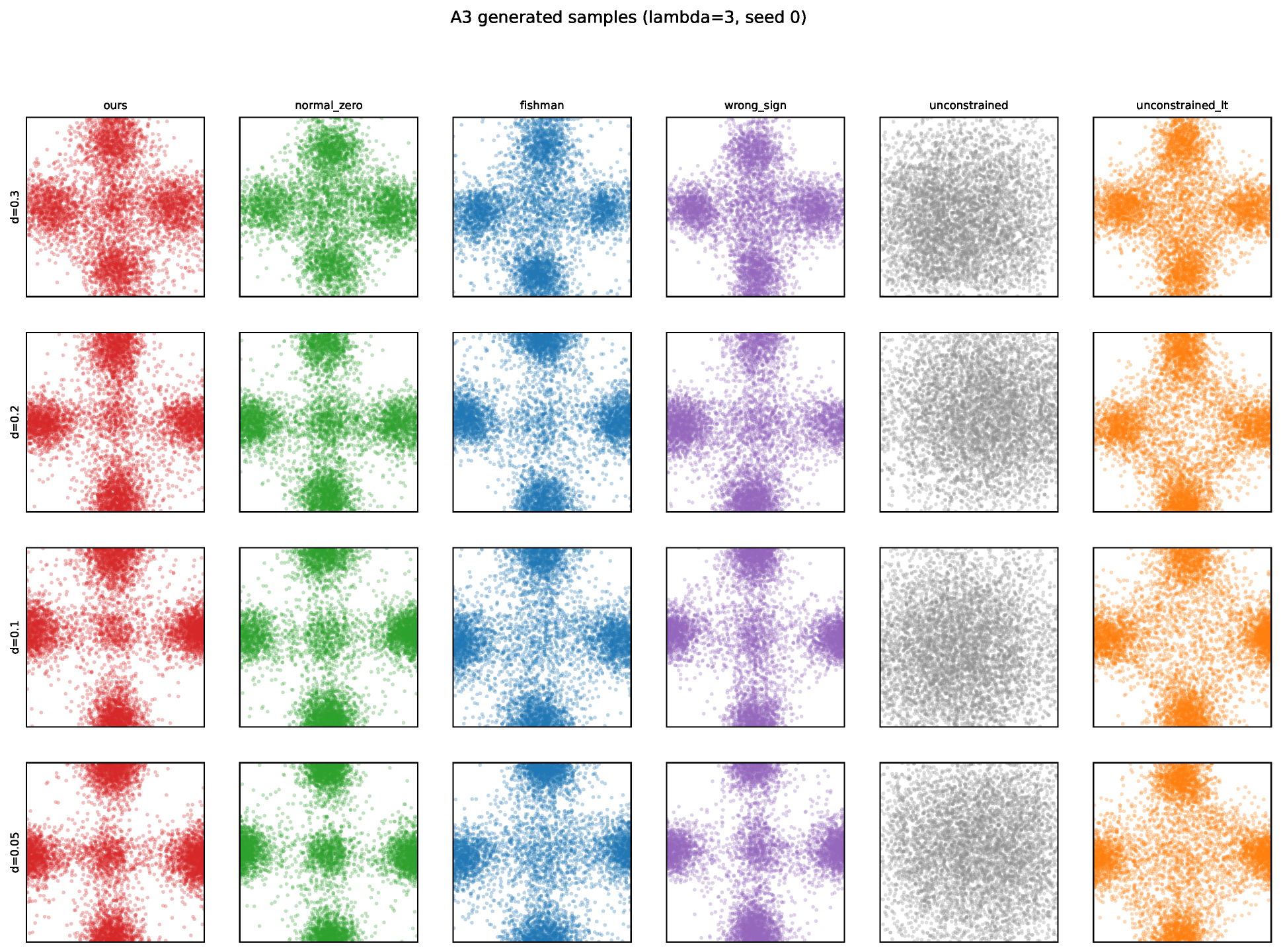}
\caption{Under hard reflection all flux-respecting methods look visually similar; no-reflection leakage exposes the difference (Table~\ref{tab:a3-full}). A3 scatter grid at $\lambda=3$, 4 values of $d_{\mathrm{bdry}}$ $\times$ 6 baselines.}
\label{fig:a3-scatter-l3}
\end{figure*}

\begin{figure*}[t]
\centering
\includegraphics[width=\textwidth]{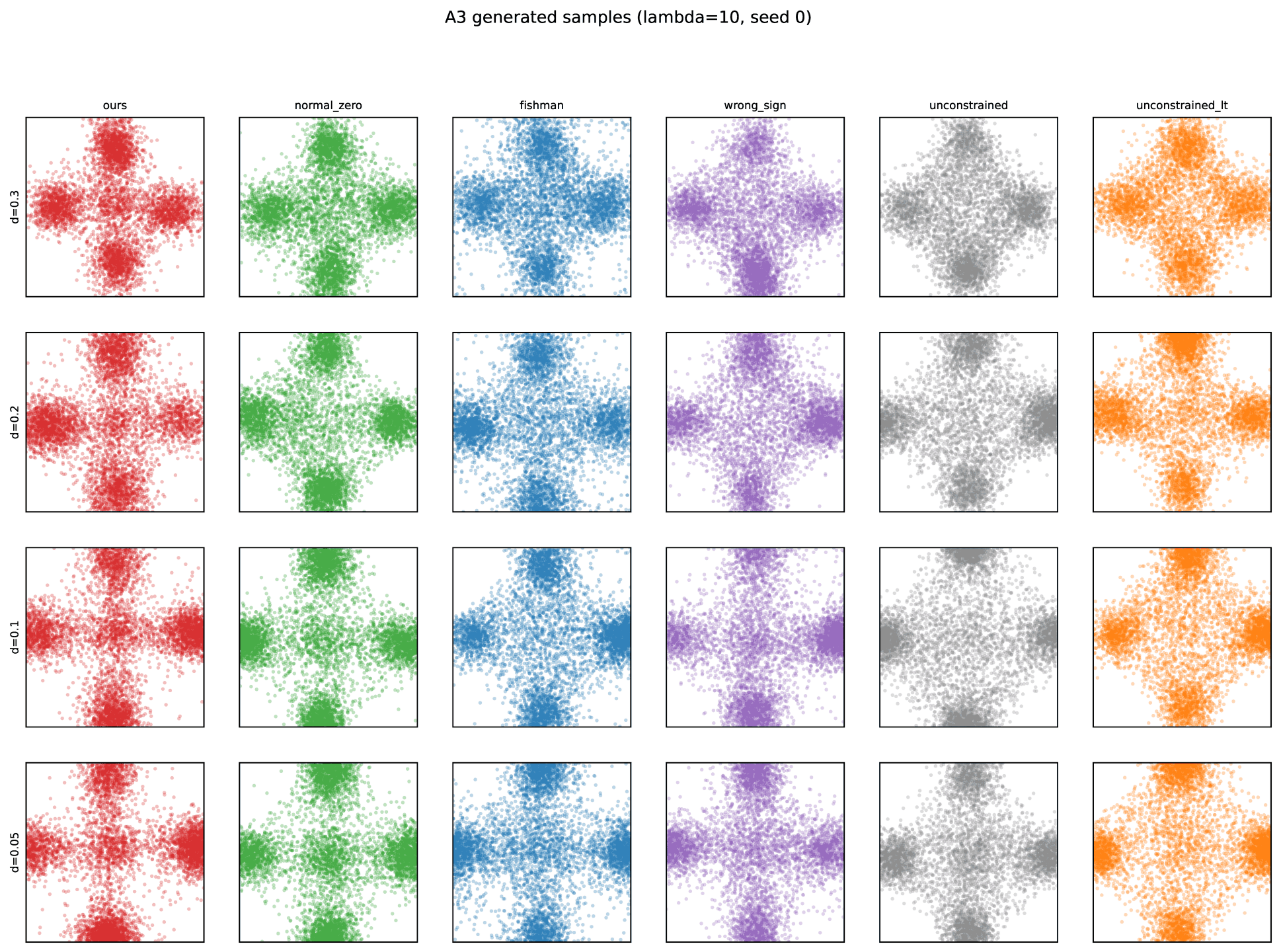}
\caption{The boundary leakage gap widens further at $\lambda=10$ (Table~\ref{tab:a3-full}). A3 scatter grid at $\lambda=10$.}
\label{fig:a3-scatter-l10}
\end{figure*}

\begin{figure*}[t]
\centering
\includegraphics[width=\textwidth]{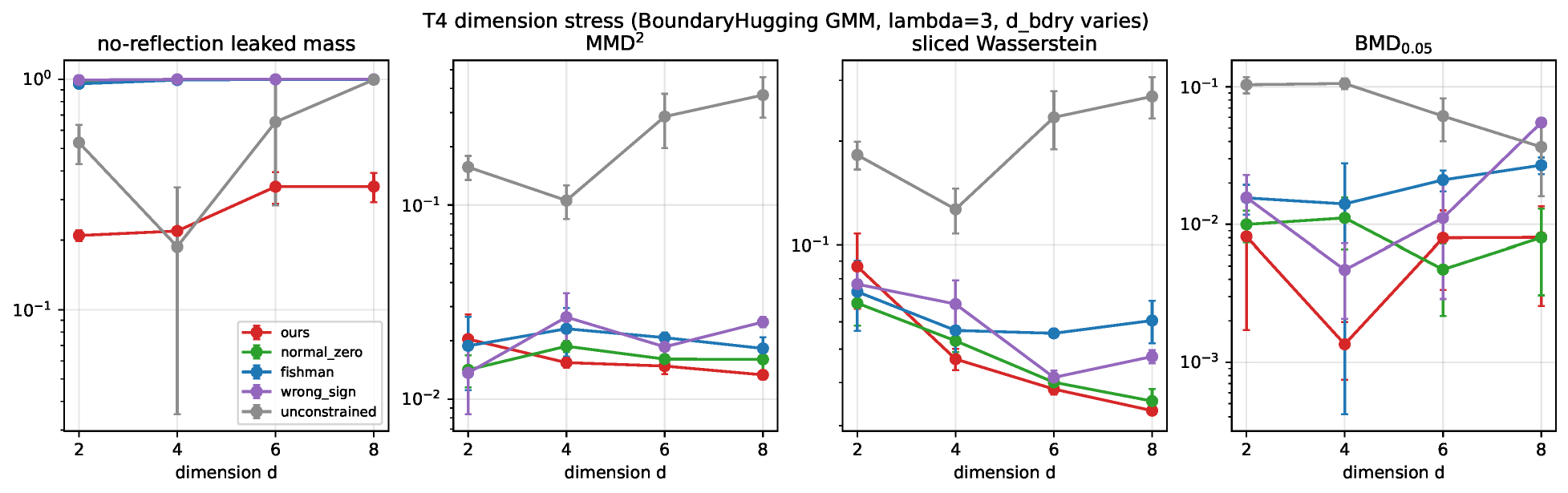}
\caption{The leakage separation persists across $d\in\{2,4,6,8\}$ while distributional metrics remain tangled. Dimension stress: boundary-hugging GMM at $\lambda=3$, $d_{\mathrm{bdry}}=0.10$. leakNone for ours remains in $[0.21,0.34]$; trace-violating baselines saturate at $\approx 0.99$--$1.00$; distributional metrics (MMD$^2$, SWD, BMD$_{0.05}$) remain tangled across methods.}
\label{fig:dim-stress}
\end{figure*}

\subsection{Flower boundary diagnostic}
\label{app:flower}
The flower dataset is generated from the polar curve $r(\theta)=0.78+0.17\sin(5\theta)$ with small radial/tangential noise ($\sigma=0.06$), a near-1D manifold embedded in 2D. At 20k training steps, visual samples from all methods remain limited and often resemble a mean-radius ring rather than the five-petal target (Figure~\ref{fig:flower-6k-20k}). We therefore treat this dataset as a diagnostic limitation, not as evidence of strong visual generation.

\begin{table}[t]
\centering
\small
\begin{tabular}{lrr}
\toprule
baseline & MMD$^2$ & leakNone \\
\midrule
ours & 4.96e--3 $\pm$ 1.1e--3 & \textbf{0.178 $\pm$ 0.009} \\
Fishman & 5.03e--3 $\pm$ 1.8e--3 & 0.961 $\pm$ 0.011 \\
normal-zero & 8.56e--3 $\pm$ 3.5e--3 & 0.988 $\pm$ 0.002 \\
wrong-sign & 1.22e--2 $\pm$ 6.2e--3 & 0.992 $\pm$ 0.003 \\
\bottomrule
\end{tabular}
\caption{Flower diagnostic at 20k steps and $\lambda=3$, mean $\pm$ std over 5 seeds.}
\label{tab:flower}
\end{table}

Ours and Fishman have statistically tied MMD$^2$ while leakNone separates them by a factor of $\approx 5$ --- an instance of the masking effect described in Section~\ref{sec:disc-principle}. Ceiling diagnostics (Figure~\ref{fig:ceiling-compare}) indicate that the reverse stopping time $t_{\min}$ is not the bottleneck on this dataset; the bottleneck is score-matching saturation on the near-1D manifold.

\begin{figure*}[t]
\centering
\includegraphics[width=0.70\textwidth]{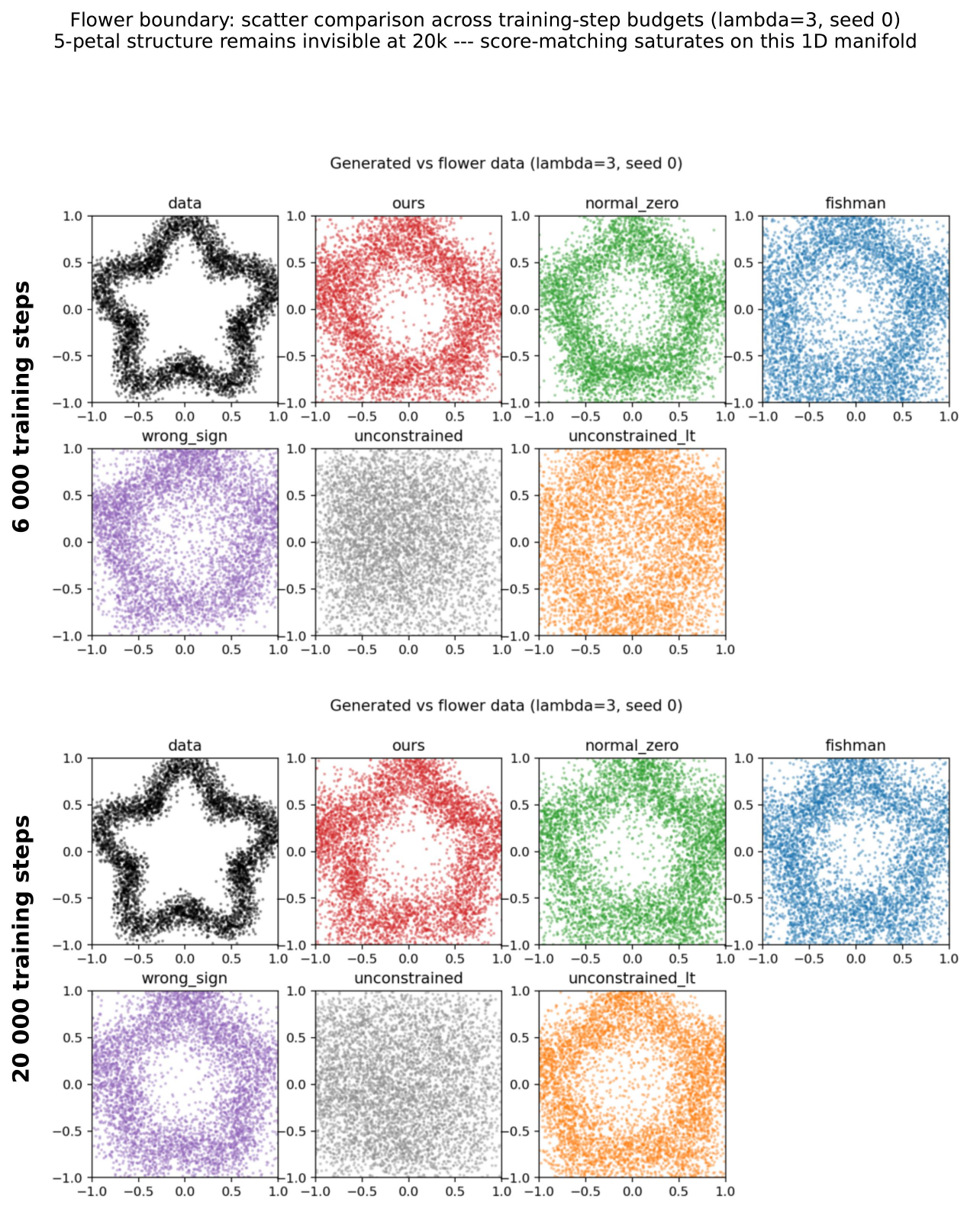}
\caption{The five-petal structure is not recovered at either training budget; all flux-respecting baselines collapse to a mean-radius ring. Flower dataset, scatter comparison at 6k and 20k training steps, $\lambda=3$, seed 0.}
\label{fig:flower-6k-20k}
\end{figure*}

\begin{figure*}[t]
\centering
\includegraphics[width=0.92\textwidth]{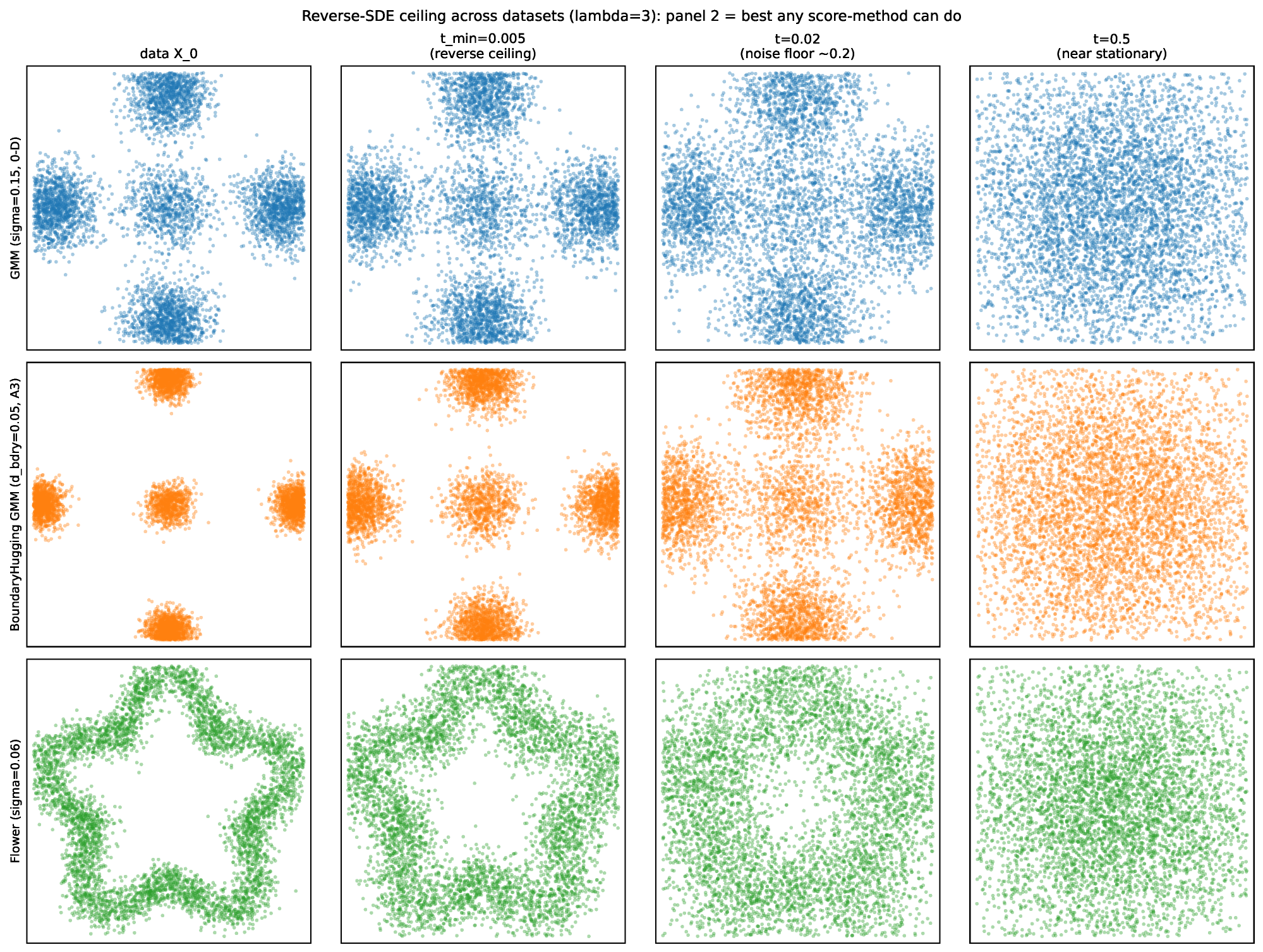}
\caption{The reverse stopping time is not the bottleneck on the flower dataset. Forward-OU snapshots at $t\in\{0,\,t_{\min}=0.005,\,0.02,\,0.5\}$ across the GMM, boundary-hugging GMM, and flower datasets; on the flower, the $t_{\min}$ snapshot still preserves the petal structure.}
\label{fig:ceiling-compare}
\end{figure*}

\begin{figure*}[t]
\centering
\includegraphics[width=\textwidth]{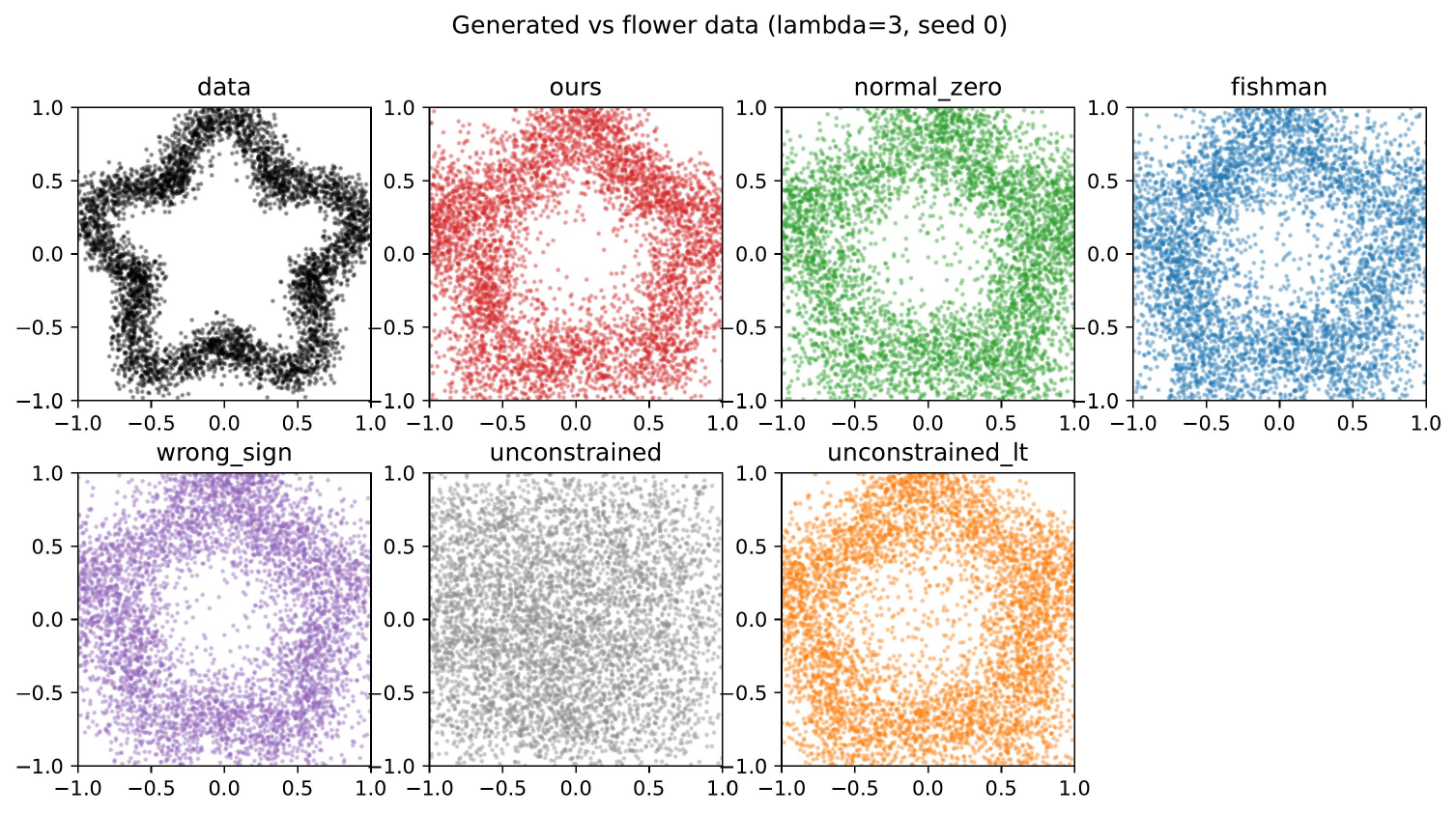}
\caption{All flux-respecting methods are visually similar; the distinguishing signal is leakNone (Table~\ref{tab:flower}). Flower 20k step, six baselines side-by-side at $\lambda=3$, seed 0.}
\label{fig:flower-baselines}
\end{figure*}

\subsection{Extreme drift ($\lambda\in\{30,100\}$)}
\label{app:extreme}

\begin{table*}[t]
\centering
\small
\begin{tabular}{llrrr}
\toprule
$\lambda$ & baseline & MMD$^2$ & trace error & leakNone \\
\midrule
30 & ours & 0.119 $\pm$ 0.027 & \textbf{0} & \textbf{0.012 $\pm$ 0.003} \\
30 & normal-zero & 0.130 $\pm$ 0.023 & 900 & 0.081 $\pm$ 0.017 \\
30 & Fishman & 0.119 $\pm$ 0.016 & 900 & 0.058 $\pm$ 0.014 \\
30 & wrong-sign & 0.106 $\pm$ 0.017 & 3600 & 0.130 $\pm$ 0.027 \\
30 & unconstrained & 0.130 $\pm$ 0.018 & 11.8 $\pm$ 1.9 & 0.013 $\pm$ 0.002 \\
\midrule
100 & ours & 0.733 $\pm$ 0.021 & \textbf{0} & \textbf{0} \\
100 & normal-zero & 0.742 $\pm$ 0.024 & 10\,000 & 0.108 $\pm$ 0.064 \\
100 & Fishman & 0.642 $\pm$ 0.022 & 10\,000 & 0.161 $\pm$ 0.025 \\
100 & wrong-sign & 0.725 $\pm$ 0.036 & 40\,000 & 0.263 $\pm$ 0.150 \\
100 & unconstrained & 0.729 $\pm$ 0.052 & 976 $\pm$ 110 & 0 \\
\bottomrule
\end{tabular}
\caption{Extreme drift, $d_{\mathrm{bdry}}=0.10$, mean $\pm$ std over 3 seeds. At $\lambda=100$ the stationary standard deviation $1/\sqrt{2\lambda}\approx 0.07$ is much smaller than the mode radius $0.9$, so MMD$^2$ is bad for every method.}
\label{tab:extreme}
\end{table*}

Two observations. First, at $\lambda=100$ the stationary distribution is too tight for the reverse SDE to reach data modes at $r=0.9$, so MMD$^2$ is uniformly poor; this is a property of the toy setup, not a method failure. Second, leakage loses discriminative power because particles rarely approach the boundary; in this regime trace error remains the right diagnostic. Note that the trace error for the \emph{unconstrained} baseline reported here ($11.8$ at $\lambda=30$, $976$ at $\lambda=100$) is much smaller than $\lambda^2$: with no architectural boundary handling, the MLP picks up a residual inward bias from the forward samples, which is qualitatively different from the $\lambda^2$ trace error of the zero/wrong-trace parametrizations and from the larger trace errors observed in the GMM setup of Appendix~\ref{app:e4-full}.

\subsection{Baseline parametrizations}
\label{app:baseline-formulas}
All baselines in Table~\ref{tab:baselines} share the MLP backbone $h_\theta$; they differ only in how the score is composed at the boundary. We collect the explicit formulas here for the box $G=[-1,1]^d$ under $A=I$ and reflected OU drift $b(t,x)=-\lambda(t)x$. The anisotropic case in Section~\ref{sec:exp-trace-law} replaces $s_i$ by $s=A^{-1}v$ with $v$ defined analogously per coordinate. In the table, $\beta_I=n^\top b$ denotes the trace target computed as if $A=I$; it coincides with the conormal target $\beta_A=n^\top\beff$ exactly when $A=I$, and the wrong-normal-only baseline enforces $\beta_I$ in place of $\beta_A$.

\paragraph{Main-table reporting convention.} In Table~\ref{tab:fishman-delta}, the signal-to-noise ratio (SNR) is the across-seed gap mean divided by its standard deviation, with $\mathrm{SNR}\ge3$ treated as a clear separation. The $A=\diag(1,4)$ cell uses 5 paired seeds and the rotated cell 8 paired seeds; the two $A=I$ controls use the archived mean\,$\pm$\,standard-deviation summaries and a Welch-combined denominator. The CTR column is structural and has zero seed variance. Shell$_{0.05}$ denotes $\mathrm{ShellErr}_{A,0.05}$ and tang.\ the tangential boundary error.

\begin{table*}[t]
\centering
\small
\begin{tabular}{lrrrr}
\toprule
setting & $\Delta$\,score & $\Delta$\,shell$_{0.05}$ & $\Delta$\,tang. & CTR \\
\midrule
$A{=}I$, $b{=}0$ & 4.08e--2\,(62) & 0.394\,(83) & 1.23\,($>\!10^3$) & $0\to0$ \\
$A{=}I$, $b{=}{-}3x$ & 0.162\,(122) & 4.31\,(154) & 1.23\,($>\!10^3$) & $9\to0$ \\
$A{=}\diag(1,4)$ & 0.287\,(41) & 3.96\,(109) & 1.96\,($>\!10^3$) & $9\to0$ \\
\midrule
\multicolumn{5}{l}{rotated $\kappa{=}4$, multi-active, $m{=}\infty$ (Experiment 2), paired:}\\
\multicolumn{5}{l}{\quad $\Delta$leakNone $0.507$\,(133);\ \ $\Delta$ProjDisp $9.5$\,(66);\ \ $\Delta$face-mass $0.161$\,(64)}\\
\bottomrule
\end{tabular}
\caption{Flux-compatible versus the Fishman-style zero-boundary class under the shared pipeline. $\Delta=$ Fishman $-$ flux (positive favors flux); parentheses give SNR. CTR reports the closed-form Fishman\,$\to$\,flux value. Reporting details are in Appendix~\ref{app:baseline-formulas}.}
\label{tab:fishman-delta}
\end{table*}

\begin{table*}[t]
\centering
\small
\begin{tabular}{lll}
\toprule
baseline & boundary behavior on $\bdry$ & purpose \\
\midrule
\textbf{ours (flux-compatible)} & exact conormal trace $n^\top As=\beta_A$ & proposed \\
zero-boundary (Fishman-style) & score vector vanishes (conormal trace $0$) & over-restriction \\
normal-zero & zero normal trace; tangent free & isolates the trace value \\
wrong-sign & trace inverted (opposite sign) & sign sanity \\
wrong-normal-only & $n^\top s=\beta_I$ as if $A=I$ & conormal ablation (\S\ref{sec:exp-trace-law}) \\
exact-quadrature oracle & explicit boundary integral & oracle in analytic 2D \\
local-time style & stochastic boundary correction & estimator comparison \\
unconstrained & no boundary handling & strawman \\
exact-score oracle & $\nabla\log\rho_t$ in closed form & analytic sanity \\
\bottomrule
\end{tabular}
\caption{Baselines compared in this work. All share the backbone $h_\theta$ and differ only in the boundary composition of the score.}
\label{tab:baselines}
\end{table*}

\begin{itemize}
\item \textbf{ours:} $s_{\theta,i}(t,x)=-\lambda(t)x_i+(1-x_i^2)h_{\theta,i}(t,x)$ (Proposition~\ref{prop:hyperrect}).
\item \textbf{zero-boundary (Fishman-style, 2D):} $s_{\theta,i}(t,x)=(1-x^2)(1-y^2)h_{\theta,i}(t,x)$; the scalar product factor vanishes on every face.
\item \textbf{normal-zero:} $s_{\theta,i}(t,x)=(1-x_i^2)h_{\theta,i}(t,x)$; same per-coordinate factor as ours, no drift term.
\item \textbf{wrong-sign:} $s_{\theta,i}(t,x)=+\lambda(t)x_i+(1-x_i^2)h_{\theta,i}(t,x)$; inverts the conormal trace at the boundary.
\item \textbf{wrong-normal-only} (anisotropic case): $s_{\theta,i}(t,x)=b_i(t,x)+(1-x_i^2)h_{\theta,i}(t,x)$; enforces $n^\top s=n^\top b$ as if $A=I$, which differs from the conormal target $n^\top As=n^\top b$ on faces where $a_i\ne 1$.
\item \textbf{unconstrained + boundary estimator:} $s_\theta=h_\theta$ and an additive local-time-style boundary term computed from one-step reflected-Euler overshoots.
\item \textbf{unconstrained + exact boundary quadrature:} $s_\theta=h_\theta$ plus the exact analytic boundary integral, available only when $\rho_t$ is closed-form (used in the analytic-2D estimator comparison of Appendix~\ref{app:e3-full}).
\item \textbf{exact-score oracle:} $s=\nabla\log\rho_t$ in closed form; used for analytic sampling diagnostics only.
\end{itemize}

\subsection{Tangent-preserving constructions for smooth and signed-distance domains}
\label{app:smooth}
The hyperrectangle construction of Section~\ref{sec:hyperrect-construction} preserves the \emph{tangential} score components while pinning only the \emph{normal} (conormal) component at the boundary. A naive scalar factorization by the defining function $\phi$ of a smooth domain would pin the entire flux-score vector at the boundary, which is precisely the over-restriction we criticize in zero-boundary parametrizations. The correct smooth-domain analogue must therefore decompose the flux-score vector into a prescribed normal component and a free tangential component.

Let $G=\{x\in\R^d:\phi(x)<0\}$ be a bounded domain with smooth boundary $\bdry=\{\phi=0\}$ and outward unit normal $n(y)$ on $\bdry$. In a tubular neighborhood of $\bdry$, let $\pi(x)$ denote the nearest-boundary projection. Let $P_T(y)=I-n(y)n(y)^\top$ be the tangential projector at $y\in\bdry$, and let $\beta_A(t,y)=n(y)^\top \beff(t,y)$ as in the main text. Define
\[
\boxed{\;
\begin{aligned}
v_\theta(t,x)&=\beta_A\!\bigl(t,\pi(x)\bigr)\,n\!\bigl(\pi(x)\bigr)\\
&\quad+P_T\!\bigl(\pi(x)\bigr)\,f_\theta(t,x)+\phi(x)\,g_\theta(t,x),\\
s_\theta&=A^{-1}v_\theta,
\end{aligned}
\;}
\]
where $f_\theta,g_\theta:[0,T]\times\bar G\to\R^d$ are smooth neural fields. On $\bdry$ we have $\phi=0$ and $\pi(x)=x$, so
\[
n^\top v_\theta\big|_{\bdry}
=\beta_A\underbrace{n^\top n}_{=1}
+\underbrace{n^\top P_T}_{=0}f_\theta
+\underbrace{\phi\, n^\top g_\theta}_{=0}
=\beta_A,
\]
which gives $n^\top A_t s_\theta=\beta_A$ on $\bdry$ by construction. The tangential component $P_T f_\theta$ remains free, mirroring the role of the per-axis factor $(1-x_i^2)h_i$ in the box construction.

\paragraph{Signed-distance form (polygonal and disconnected domains).} The construction evaluated in Experiment~4 (Section~\ref{sec:exp-sdf}) is the signed-distance instantiation: with $r(x)$ the signed distance to $\bdry$ (negative inside), $\pi(x)$ the nearest boundary point, $n=n(\pi(x))$ the outward normal there, $P_T=I-nn^\top$ the tangent projector, and $\eta$ a depth gate with $\eta(0)=0$ and $\eta(r)>0$ in the interior,
\[
\begin{aligned}
v_\theta&=\beff+\bigl[P_T(\pi(x))+\eta(r(x))\,n\,n^\top\bigr]h_\theta,\\
s_\theta&=A^{-1}v_\theta.
\end{aligned}
\]
}
On $\bdry$, $\eta=0$ and $P_T h_\theta\perp n$, so $n^\top(v_\theta-\beff)=0$: the conormal trace is pinned while the tangential component of $h_\theta$ is left free. The signed distance is non-smooth at the medial axis and the nearest boundary point can be non-unique on nonconvex or disconnected domains, so this form carries no global no-misspecification theorem; we evaluate it empirically on a polygonal, disconnected domain in Section~\ref{sec:exp-sdf}.

This is a construction sketch. We do not prove a global no-misspecification theorem for smooth domains in this work, and we do not evaluate smooth-domain experiments. The exact no-misspecification theorem (Proposition~\ref{prop:nomisspec}, Appendix~\ref{app:proof-prop4}) is restricted to hyperrectangles. A complete smooth-domain treatment would require globalizing the local tangent decomposition through a partition of unity, controlling the regularity of $\pi$, $n$, and $P_T$ over the tubular neighborhood, and verifying the resulting parametrization empirically; these are natural extensions outside the scope of the present validation.

\subsection{Simplex implementation: construction, representability, and baselines}
\label{app:simplex-impl}

On the probability simplex $\Delta^{K-1}=\{x\in\R^K:x_i\ge 0,\,\mathbf 1^\top x=1\}$ with constant isotropic tangent diffusion $A=aP$ ($P=I-K^{-1}\mathbf 1\mathbf 1^\top$) and reflected OU drift $b(x)=-\lambda(x-c)$, $c=K^{-1}\mathbf 1$, the face $F_i=\{x_i=0\}$ carries an intrinsic outward unit normal $\nu_i=-Pe_i/\|Pe_i\|$ inside the affine hyperplane $\mathbf 1^\top x=1$. The no-flux conormal trace condition on $F_i$ reduces to $v_i=\beff_i(x)\bigl|_{x_i=0}=\lambda/K$ (since $b$ is already tangent and $\nabla\cdot A=0$ for constant $A$). We parametrize the flux-score as
\[
\boxed{\;
\begin{aligned}
v_\theta(t,x)&=b(x)+M_\Delta(x)\,h_\theta(t,x),\\
M_\Delta(x)&=\diag(x)-xx^\top,\\
s_\theta&=A^{-1}v_\theta=v_\theta/a.
\end{aligned}
\;}
\]
The mobility envelope $M_\Delta(x)$ is the standard simplex tangent-vanishing factor and is part of the \emph{parametrization}, not the forward diffusion tensor: $M_\Delta(x)e_i=0$ when $x_i=0$, so on $F_i$ the contribution of $h_\theta$ vanishes and $v_i$ reduces exactly to $b_i$, pinning the conormal trace at $\lambda/K$ by construction.

\paragraph{Representability under $M_\Delta$.} In the relative interior of the simplex, $M_\Delta(x)=\diag(x)-xx^\top$ has rank $K-1$ with kernel spanned by $\mathbf 1$, and its column span equals the tangent space $T_x\Delta^{K-1}=\{u\in\R^K:\mathbf 1^\top u=0\}$; the parametrization $v=b+M_\Delta h$ can therefore represent any element of the interior tangent score class (combined with $M_\Delta^{1/2}$-rescaling of $h$ the interior flux-score map is bijective onto the tangent class). On the face $F_i=\{x_i=0\}$, the $i$-th row of $M_\Delta(x)$ vanishes identically and the remaining $(K-1)\times(K-1)$ block restricts to $M_\Delta$ on the smaller simplex $\Delta^{K-2}$ supported on the other coordinates, which is again rank-$(K-2)$ on its own interior and spans the tangent space of $F_i$; the parametrization restricts on $F_i$ to the same construction one face dimension lower. The face restriction therefore exactly matches the conormal constraint without expressivity loss in the tangential directions, and the only function of $M_\Delta$ at the parametrization level is to algebraically eliminate $h_\theta$ from the \emph{normal} component of $v$ on each face. Vertices ($x_i=1$ for some $i$) sit on a measure-zero stratum on which the construction degenerates; we work on faces in the surface-measure sense and ignore lower-dimensional strata, mirroring the box edges-and-corners treatment of Section~\ref{sec:setup}.

\paragraph{Simplex baselines.} The four structured simplex baselines used in Appendix~\ref{app:e5b-full} are obtained by varying this prescription: flux-compatible $v=b+M_\Delta h$; zero-flux / normal-zero $v=M_\Delta h$; full-boundary-zero $v=(\prod_i x_i)Ph$; wrong-sign $v=-b+M_\Delta h$. A fifth baseline removes the envelope altogether (unconstrained-tangent $v=Ph$). Their predicted face-average CTRs collapse to the closed-form family stated in Appendix~\ref{app:ctr-family} (and verified in Appendix~\ref{app:e5b-full} with zero seed variance).

\subsection{Reproducibility and discretization}
All controlled validation experiments, expanded ablations, and appendix stress tests are complete with seeds, commands, and configurations recorded. For time-conditioned reflected OU experiments, the reverse stopping time $t_{\min}$ and step size must remain below the data mode scale: we use $t_{\min}=0.005$ and $dt=0.005$ for the reported 2D toy experiments. Flower ceiling diagnostics confirm that the stopping time is not the bottleneck on that dataset; the bottleneck is score-matching saturation on the near-1D manifold.

\subsection{Simplex \(K=3\) structural trace identity}
\label{app:e5a}

5 baselines $\times$ 5 seeds, no training, $\lambda=3$, $a=0.5$. Coordinate CTR is the face-average squared residual of $v_i$ on $F_i$, and unit-normal CTR is larger by the fixed factor $K/(K-1)=3/2$. Standard deviation is exactly zero for every structural baseline.

\begin{table}[t]
\centering
\small
\begin{tabular}{lrr}
\toprule
baseline & coord CTR & unit-normal CTR \\
\midrule
flux-compatible & \textbf{0} & \textbf{0} \\
zero-flux & 1.000 & 1.500 \\
full-boundary-zero & 1.000 & 1.500 \\
wrong-sign & 4.000 & 6.000 \\
unconstrained-tangent & 1.006 $\pm$ 0.004 & 1.509 $\pm$ 0.006 \\
\bottomrule
\end{tabular}
\caption{E5a $K=3$ structural trace identity, 5 seeds. For structural baselines predicted and measured CTR coincide exactly (zero seed variance); a single value is shown. Closed-form predictions: $\lambda^2/K^2=1$ (zero-flux, full-boundary-zero), $4\lambda^2/K^2=4$ (wrong-sign), coord-CTR convention; unit-normal CTR is a factor $K/(K-1)=3/2$ larger (Appendix~\ref{app:ctr-family}). Unconstrained-tangent has no prescribed trace and reflects the random initialization of $h_\theta$.}
\label{tab:e5a}
\end{table}

\paragraph{Generation diagnostic (E5a).} For comparison with the trained-model setting, Table~\ref{tab:e5a-gen} reports score error, shell error, MMD$^2$ and leakNone for the same five baselines after $10{,}000$ ISM training steps at $K=3$ (5 seeds). Score error here is the $A$-weighted error $\E[(s_\theta-s^\star)^\top A(s_\theta-s^\star)]$ on stationary reference samples, averaged over the evaluation times (the target is the stationary distribution, so the truth is time-independent), computed at the flux level against the exact stationary flux $v^\star=As^\star=b$, the barycenter-directed drift (equivalently $s^\star=b/a$ on the tangent space). The score-/shell-error ordering across baselines is not monotone in trace correctness in this regime (e.g., wrong-sign attains the lowest score error $3.98$; flux-compatible is $15.91$); the ordering is dominated by differential susceptibility to the interior divergence-overfitting pathology (Appendix~\ref{app:simplex-pathology}) and is not informative about boundary handling. The clean separation in this experiment is on leakNone (flux 0.036 vs zero-flux 0.527, wrong-sign 0.615); we do not interpret the score / shell error ranking as a quality ordering.

\begin{table*}[t]
\centering
\small
\begin{tabular}{lrrrrr}
\toprule
baseline & score err (A) & shell$_{0.02}$ & shell$_{0.05}$ & MMD$^2$ & leakNone \\
\midrule
flux-compatible & 15.91 $\pm$ 2.7 & 24.15 $\pm$ 10.1 & 53.90 $\pm$ 13.8 & 0.0203 $\pm$ 0.0046 & \textbf{0.036 $\pm$ 0.026} \\
zero-flux & 6.46 $\pm$ 1.3 & 8.99 $\pm$ 2.9 & 15.17 $\pm$ 5.7 & 0.0264 $\pm$ 0.0048 & 0.527 $\pm$ 0.241 \\
full-boundary-zero & 8.18 $\pm$ 1.0 & 11.26 $\pm$ 0.9 & 19.06 $\pm$ 3.2 & 0.0646 $\pm$ 0.0053 & 0.058 $\pm$ 0.057 \\
wrong-sign & 3.98 $\pm$ 0.17 & 12.60 $\pm$ 0.5 & 10.56 $\pm$ 0.2 & 0.0384 $\pm$ 0.0029 & 0.615 $\pm$ 0.220 \\
unconstrained-tangent & 12.23 $\pm$ 1.9 & 77.45 $\pm$ 13.7 & 51.96 $\pm$ 8.8 & 0.00387 $\pm$ 0.0012 & 0.0001 $\pm$ 0.00022 \\
\bottomrule
\end{tabular}
\caption{E5a generation diagnostic at $K=3$, $\lambda=3$, $a=0.5$, 10k training steps, mean $\pm$ std over 5 seeds. The score / shell error columns are reported for completeness; they are not monotone in trace correctness in this regime, and the operative diagnostic is leakNone.}
\label{tab:e5a-gen}
\end{table*}

\subsection{Simplex moderate-K full tables (K=16, K=32 synthetic, K=32 HITChip)}
\label{app:e5b-full}

All three datasets share the training protocol: $20{,}000$ ISM steps, Hutchinson divergence ($R=16$), Adam $\mathrm{lr}=10^{-4}$, gradient clipping max-norm 1, batch 256, MLP width 128 depth 4 SiLU. Boundary mass profiles of the datasets are summarized below.

\begin{table*}[t]
\centering
\small
\begin{tabular}{lrrr}
\toprule
& K=16 synth ($\tau=1.5$, $\mu$-scale 2.0) & K=32 synth ($\tau=2.5$, $\mu$-scale 3.0) & K=32 HITChip \\
\midrule
$P(\min_i x_i<10^{-3})$ & 0.274 & 0.490 & 0.652 \\
$P(\min_i x_i<10^{-2})$ & 0.979 & 1.000 & 1.000 \\
\bottomrule
\end{tabular}
\caption{Boundary mass profiles for the three moderate-$K$ simplex datasets.}
\label{tab:e5b-boundary-mass}
\end{table*}

Table~\ref{tab:e5b} retains only the face-CTR diagnostic, which is independent of reverse-sampler initialization. Moderate-$K$ generation metrics from the earlier data-start protocol are not reported; we do not rerun this diagnostic suite because it is outside the main evidence chain. The full-boundary-zero loss\_init and loss\_final are exactly zero in every cell of K=16, K=32, and HITChip.

\begin{table*}[t]
\centering
\small
\begin{tabular}{lrrrrr}
\toprule
dataset & flux & zero-flux & full-zero & wrong-sign & unconstr. \\
\midrule
K=16 synth & \textbf{0} & 0.0352 & 0.0352 & 0.141 & $196\pm122$ \\
K=32 synth & \textbf{0} & 8.79e--3 & 8.79e--3 & 3.52e--2 & $170\pm58$ \\
K=32 HITChip & \textbf{0} & 8.79e--3 & 8.79e--3 & 3.52e--2 & $187\pm60$ \\
\bottomrule
\end{tabular}
\caption{E5b moderate-$K$ face-average CTR. Structural predictions and measurements coincide to floating-point precision with zero seed variance over 5 seeds; unconstrained-tangent has no prescribed trace and is reported as mean $\pm$ std.}
\label{tab:e5b}
\end{table*}

\paragraph{Mechanism (full-boundary-zero only).} This degeneracy is specific to the full-boundary-zero baseline, whose face-vanishing factor is the product envelope $\prod_i x_i$ (not the flux-compatible mobility envelope $M_\Delta$, which does not underflow). At K=16 the product envelope $\prod_i x_i$ is bounded above by $K^{-K}\approx 5.4\times 10^{-20}$ at the simplex barycenter (well above the float32 normal floor $\approx 1.2\times 10^{-38}$, so this is not classical IEEE underflow) and is many orders smaller on the boundary-supported training data; the multiplicative attenuation places the trainable part of $v$ below the optimizer's effective precision relative to the unaffected $b$ term, so gradients vanish and the network never trains. At K=32 the barycenter value $K^{-K}\approx 7\times 10^{-49}$ falls below the float32 normal floor outright. Either way the operative empirical signature is loss\_init = loss\_final = 0 in every full-boundary-zero cell; we report this as a numerical degeneracy of that baseline rather than as a competitive generation result.

\paragraph{HITChip dataset.} The Lahti et al.\ HITChip Atlas \citep{lahti2014tipping} contains 1172 stool samples from 1006 western adults, profiled at 130 genus-like taxa via the HITChip microarray (Dryad DOI 10.5061/dryad.pk75d, file \texttt{HITChip.tab}). We aggregate to the top-31 genus-like taxa by mean row-normalized abundance, lump the remaining 99 taxa into an ``Other'' coordinate (yielding K=32), and apply a uniform relative pseudocount of $10^{-6}$ before renormalizing. HITChip is a real-data structural diagnostic: flux retains $\mathrm{CTR}=0$ while unconstrained-tangent measures $187\pm60$. It is not a generative-quality benchmark.

\subsection{E6 structural and training equality tables (\(d=2\) and \(d=4\))}
\label{app:e6-structural}
\label{app:e6-training}

Box $[-1,1]^d$ with diagonal state-dependent diffusion $A_\kappa(x)=\diag(1+\kappa x_i)$, drift $b(x)=-\lambda x$, $\lambda=3$. Baselines:
\begin{itemize}
\item \textbf{correct}: $v_i = \beff_i + (1-x_i^2)h_i = (-\lambda x_i - \kappa) + (1-x_i^2)h_i$ on faces;
\item \textbf{omit-correction}: $v_i = b_i + (1-x_i^2)h_i$ (drops the $-\nabla\!\cdot\!A=-\kappa$ correction);
\item \textbf{normal-zero}: $v_i = (1-x_i^2)h_i$ (drops both $b_i$ and $-\kappa$);
\item \textbf{wrong-sign}: $v_i = -\beff_i + (1-x_i^2)h_i$ (negates the effective trace).
\end{itemize}

\paragraph{E6 structural (no training).} 5 random initializations per $(\kappa,\text{baseline})$ cell, 7 $\kappa$ values, 4 baselines, $d\in\{2,4\}$. Measured face-average CTR equals predicted value to floating-point precision; per-seed standard deviation is exactly zero. The predicted values are the closed-form family from Appendix~\ref{app:ctr-family} and the body of Table~\ref{tab:e6-training} (which repeats them verbatim).

\begin{table*}[t]
\centering
\small
\begin{tabular}{lrrrr}
\toprule
$\kappa$ & correct & omit (= $\kappa^2$) & normal-zero (= $\lambda^2+\kappa^2$) & wrong-sign (= $4(\lambda^2+\kappa^2)$) \\
\midrule
$-0.75$ & 0 & 0.5625 & 9.5625 & 38.25 \\
$-0.50$ & 0 & 0.2500 & 9.2500 & 37.00 \\
$-0.25$ & 0 & 0.0625 & 9.0625 & 36.25 \\
$0$ & 0 & 0 & 9.00 & 36.00 \\
$+0.25$ & 0 & 0.0625 & 9.0625 & 36.25 \\
$+0.50$ & 0 & 0.2500 & 9.2500 & 37.00 \\
$+0.75$ & 0 & 0.5625 & 9.5625 & 38.25 \\
\bottomrule
\end{tabular}
\caption{E6: measured post-training face-average CTR equals the closed-form predicted value to floating-point precision in every cell at $d=2$, with zero seed variance, and is identical before training. The wrong-sign baseline minimizes the mis-specified ISM loss from $\approx +8$ at initialization down to $+0$--$+3$ over 5000 steps (its loss is bounded below) without changing its face CTR. Numbers are mean (std) over 5 seeds; std is exactly zero in every cell. The $d=4$ table is identical (face-CTR is dimension-independent for these baselines under the chosen parametrization).}
\label{tab:e6-training}
\end{table*}

\paragraph{E6 training.} 5000 ISM steps per cell, Adam $\mathrm{lr}=3\times 10^{-4}$, gradient clipping max-norm 1, Hutchinson divergence $R=4$, batch 256, MLP width 64 depth 3 SiLU, reflected-OU forward bank with state-dependent diffusion: $dX=-\lambda X\,dt+\sqrt{2A_\kappa(X)}\,dW-n(X)\,dL$ (the paper Section~\ref{sec:setup} / Section~\ref{sec:exp-trace-law} It\^o-convention SDE; $b=-\lambda x$ is itself the It\^o drift in the FP flux $J=b\rho-A\nabla\rho-\rho\nabla\cdot A$). All $140$ cells at $d=2$ and all $140$ cells at $d=4$ yield post-training measured CTR equal to predicted to floating-point precision, with zero seed variance. No NaN losses were observed.

Representative loss trajectories ($\kappa=-0.75$, $d=2$, per-seed loss\_init $\to$ loss\_final ranges):
\begin{itemize}
\item correct: $-3.83$ to $-3.19 \to -3.97$ to $-3.43$ (converges to a divergence-bounded minimum)
\item omit: $-3.17$ to $-2.47 \to -4.00$ to $-3.45$
\item normal-zero: $+0.00$ to $+0.03 \to -2.02$ to $-1.04$ (decreases substantially)
\item wrong-sign: $+7.98$ to $+8.39 \to +0.03$ to $+2.05$ (large decrease; loss bounded below in this case)
\end{itemize}

For $d=4$ the same $\kappa=-0.75$ row has wider ranges due to the additional faces: correct $-7.68$ to $-6.47 \to -7.70$ to $-7.18$; omit $-6.31$ to $-4.94 \to -7.50$ to $-6.94$; normal-zero $-0.02$ to $+0.02 \to -3.78$ to $-2.55$; wrong-sign $+15.93$ to $+17.05 \to +1.71$ to $+4.70$. The post-training measured CTR equality is preserved verbatim at both dimensions. Full per-seed data is in the supplementary material.

\begin{figure*}[t]
\centering
\includegraphics[width=0.65\textwidth]{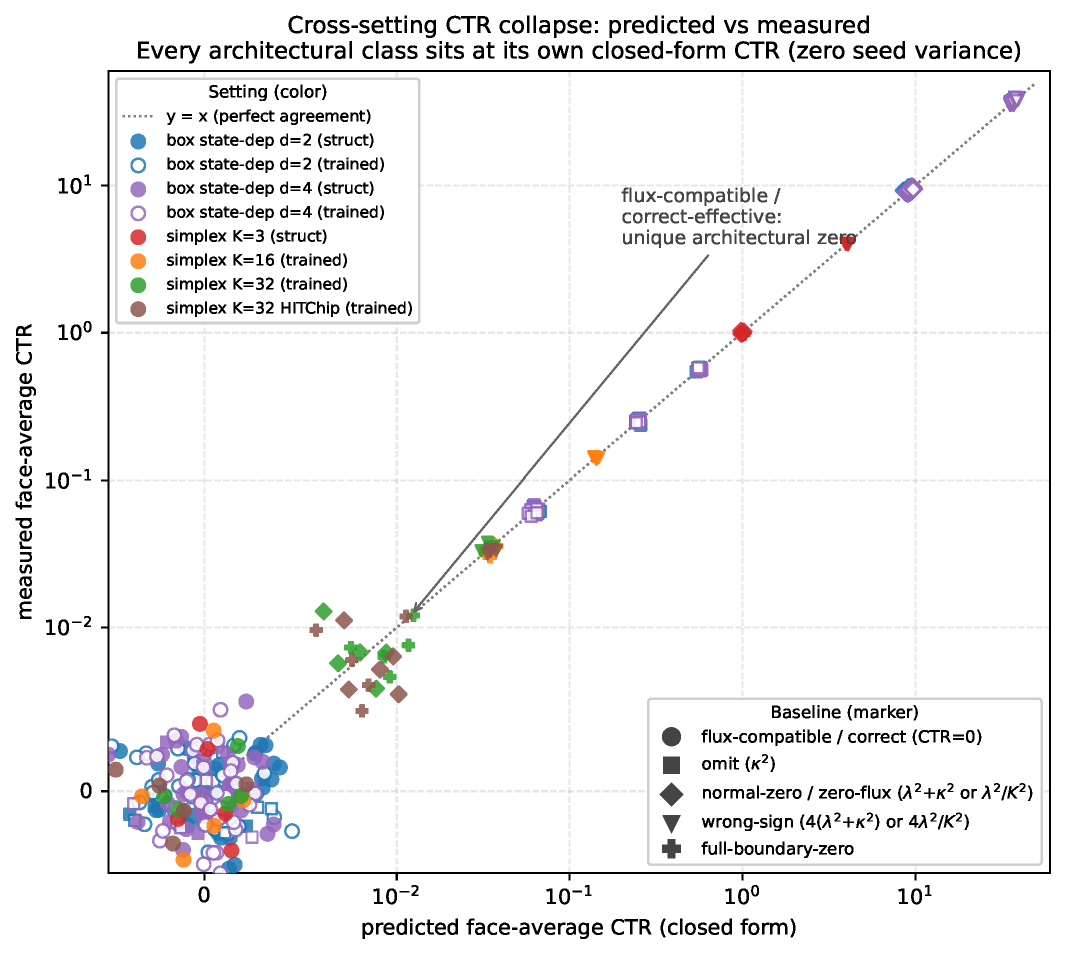}
\caption{Every architectural class lies on the diagonal $\mathrm{measured}=\mathrm{predicted}$ in all 640 cells, before and after training. Cross-setting CTR collapse: predicted vs measured face-average CTR across the box state-dependent grid (E6 structural and post-training, $d\in\{2,4\}$, 7 $\kappa$ values, 4 baselines, 5 seeds), the simplex $K=3$ structural trace identity (E5a, 4 baselines, 5 seeds), and the simplex moderate-$K$ face-CTR column at $K=16$, $K=32$ synthetic, and $K=32$ HITChip (E5b, 4 structural baselines, 5 seeds). The structural baselines (filled markers) have exactly zero seed variance (jitter added only to separate coincident cells). The unique lower-left zero is the flux-compatible class across all settings; every other class is fixed at its own non-zero predicted value ($\kappa^2$, $\lambda^2+\kappa^2$, $4(\lambda^2+\kappa^2)$ on the state-dependent box; $\lambda^2/K^2$, $4\lambda^2/K^2$ on the simplex), so the CTR identity distinguishes classes by closed-form arithmetic. Open markers (state-dependent $d=2$/$d=4$ trained) overlap the filled structural markers cell-by-cell --- the numerical-faithfulness statement that aggressive ISM optimization does not break the by-construction CTR identity.}
\label{fig:ctr-collapse}
\end{figure*}

\subsection{Moderate-K simplex optimization pathology}
\label{app:simplex-pathology}

The interior ISM objective on bounded domains decomposes as $s^\top As+2\nabla\cdot(As)$; both terms are evaluated at training samples, and the divergence term is high-variance and unbounded below pointwise. On the moderate-dimensional simplex the tangent-Hutchinson estimate lets the optimizer drive the empirical loss downward by inflating $\|v\|$ locally at sample points. Reflected DSM targets can be constructed from constrained transition kernels or approximations \citep{lou2023reflecteddiffusionmodels}; we use ISM because it isolates the conormal boundary term across our tensors and geometries without a domain-specific conditional-score construction. Moderate-$K$ generation metrics from the earlier data-start reverse protocol are omitted rather than reinterpreted.

\paragraph{Instrumented loss trajectory.} Over 5000 steps, the quadratic term grows by $\approx230\times$ (from 2 to 460), the divergence term falls from $-186$ to $-1263$, $\|v_\theta\|_2$ grows by $\approx13\times$, and the raw parameter-gradient norm by $\approx420\times$. These training-side quantities do not depend on reverse initialization and support the narrower diagnosis that the unregularized empirical ISM objective develops divergence wells. We make no moderate-$K$ generation-quality claim.

\subsection{Partial-repair stress spectrum (box, \(d=2\), \(d_{\mathrm{bdry}}=0.05\), \(\lambda=3\))}
\label{app:partial-repair}

This stress test varies how often the reverse sampler enforces the reflective fold, decoupling the score from the sampler's external repair mechanism. Setup: same boundary-hugging GMM data and four box baselines (correct flux-compatible, normal-zero, Fishman, wrong-sign) as Table~\ref{tab:a3-full}, 5 seeds, 6000 ISM training steps. The reverse sampler is run with a \emph{periodic-projection} boundary mode that mirror-reflects to $[-1,1]$ every $m$ reverse steps; between projections the sampler is soft-clamped to $[-8,8]$ for numerical safety. $m=1$ recovers the standard fold protocol, $m=\infty$ recovers the existing no-fold protocol of Table~\ref{tab:a3-full}. We additionally sweep the number of reverse steps $N\in\{100,250,500,1000\}$ at $m=1$, the more practically-relevant axis of repair-weakening.

\paragraph{Sanity gate.} The retrained-models reproduction of the published Table~\ref{tab:a3-full} endpoint is, at $m=\infty$ and $d_{\mathrm{bdry}}=0.05,\,\lambda=3$: ours $0.221$, normal-zero $0.988$, Fishman $0.971$, wrong-sign $0.993$ (5-seed mean), all within $\pm 0.05$ of the published Table~\ref{tab:a3-full} entries. At $m=1$, all four baselines have leak $0$. Both endpoints reproduce; this is the sampler validation.

\paragraph{m-axis (Table~\ref{tab:partial-repair-m}).} Hard-reflected MMD$^2$ measured on the folded sample (i.e.\ after a final post-sampling fold to $[-1,1]$). Mean $\pm$ std over 5 seeds.

\begin{table*}[t]
\centering
\small
\begin{tabular}{lrrrrrr}
\toprule
baseline & m=1 & m=2 & m=3 & m=5 & m=8 & m=$\infty$ \\
\midrule
ours & 0.0248 $\pm$ 8.6e--3 & 0.0264 $\pm$ 9.6e--3 & 0.0250 $\pm$ 1.1e--2 & 0.0252 $\pm$ 1.0e--2 & 0.0229 $\pm$ 9.1e--3 & 0.0196 $\pm$ 7.4e--3 \\
normal-zero & 0.0224 $\pm$ 4.0e--3 & 0.0240 $\pm$ 4.6e--3 & 0.0305 $\pm$ 4.5e--3 & 0.0400 $\pm$ 8.4e--3 & 0.0594 $\pm$ 1.1e--2 & 0.677 $\pm$ 0.33 \\
Fishman & 0.0279 $\pm$ 1.5e--3 & 0.0336 $\pm$ 5.4e--3 & 0.0426 $\pm$ 2.1e--3 & 0.0519 $\pm$ 6.1e--3 & 0.0856 $\pm$ 5.8e--3 & 1.330 $\pm$ 0.023 \\
wrong-sign & 0.0404 $\pm$ 1.4e--2 & 0.0507 $\pm$ 1.3e--2 & 0.0744 $\pm$ 1.2e--2 & 0.104 $\pm$ 0.018 & 0.142 $\pm$ 0.025 & 0.897 $\pm$ 0.29 \\
\bottomrule
\end{tabular}
\caption{Partial-repair m-axis: hard-reflected MMD$^2$ versus periodic-projection period $m$. At $m=1$ (standard practice), ours and Fishman are statistically tied; separation begins to emerge at $m\ge 5$ and is statistically clear (SNR $\ge 5$) only at $m\ge 8$. Each cell is mean $\pm$ std over 5 seeds.}
\label{tab:partial-repair-m}
\end{table*}

\paragraph{N-axis (Table~\ref{tab:partial-repair-N}).} The more practical axis: reduce the number of reverse steps at $m=1$ (the standard fold-every-step protocol). Here ours and Fishman are statistically tied across the entire practical range.

\begin{table*}[t]
\centering
\small
\begin{tabular}{lrrrr}
\toprule
baseline & N=1000 & N=500 & N=250 & N=100 \\
\midrule
ours & 0.0227 $\pm$ 1.1e--2 & 0.0234 $\pm$ 9.7e--3 & 0.0300 $\pm$ 8.3e--3 & 0.0620 $\pm$ 7.1e--3 \\
normal-zero & 0.0162 $\pm$ 3.7e--3 & 0.0205 $\pm$ 5.5e--3 & 0.0297 $\pm$ 6.2e--3 & 0.0634 $\pm$ 7.1e--3 \\
Fishman & 0.0237 $\pm$ 3.6e--3 & 0.0279 $\pm$ 3.7e--3 & 0.0344 $\pm$ 4.9e--3 & 0.0659 $\pm$ 4.2e--3 \\
wrong-sign & 0.0355 $\pm$ 1.7e--2 & 0.0380 $\pm$ 1.6e--2 & 0.0459 $\pm$ 1.1e--2 & 0.0782 $\pm$ 1.5e--2 \\
\bottomrule
\end{tabular}
\caption{Partial-repair N-axis (m=1): hard-reflected MMD$^2$ versus reverse-step count $N$. Within the practical range $N\in\{100,\,250,\,500,\,1000\}$, ours and Fishman are statistically tied.}
\label{tab:partial-repair-N}
\end{table*}

\paragraph{Honest reading.} At the standard protocol $m=1$, ours and Fishman have a hard-reflected MMD$^2$ gap of $+3.1\times 10^{-3}$ with combined standard deviation $\approx 9\times 10^{-3}$ --- tied within seed noise (signal-to-noise ratio $0.4$). Across the practical N-axis, the gap is $+0.001$ to $+0.004$ with signal-to-noise ratio $\le 0.5$ at every N. The structural separation reported in Table~\ref{tab:a3-full} and Figure~\ref{fig:fold-vs-none} is recovered only at $m\ge 8$ (signal-to-noise ratio $5.8$ in Table~\ref{tab:partial-repair-m}) or at $m=\infty$. This is the reflection-masking thesis observed continuously rather than at the endpoint: as the repair weakens, the wrong conormal trace starts to manifest in distributional metrics. It is not a hard-reflected generation-quality win in standard usage, and we treat it as supporting evidence for the masking discussion in Section~\ref{sec:disc-principle}, not as a new headline. This single-tensor box stress is the precursor that Experiment 2 (Appendix~\ref{app:phase-diagram-full}) generalizes to a full $(\kappa,m)$ grid with rotated $A$ and operational metrics.

\subsection{Experiment 2 full phase diagram (P1)}
\label{app:phase-diagram-full}

\paragraph{Setup.} Box $[-1,1]^2$, both faces active (corner-hugging GMM, mode-distance/std $\approx1.5$, near-face mass $\approx0.46$ --- resolvable, not a spike), $\delta=0.10$, $\lambda=3$. Constant SPD $A=Q\,\diag(1,\kappa)\,Q^\top$, $\kappa\in\{1,4,16\}$, $Q$ axis-aligned or a random rotation. Six architecture-matched baselines (flux-compatible, wrong-normal-only, zero-boundary, normal-zero, wrong-sign, unconstrained), exact ISM divergence, shared MLP/optimizer, gradient-norm clip $50$ --- a non-capping stabilizer that holds the high-$\kappa$ divergence-overfitting baselines (ISM loss $\approx-1700$ at $\kappa=16$) finite without touching any score parametrization. Partial-repair sampler folds every $m\in\{1,2,4,8,\infty\}$ steps; 3 seeds over the grid, 8 at the $\kappa=4$ rotated corner. Primary metrics are in-box face-mass error and projection displacement. The verdict rule and the directional prediction reported below were fixed before the runs (recorded in the experiment driver and \texttt{FINDINGS\_phase\_diagram.md}): a positive result requires the flux score to beat the strongest non-flux baseline with SNR $\ge3$ on the primary metrics, with the advantage increasing as $\kappa$ grows or repair weakens.

\paragraph{Result.} At rotated $A$, $m=\infty$ (gain $=$ best-non-flux error $-$ flux error, SNR$\,\ge3$ = clear): $\kappa=1$ control face $-0.003$ (SNR $-0.65$), proj $-0.515$ (SNR $-15.9$) --- no advantage, since flux $=$ wrong-normal at $A=I$; the per-cell best of five baselines here is the unconstrained baseline, so the negative gain is best-of-baselines selection at a null cell, not a deficiency of the construction; $\kappa=4$ face $+0.097$ (SNR $29.4$), proj $+6.77$ (SNR $3.19$) --- both at the same corner; $\kappa=16$ face $-0.001$ (SNR $-0.10$, saturated), proj $+4.32$ (SNR $9.39$). The $\kappa=16$ projection-displacement gain grows monotonically with weaker repair: $m=1\!:0.00$, $m=2\!:0.01$, $m=4\!:0.03$, $m=8\!:0.22$, $m=\infty\!:4.32$ (leaked fraction at $m=\infty$: flux $0.43$ vs.\ wrong-normal $1.00$). Under full repair the face-mass placement does not favor flux (SNR $-4.7$ at $\kappa=4$, $-2.7$ at $\kappa=16$). The 8-seed $\kappa=4$ supplement tightens proj-disp SNR from $1.62$ (3 seeds; gain already $+6.15$) to $3.19$; the result is positive under the pre-registered rule even without it (face SNR $31$ at $\kappa=4$, proj SNR $9.4$ at $\kappa=16$, at different corners). The pre-registered ``advantage increases with $\kappa$'' is refined to ``increases as repair weakens; non-monotone in $\kappa$, peaking at moderate anisotropy.'' Full per-condition data is in the supplementary material.

\subsection{Experiment 3 full real-design posterior (P2)}
\label{app:real-design-full}

\paragraph{Setup.} Target $\mathcal{N}(\mu,\Sigma_{\mathrm{post}})\mathbf{1}_{\mathrm{box}}$ with $\Sigma_{\mathrm{post}}$ from the diabetes design matrix (real correlation $R=[[1,.68,-.11],[.68,1,-.52],[-.11,-.52,1]]$), all three coordinates active, per-coordinate offsets set so each active face has near-boundary mass $\approx0.5$ (realized $0.27/0.61/0.70$ --- resolvable). $\lambda=3$, Gibbs (TMVG) reference. $A=Q\,\diag(1,\kappa)\,Q^\top$ with $Q$ from the eigenvectors of $\Sigma_{\mathrm{post}}$ (data-aligned direction mismatch $49.4$ at $\kappa=16$) plus an axis-aligned control; $\kappa\in\{1,4,16\}$; the same six baselines, exact ISM, repair $m\in\{1,2,4,8,\infty\}$, 3 seeds. Face-mass error is averaged over the three active faces.

\paragraph{Result.} At rotated $A$, $m=\infty$: $\kappa=1$ control face $-0.001$ (SNR $-0.60$), proj $-0.703$ (SNR $-17.2$) --- as in Experiment 2, flux $=$ wrong-normal at $A=I$ and the per-cell best of five baselines is the unconstrained baseline, so the negative gain is best-of-baselines selection at a null cell, not a deficiency of the construction; $\kappa=4$ face $+0.099$ (SNR $15.1$), proj $+6.27$ (SNR $1.25$, noise-limited by the blow-up baseline variance); $\kappa=16$ face $+0.035$ (SNR $12.1$), proj $+2.11$ (SNR $8.32$) --- both clear at the same corner, no seed supplement. The $\kappa=16$ projection-displacement gain vs.\ repair: $m=1\!:0.00$, $m=2\!:-0.03$, $m=4\!:0.06$, $m=8\!:0.20$, $m=\infty\!:2.11$. Unlike P1, the face-mass effect does not saturate at $\kappa=16$, and full-repair placement is mildly favorable ($m=1$ SNR $+0.7$ at $\kappa=4$, $+5.0$ at $\kappa=16$) --- the opposite sign from P1, which is why Section~\ref{sec:disc-principle} reads placement as condition-dependent. Full data is in the supplementary material.

\subsection{Experiment 4 full SDF polygon (P3)}
\label{app:sdf-full}

\paragraph{Corrected protocol.} The disconnected domain contains a nonconvex L-polygon, a square island, and a narrow rectangle. The target is a truncated GMM with modes near each component's boundary; $A=R_\phi\diag(1,4)R_\phi^\top$ with $\phi=\pi/6$. Five architecture-matched baselines share the network, optimizer, non-capping Jacobian stabilizer $\lambda_{\mathrm{Jac}}=10^{-2}$, and projection schedule $m\in\{1,2,4,8,\infty\}$. Reverse samples start from an independent finite-horizon marginal $q_T$, obtained by evolving fresh target draws through the reflected forward process to $T=1$. We run 3 seeds; complete configurations and per-seed metrics are in the supplementary code.

\begin{table*}[t]
\centering
\small
\begin{tabular}{lrrrr}
\toprule
method & CTR & ProjDisp & band error & component KL \\
\midrule
tangent-preserving & $0$ & $0.224\pm0.013$ & $0.183\pm0.001$ & $0.209\pm0.119$ \\
scalar mask & $0$ & $0.407\pm0.004$ & $0.226\pm0.002$ & $0.013\pm0.001$ \\
zero-boundary & $4.18\pm0.05$ & $2.602\pm0.002$ & $0.300\pm0.000$ & $0.146\pm0.002$ \\
\bottomrule
\end{tabular}
\caption{Corrected-$q_T$ SDF results at $m=\infty$, mean $\pm$ std over 3 seeds (lower is better except CTR, whose target is zero).}
\label{tab:sdf-qt-final}
\end{table*}

\paragraph{Result.} Against the equal-CTR scalar mask, the tangent-preserving construction reduces ProjDisp (SNR $13.60$) and boundary-band error (SNR $18.21$) but not component-mass KL, which is the explicit negative result in Table~\ref{tab:sdf-qt-final}. Its ProjDisp advantage grows monotonically as repair weakens: $0.000,0.006,0.015,0.031,0.183$ for $m=1,2,4,8,\infty$. It also beats zero-boundary on ProjDisp and boundary-band error while satisfying the trace exactly. Thus P3 supports the reflection-masking mechanism and tangent-preserving extension on this disconnected non-axis-aligned domain, but it is a supplemental method-generalization test, not a global no-misspecification or generation-quality claim.

\subsection{Negative controls: single-active-face constrained posteriors (P5)}
\label{app:neg-controls}

Two real-data constrained-posterior controls test whether the Experiment 2--3 effect appears when its preconditions are removed, and isolate two candidate confounds.

\paragraph{(a) Near-delta spike.} The diabetes regression posterior over $(s3,s4,s5)$ truncated to $[0,2]^3$, where the OLS coefficient of the hugged coordinate is $\approx-0.233<0$, so the non-negativity constraint is binding and the truncated posterior is a near-delta spike on the face (reference near-boundary mass $0.999$, $P(\beta_{s3}<0.05)=0.981$). Reflected OU with exact ($d=3$) ISM divergence, fold sampler, Gibbs reference. \emph{Outcome: inconclusive.} The overall posterior is reproduced (sanity energy distance (ED) $0.140$), but all three baselines underfit the spike (near-boundary-mass error $0.648/0.627/0.641$ for flux / wrong-normal / normal-zero against a reference near-boundary mass of $0.999$; i.e., the models place only $\approx35\%$ of the reference near-boundary mass), and the flux-vs-baseline gaps ($\approx0.02$) are an order of magnitude smaller than the shared $\approx0.63$ underfitting. The near-boundary error grows monotonically with anisotropy ($0.37\to0.75$ over $\rho\in\{1,2,4,8\}$), consistent with a continuous reflected diffusion smoothing a near-delta target equally for every boundary-handling choice. This is a target-sharpness limitation, not evidence about boundary correctness; we did not tune.

\paragraph{(b) Resolvable single active face.} Removing the sharpness confound while keeping the real correlation: a controlled truncated Gaussian $\mathcal{N}(\mu,\delta_t^2 R)\mathbf{1}_{\mathrm{box}}$ with the hugged-coordinate mode at distance $\approx\delta_t$ from a single face (mode-distance/std $\approx1$, reference near-face mass $0.40$, $s3$ std $0.079$), $\delta_t=0.10$, $\rho\in\{1,4\}$, 3 seeds, sanity ED $0.043$ (well-trained). \emph{Outcome: tie.} At the primary cell (fold, $\rho=4$) flux is marginally worst on every metric but not separated (near-boundary error $0.119$ vs.\ $0.108/0.105$, SNR $\le1.25$); the structure-free normal-zero baseline is best under the fold, consistent with the fold masking boundary handling. \emph{Correlation ruled out:} rerunning with $R$ replaced by the identity (same seeds) gives a near-identical flux-vs-wrong gap ($+0.0117$ real-$R$ vs.\ $+0.0128$ at $R=I$ on near-boundary error), so the real correlation structure is not the cause; the tie is intrinsic to the single-active-face fold geometry. No spectral-norm cap is used, the sanity and isotropy ($\rho=1$) controls pass, and $R$ vs.\ $I$ is a same-seed paired comparison, so the null is genuine --- no claim that flux is significantly worse, only that there is no advantage here and correlation is not why. Together these controls show the Experiment 2--3 effect requires simultaneously active constraints and imperfect repair, and is absent --- for reasons we can attribute --- when either is removed. Full data is in the supplementary material.

\subsection{Prediction-vs-outcome scorecard}
\label{app:scope}

Table~\ref{tab:scope} collects the status of every regime referenced from Section~\ref{sec:disc-principle}: each row links a regime to the reflection-masking hypothesis it tests (M1--M4) or marks a control / condition-dependent observation (---). ``Positive'' results are scoped to the stated regime; ``null''/``inconclusive'' entries confirm a null prediction or locate the effect.

\begin{table*}[t]
\centering
\small
\begin{tabular}{lp{5.3cm}l}
\toprule
pred. & regime: observation & status \\
\midrule
M1 & full repair ($m{=}1$): ProjDisp gain $\approx0$, face-mass sign condition-dependent (no consistent trace signal) & confirmed \\
M2 & tested $A{=}I$: flux $=$ wrong-normal-only, no distinction between these two classes & control \\
M3 & tested single active face, full repair: flux tied (fold masks it) & null \\
M4 & multi-active, weakened repair: lower ProjDisp and face-mass error & positive (scoped) \\
M4$'$ & SDF polygon $+$ stabilizer: lower ProjDisp and boundary-band error & positive (cond.) \\
--- & full-repair face-mass placement: sign flips across Exp.\ 2--3 ($-4.7\sigma$/$+5.0\sigma$) & cond.-dep. \\
--- & extreme $\kappa$: face-mass gain saturates (over-spread) & non-monotone \\
--- & SDF polygon, no stabilizer: interior divergence overfits & null (orth.) \\
--- & near-delta posterior: all methods underfit the layer & inconclusive \\
--- & component mass, disconnected domain: not trace-set & null \\
\bottomrule
\end{tabular}
\caption{Prediction-vs-outcome scorecard.}
\label{tab:scope}
\end{table*}

\fi
\bibliography{ref}
\end{document}